\newcommand{\mS}{\mathcal S}
\newcommand{\mA}{\mathcal A}
\newcommand{\gap}{\mathrm{gap}}
\newcommand{\clip}{\mathrm{clip}}
\newcommand{\conc}{\mathrm{conc}}
\newtheorem{Theorem}{Theorem}
\newtheorem{Lemma}{Lemma}
\newtheorem{Assumption}{Assumption}
\newtheorem{Proposition}{Proposition}
\title{Gap-Dependent Bounds for Two-Player Markov Games}
\author{
Zehao Dou\\Yale University\\\texttt{zehao.dou@yale.edu}\\
\and Zhuoran Yang\\ Princeton University\\ \texttt{zy6@princeton.edu}\\
\and Zhaoran Wang\\ Northwestern University\\ \texttt{zhaoranwang@gmail.com}\\
\and Simon S.Du \\ University of Washington\\\texttt{ssdu@cs.washington.edu}
}
\date{}
\begin{document}
\maketitle
\allowdisplaybreaks[4]

\begin{abstract}
% This paper shows a Nash Q-learning algorithm can achieve a gap-dependent logarithmic cumulative regret bounds in two-player turn-based general-sum stochastic Markov games (2-TBSG) in both tabular and linear Markov Decision Process settings.
% To our knowledge, these are the first set of logarithmic regret bounds for stochastic games.
% We further extend our analyses to the discounted settings and obtain similar results.

As one of the most popular methods in the field of reinforcement learning, Q-learning has received increasing attention. Recently, there have been more theoretical works on the regret bound of algorithms that belong to the Q-learning class in different settings. In this paper, we analyze the cumulative regret when conducting Nash Q-learning algorithm on 2-player turn-based stochastic Markov games (2-TBSG), and propose the very first gap dependent logarithmic upper bounds in the episodic tabular setting. This bound matches the theoretical lower bound only up to a logarithmic term. Furthermore, we extend the conclusion to the discounted game setting with infinite horizon and propose a similar gap dependent logarithmic regret bound. Also, under the linear MDP assumption, we obtain another logarithmic regret for 2-TBSG, in both centralized and independent settings.     
\end{abstract}
\newpage
\tableofcontents
\newpage
\section{Introduction}
Recently, designing an effective and efficient algorithm to obtain a near-optimal strategy in sequential decision-making tasks has attracted more and more interest in the field of modern reinforcement learning (RL) \cite{sutton1988reinforcement}. By estimating the optimal state-action value function (a.k.a Q-function), Q-learning method \cite{watkins1992technical} is one of the most popular classes of algorithms. In each iteration of Q-learning algorithms, the agent greedily chooses the action with the largest Q value and achieve his reward and the next state by interacting with the underlying RL environment. At the same time, the algorithm keeps updating the Q-values by using Bellman Equation. In comparison, the model-based methods attempt to reveal the structure of the environment, which lead to more memory and worse time efficiency. That's why Q-learning, a typical model-free method, is playing an important role in a wide range of RL problems \cite{mnih2013playing, mnih2015human}.

In this paper, we study a more complicated scenario, two-player turn-based stochastic Markov game, which is a special case of multi-agent reinforcement learning (MARL). In this setting, two players known as the max-player and the min-player interact with each other one by one and optimize their individual rewards. The max-player's goal is to maximize the cumulative reward while the min-player attempts to minimize it. In order to measure the quality of the two players' policies, we study the \textbf{total regret}, where the players learn a policy tuple $(\pi_k, \mu_k)$ for a sequence of episodes $k=1,2,\ldots,K$, and suffer a total regret, which is the total sub-optimality of the policies $\pi_1,\pi_2,\ldots,\pi_K$. From the regret minimization perspective, many related works \cite{kearns1998finite, sm2003on, azar2013minimax, jin2018is, dong2019q, liu2020regret, bai2020near} have provided a $\sqrt{K}$-type upper bound of total regret in various settings where $K$ is the number of episodes. Although these $\sqrt{K}$-type upper bounds are easy to understand and match the lower bound, they paint an overly pessimistic worst-case scenario on the Markov decision processes. 

Under this circumstance, recent works \cite{yang2020q,ok2018exploration,he2020logarithmic, simchowitz2019non,xu2021fine} limit the MDP with certain structures and propose much tighter upper bound or provide new perspectives. One line of works assume the existence of minimal sub-optimality gap, and establish the $C\log K$-type of total regret upper bound, where $C$ is an instance-dependent constant associated with the minimal sub-optimality gap, named $\gap_{\min}$:
\[\gap_{\min}:=\min\{\gap(s,a) = V^*(s)-Q^*(s,a)>0\}.\]
Here, $V^*$ and $Q^*$ denote the value function and Q-function for an optimal policy $\pi^*$. 

Another line of works \cite{xie2020learning, jin2019provably, he2020logarithmic} assume a certain structure of reward function and probability transition kernel, such as the linear function approximation. For instance, \cite{jin2019provably} studies the episodic MDPs with linear MDP assumptions, which means that both the transition probability function and reward function can be represented as a linear function of a given feature mapping. \cite{he2020logarithmic} combines the two ideas and provides a gap-dependent logarithmic regret bound with linear function approximation. 

However, all the provable logarithmic regret bounds are under the single agent setting where the sole player attempts to achieve the highest cumulative rewards by interacting with the underlying environment. For the two-player settings or multi-agent settings, the gap-dependent logarithmic regret bound still remains absent because the sub-optimality gaps are not non-negative and the concept of minimal sub-optimality gap no longer makes sense in these settings. In this paper, we overcome these difficulties and propose a gap-dependent logarithmic cumulative regret bound of 2-TBSG in all the tabular, discounted and linear function expression settings for the first time. Next, we introduce the three main contributions of this paper, which are listed below.

\begin{itemize}
\item In two-player episodic turn-based general sum stochastic games (2-TBSG) with finite horizon \cite{jia2019feature, shapley1953stochastic}, we propose a new concept named minimal positive sub-optimality gap $\gap^+_{\min}$. Based on that, we provide a gap-dependent logarithmic total regret bound $\mathcal O\left(\frac{H^6SA\log(SAT)}{\gap^+_{\min}}\right)$ when using the Nash Q-learning algorithm.

\item Based on the result above, we further extend it to the discounted 2-TBSG with infinite horizon, and obtain a total regret upper bound $\mathcal O\left(\frac{SA}{\gap^+_{\min}(1-\gamma)^5\log(1/\gamma)}\cdot\log\frac{SAT}{\gap^+_{\min}(1-\gamma)}\right),$ which is gap dependent and logarithmic on $T$. 

\item We analyze the finite-horizon 2-TBSG with linear function expression. Under the linear MDP assumption, we propose the 2-TBSG version of LSVI-UCB algorithm \cite{jin2019provably} and provide a provable 
$\tilde{\mathcal O}\left(\frac{d^3H^5\log(16dK^3H^3)}{\gap_{\min}^+}\right)$
total regret upper bound which is also gap dependent and logarithmic on $K$, in both centralized and independent settings.

\end{itemize}

Technically, we are using a new set of algorithms when solving the pure Nash Equilibrium of 2-TBSG, which provably exists, and it makes our proof novel and more difficult. 
Compared with \cite{xie2020learning} that establishes $\sqrt{T}$-regret, to achieve the gap-dependent  logarithmic regret, we utilize a different regret decomposition method links the regret to a sum of gap terms, which are further bounded via a peeling argument. See Sections \ref{sec:split_regret} and \ref{sec:peeling} for details. Meanwhile, compared with \cite{he2020logarithmic, du2019provably}, we face new difficulties since the game setting we are analyzing has
a max-player as well as a min-player. Therefore, we need to propose a new technique to control the influence of the opponent, which results in developing both upper confidence bound (UCB) and lower confidence bound (LCB) at the same time. %Since the goal of our algorithms is to solve the Nash Equilibrium instead of to find an optimal policy, the technical difficulty we are facing is much larger. 
To our best knowledge, our result establish the first logarithmic regret bounds for zero-sum Markov games under both the tabular and linear   settings.  
\section{Related Works}
\paragraph{Tabular and Infinite-horizon MDP}  There is a long list of results focusing on the regret or sample complexity on tabular episodic MDPs and discounted MDP with infinite horizon. They can be recognized as model-free methods or model-based methods, which are two different types of methodology. Model-based methods \cite{jaksch2010near, dann2017unifying, osband2016generalization} explicitly estimate the transition probability function while the model-free methods \cite{jin2018is, strehl2006pac} do not. One line of works \cite{sidford2018variance, lattimore2012pac, ghavamzadeh2011speedy, wainwright2019variance, azar2012on, koenig1993complexity} assume the existence of a simulator (also called a generative model) where the agent can freely query any state-action pair to the underlying environment and return the reward as well as the next state. In the episodic setting without simulators, \cite{jin2018is} achieves a $\tilde{\mathcal O}(\sqrt{H^3SAT})$ regret bound for a model-free algorithm and \cite{azar2017minimax} proposes a UCB-VI algorithm with Bernstein style bonus with achieves a $\tilde{\mathcal O}(\sqrt{H^2SAT})$ regret bound for a model-based algorithm. Both of the two upper bounds nearly attain the minimax lower bound $\Omega(\sqrt{H^2SAT})$ \cite{jaksch2010near,jin2018is, osband2016on}. Recently, \cite{zhang2020almost, jin2018is} provide a $\sqrt{T}$-type regret bound for Q-learning algorithms (which is a widely used model-free algorithm) where $T$ is the number of episodes. 

Another line of works focus on providing $\log T$-type regret bound based on instance-dependent quantities. \cite{ok2018exploration} shows us that $\log T$ is unavoidable as a lower bound. \cite{tewari2007optimistic} proposes an OLP algorithm for average-reward MDP and achieves an asymptotic logarithmic regret $\mathcal O(C\log T)$ where the constant $C$ is instance-dependent. \cite{jaksch2010near} provides a UCRL2 algorithm and provides a non-asymptotic regret bound $\mathcal O(D^2S^2A\log T/\gap_{\min})$ where $D$ is the diameter and $\gap_{\min}$ is the minimal sub-optimality gap. A recent work \cite{wang2019optimism} proves a $\mathcal O(SAH^6\log(SAT)/\gap_{min})$ regret bound for the model-free optimistic Q-learning algorithm. 

\paragraph{Linear Function Approximation} A recent line of works \cite{jin2018is, wang2019optimism, yang2020reinforcement, jia2019feature, zanette2020provably, du2019provably, du2020agnostic, zhou2020provably, weisz2020exponential} solve MDP with linear function approximations and propose a regret bound. After parameterizing the Q-function with feature mapping under the linear MDP assumption, \cite{jin2019provably} proposes LSVI-UCB algorithm with $\tilde{\mathcal O}(\sqrt{d^3H^3T})$ total regret bound. \cite{zanette2020provably} improves the bound to $\tilde{\mathcal O}(\sqrt{d^2H^2T})$ by introducing a global planning oracle. Later, a new linear mixed model assumption is proposed and a number of works \cite{jia2019feature, jia2020model} introduce the UCLR-VTR algorithm to solve MDP under the new assumption, and provide $\tilde{\mathcal O}(\sqrt{d^2H^3T})$ regret bound. In the discounted setting, \cite{zhou2020provably} provides a $\tilde{\mathcal O}(d\sqrt{T}/(1-\gamma)^2)$ upper bound where $\gamma$ is the discount ratio.

\section{Preliminaries and Notations}
In this section, we introduce some important concepts, notations and background knowledge.
% \textbf{Main Question: When conducting Nash Q-learning algorithm on two-player turn-based stochastic games (2-TBSG), can we obtain a gap dependent logarithmic upper bound for the cumulative regret?} 
\paragraph{Setting of Two-player Turn-based Stochastic Games} In two-player turn-based games (2-TBSG), only one player takes his action at each step. Denote the two players as $P_1, P_2$, which are the max-player and the min-player respectively. We partition the whole action space as $\mA=\mA_1\cup\mA_2$, where $\mA_i$ is the state space of player $P_i$. Since the stochastic game is episodic under our setting, we denote $2H$ as the number of steps in one episode. At each step $h\in[2H]$, when $h$ is an odd number, it's the max-player $P_1$'s turn to observe the current state $s_h$ and take an action $a_h\in\mA_1$, and then we receive the reward $r_h(s_h,a_h)$. Similarly, when $h$ is an even number, the min-player $P_2$ observes the current state $s_h$ and takes the action $a_h\in\mA_2$, and then they receive the reward $r_h(s_h,a_h)$. After taking the action, the system makes transition to a new state $s_{h+1}\sim \mathbb{P}_h(\cdot|s_h, a_h)$. For each action $a\in\mA$, denote $I(a)\in\{1,2\}$ as it indicates of the current player to play, so that $a\in \mA_{I(a)}$.

\paragraph{General Notations} We denote the tabular episodic Markov Game as $\mathrm{MG}(2H,\mS,\mA_1,\mA_2,\mathbb{P},r)$, where $2H$ is the number of steps in each episode, $\mS$ is the set of states, and $(\mA_1,\mA_2)$ are the action sets of the max-player and the min-player respectively. Since we analyze the two-player turned base games in this paper, the odd number steps are the max-player's turn and the even number steps are the min-player's turn. $\mathbb{P}=\{\mathbb{P}_h\}_{h\in[2H]}$ is the collection of transition matrices and $\mathbb{P}_{h}(\cdot|s,a)$ outputs the probability distribution over states when $a$ is the action taken at step $h$ after state $s$. $r_h: \mS\times\mA\rightarrow [0,1]$ is a deterministic reward function at step $h$.

\paragraph{Markov policy, Q-function and Value function} We denote $(\pi,\mu)$ as the Markov policy of the max-player and the min-player respectively. Max-player's policy $\pi$ is a collection of $H$ mappings $\{\pi_h:\mS\rightarrow \mA_1\}_{h=1,3,\cdots,2H-1}$, which maps the current state to the action. Similarly, min-player's policy $\mu$ is a collection of $H$ mappings $\{\mu_h:\mS\rightarrow \mA_2\}_{h=2,4,\cdots,2H}$. We denote $\pi_h(s)$ and $\mu_h(s)$ to represent the following action to take under Markov policy $\pi,\mu$. Next, we define the well-known value functions and Q-functions. we denote $V_h^{\pi,\mu}:\mS\rightarrow \mathbb{R}$ as the value function at step $h$ under Markov policy $\pi,\mu$ which calculates the expected cumulative rewards:
\[V_h^{\pi,\mu}(s):=\mathbb{E}_{\pi,\mu}\left[\sum_{h'=h}^{2H}r_{h'}(s_{h'},a_{h'})~:~s_h=s\right].\]
We also define $Q_h^{\pi,\mu}:\mS\times\mA$ as the Q-function at step $h$ so that $Q_{h}^{\pi,\mu}(s,a)$ calculates the cumulative rewards under policy $(\pi,\mu)$, starting from $(s,a)$ at step $h$:
\[Q_h^{\pi,\mu}(s,a):=\mathbb{E}_{\pi,\mu}\left[\sum_{h'=h}^{2H}r_{h'}(s_{h'},a_{h'})~:~s_h=s, a_h=a\right].\]
It's worth mentioned that the Markov policy of these two players are both deterministic, which means given a current state, their policies point to a specific action, instead of a probability distribution over all actions. That is because under the two-player turn based game setting, the Nash Equilibria is simply a pure strategy. 

For simplicity, we introduce the commonly-used notation of operator $\mathbb{P}_h$ which is $[\mathbb{P}_{h}(V)](s,a):=\mathbb{E}_{s'\sim\mathbb{P}_h(\cdot|s,a)}V(s')$ for any value function $V$. By this definition, we have the Bellman equation as follows:
\[Q_h^{\pi,\mu}(s,a)=(r_h+\mathbb{P}_h V_{h+1}^{\pi,\mu})(s,a)\]
holds for all $(s,a,h)\in\mS\times\mA\times [2H]$. We define $V_{2H+1}^{\pi,\mu}(s)=0$ for all $s\in\mS_{2H+1}$.

\paragraph{Best response and Nash equilibria}
Given the Markov policy $\pi$ of the max-player $P_1$, their exists a best response for the min-player, which we denote as $\mu^{\dagger}(\pi)$. It satisfies:
\[V_{h}^{\pi,\mu^{\dagger}(\pi)}(s)=\inf_{\mu}V_{h}^{\pi,\mu}(s)\]
holds for any $(s,h)\in\mS\times [2H]$. For brevity, we denote $V_{h}^{\pi,\dagger}:=V_{h}^{\pi,\mu^{\dagger}(\pi)}$. Similarly, we can also denote $\pi^{\dagger}(\mu), V_{h}^{\dagger,\mu}$ as the best response for the max-player and the corresponding value function. Furthermore, it is well known that there exists Markov policies $\pi^*,\mu^*$ which are the best responses of each other, and they satisfy:
\[V_h^{\pi^*,\dagger}(s)=\sup_{\pi}V_{h}^{\pi,\dagger}(s),~~~~~V_{h}^{\dagger,\mu^*}(s)=\inf_{\mu}V_{h}^{\dagger,\mu}(s)\]
holds for $\forall (s,h)\in\mS\times[2H]$.

\paragraph{Discounted 2-TBSG with Infinite Horizon} In such a game, the max-player $P_1$ and the min-player $P_2$ take turns to take action. However, different from standard 2-TBSG, there is only one episode in this game and there are infinite steps in this episode. 

\paragraph{2-TBSG with Linear Function Expression} 
A Markov Game $\mathrm{MG}(2H,\mS,\mA,\mathbb{P},r)$ is defined as linear when the probability transition kernels and the reward functions are linear with respect to a given feature map $\phi:\mS\times\mA\rightarrow \mathbb{R}^d$. Specifically, for each $h\in[2H]$, there exists an unknown vector $\mu_h\in\mathbb{R}^d$ and unknown measures $\theta_h=(\theta_h^{(1)},\theta_h^{(2)}, \ldots,\theta_h^{(d)})$ whose degree of freedom is $|S|\times d$, such that for $\forall (s,a)\in\mS\times\mA$:
\[\mathbb{P}_h(s'|s,a)=\langle\phi(s,a),\theta_h(s')\rangle~\text{and}~r(s,a)=\langle\phi(s,a), \mu_h\rangle.\]
For the complete version of the definition, we put it into Assumption \ref{assumption-linear} below.

\paragraph{Mathematical Notations} Let $f(n), g(n)$ be two positive series, we write $f(n)=\mathcal O(g(n))$ or $f(n)\lesssim g(n)$ if there exists a positive constant $C$ such that $f(n)\leqslant C\cdot g(n)$ for all $n$ larger than some $n_0\in\mathbb{N}$. Similarly, we write $f(n)=\Omega(g(n))$ or $f(n)\gtrsim g(n)$ if there exists a positive constant $C$ such that $f(n)\geqslant C\cdot g(n)$ for all $n$ larger than some $n_0\in\mathbb{N}$. If these two conditions hold simultaneously, we write $f(n) \asymp g(n)$ or $f(n)=\Theta(g(n))$. If logarithmic terms should be ignored, then we use the notations $\widetilde{\mathcal O}, \widetilde{\Omega}, \widetilde{\Theta}$. 

\begin{algorithm}[!t]
\caption{Optimistic Nash Q-learning on Two-player Turn-based Stochastic Games}
\hspace*{0.02in}{\bf Initialize:}
Let $\overline{Q}_h(s,a)\leftarrow 2H,~ \underline{Q}_h(s,a)\leftarrow 0$, and $N_h(s,a)\leftarrow 0$ for all $(s,a)\in\mS\times \mA$.\\
\hspace*{0.02in}{\bf Define:}
$\alpha_t=\frac{2H+1}{2H+t},~\iota\leftarrow \log(SAT^2)$.\\
\begin{algorithmic}[1]
\FOR{episode $k\in[K]$}
\STATE{observe the initial state $s_1$}
\FOR{step $h\in[2H]$}
\STATE Take action $a_h\leftarrow\arg\max_{a'\in\mA}\overline{Q}_{h}(s_h,a')$ if $h$ is an odd number, (i.e. $I(a_h)=1$),else take action $a_h\leftarrow\arg\min_{a'\in\mA}\underline{Q}_{h}(s_h,a')$. After that, observe the next state $s_{h+1}$.
\STATE $t=N_h(s_h,a_h)\leftarrow N_h(s_h,a_h)+1$.
\STATE $\beta_t\leftarrow c\sqrt{(2H)^3\iota/t}$.
\STATE $\overline Q_h(s_h,a_h)\leftarrow (1-\alpha_t)\cdot\overline Q_h(s_h,a_h)+\alpha_t\cdot [r_h(s_h,a_h)+\overline{V}_{h+1}(s_{h+1})+\beta_t]$.
\STATE $\underline Q_h(s_h,a_h)\leftarrow (1-\alpha_t)\cdot\underline Q_h(s_h,a_h)+\alpha_t\cdot[r_h(s_h,a_h)+\underline{V}_{h+1}(s_{h+1})-\beta_t]$.
\STATE $\overline{V}_h(s_h)\leftarrow \overline{Q}_h(s_h,a_h),~\underline{V}_h(s_h)\leftarrow \underline Q_h(s_h,a_h)$. 
\ENDFOR
\ENDFOR
\end{algorithmic}
\label{alg-1}
\end{algorithm}

\section{Regret Bound of Tabular 2-TBSG}
In 2-TBSG, we use a Nash Q-learning method: Algorithm \ref{alg-1} to obtain a policy tuple sequence $(\pi^k,\mu^k)$ to approximate a Nash equilibrium. In this algorithm, $\overline{Q}_h(s,a), \underline{Q}_h(s,a)$ are the upper and lower estimation of $Q_h^*(s,a)$ respectively. The max-player chooses action based on the upper estimation $\overline{Q}$ while the min-player chooses action based on the lower estimation $\underline{Q}$. By using the Upper Confidential Bound (UCB) technique, we can prove that
\[\underline{Q}_h(s,a)\leqslant Q_h^*(s,a)\leqslant \overline{Q}_h(s,a)\]
holds with high probability. The total regret of this algorithm is defined as:
\[\mathrm{Regret}(K)=\sum_{k=1}^{K}\left|\left(V_1^*-V_1^{\pi^k,\mu^k}\right)(s_1^k)\right|.\]
Here, $V_1^*$ is the abbreviation of $V_1^{\pi^*,\mu^*}$. In this section, we will focus on upper bounding the expected regret $\mathbb{E}[\mathrm{Regret}(K)]$. Notice that, unlike our related papers, there is an absolute value in our definition above, which is because under the two-player game setting, $\left(V_1^*-V_1^{\pi^k,\mu^k}\right)(s_1^k)$ can be either positive or negative. 
\begin{Theorem}[Main Theorem 1: Logarithmic Regret Bound of Q-learning for Episodic 2-TBSG]
\label{thm-main-1}
After using Algorithm \ref{alg-1}, the expected total regret for episodic two-player turn-based stochastic game (2-TBSG) can be upper bounded by:
\[\mathbb{E}[\mathrm{Regret}(K)]\leqslant \mathcal O\left(\frac{H^6SA\log(SAT)}{\gap^+_{\min}}\right).\]
\end{Theorem}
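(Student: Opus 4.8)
The plan is to run the standard three-stage recipe for gap-dependent regret — optimistic/pessimistic sandwiching, a regret decomposition into surplus terms, and a peeling argument over visitation counts — but carried out with \emph{two} confidence bounds at once so that suboptimal moves of \emph{either} player are charged against the gap. First I would fix a good event $\mathcal E$ on which all the weighted Azuma--Hoeffding deviations $\bigl|\sum_{i=1}^{t}\alpha_t^i\bigl([\mathbb P_h V^*_{h+1}](s_h^k,a_h^k)-V^*_{h+1}(s_{h+1}^{k_i})\bigr)\bigr|$ (and their $\overline V,\underline V$ analogues) are dominated by $\beta_t=c\sqrt{(2H)^3\iota/t}$ uniformly over all $(s,a,h,k)$ — the bonus is calibrated exactly so that this holds, using $\sum_i(\alpha_t^i)^2\lesssim H/t$ — and I would choose $c$ so that $\mathbb P(\mathcal E^c)$ contributes only $O(1)$ to $\mathbb E[\mathrm{Regret}(K)]$. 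On $\mathcal E$, a backward induction on $h$ through the update rules of Algorithm~\ref{alg-1} gives not merely $\underline Q_h^k(s,a)\le Q_h^*(s,a)\le\overline Q_h^k(s,a)$ but the two-sided statement $\underline V_h^k(s)\le V_h^{\pi^k,\dagger}(s)\le V_h^*(s)\le V_h^{\dagger,\mu^k}(s)\le\overline V_h^k(s)$, where the two outer inequalities use that $\pi^k,\mu^k$ are greedy for $\overline Q^k,\underline Q^k$ respectively together with the fact that a best response at a min-step takes the minimizing action of $Q^{\pi^k,\dagger}$ (and symmetrically). Consequently $\bigl|V_1^*-V_1^{\pi^k,\mu^k}\bigr|(s_1^k)\le\bigl(V_1^{\dagger,\mu^k}-V_1^{\pi^k,\dagger}\bigr)(s_1^k)\le\bigl(\overline V_1^k-\underline V_1^k\bigr)(s_1^k)=:\delta_1^k$, which kills the absolute value and reduces everything to bounding $\sum_k\delta_1^k$ with $\delta_h^k\ge 0$.

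Next (the regret-splitting step, Section~\ref{sec:split_regret}), since $\overline V_h^k(s_h^k)=\overline Q_h^k(s_h^k,a_h^k)$ and $\underline V_h^k(s_h^k)=\underline Q_h^k(s_h^k,a_h^k)$ for the played action $a_h^k$, unrolling the two $Q$-updates gives, on $\mathcal E$, $\delta_h^k\le\sum_{i=1}^{t}\alpha_t^i\,\delta_{h+1}^{k_i}+4\sum_{i=1}^{t}\alpha_t^i\beta_i+2H\cdot\mathbf{1}[t=0]$ with $t=N_h^{k-1}(s_h^k,a_h^k)$. The ingredient absent from the one-player analysis is a gap lower bound on the surplus: at an odd step, since $a_h^k$ maximizes $\overline Q_h^k(s_h^k,\cdot)$ and $a^*$ denotes an optimal max-action, $\overline Q_h^k(s_h^k,a_h^k)\ge\overline Q_h^k(s_h^k,a^*)\ge Q_h^*(s_h^k,a^*)=V_h^*(s_h^k)\ge Q_h^*(s_h^k,a_h^k)+\gap_h(s_h^k,a_h^k)$, so the over-estimation $\overline Q_h^k-Q_h^*$ at a gappy max-move is $\ge\gap_h(s_h^k,a_h^k)\ge\gap^+_{\min}$; symmetrically, at an even step the under-estimation $Q_h^*-\underline Q_h^k$ at a gappy min-move is $\ge\gap^+_{\min}$. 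Feeding the sandwich $\underline Q^k\le Q^*\le\overline Q^k$ back into the recursion, this lets me replace each $\delta_h^k$ appearing in the expansion of $\sum_k\delta_1^k$ by a \emph{clipped} surplus $\clip\bigl[\,\mathrm{(surplus)}_h^k\,\big|\,\gap^+_{\min}/\mathrm{poly}(H)\,\bigr]$ plus lower-order terms: informally, every unit of regret is traced along the sampled trajectory to a step where one of the two players made a gappy move, and at such a step the relevant surplus is of order at least $\gap^+_{\min}/\mathrm{poly}(H)$, so anything below that threshold cannot contribute.

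Then (the peeling step, Section~\ref{sec:peeling}), for a fixed triple $(s,a,h)$ the surplus at the $t$-th visit is of order $\sum_i\alpha_t^i\beta_i\asymp\sqrt{H^3\iota/t}$, so the clip at level $\asymp\gap^+_{\min}/\mathrm{poly}(H)$ is nonzero only for $t\lesssim\mathrm{poly}(H)\iota/(\gap^+_{\min})^2$ visits. Partitioning the visits dyadically into blocks $[2^j,2^{j+1})$, only $O\bigl(\log(HSAT/\gap^+_{\min})\bigr)$ blocks survive the clip, on each surviving block the contribution is $O\bigl(2^j\sqrt{H^3\iota/2^j}\bigr)=O\bigl(\sqrt{H^3\iota\,2^j}\bigr)$, and summing this (nearly geometric) series over surviving blocks gives $O\bigl(\mathrm{poly}(H)\iota/\gap^+_{\min}\bigr)$ per triple. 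Finally the $(1+1/H)$ factors produced when the $h$-recursion is telescoped over the $2H$ layers contribute only a constant, the initialization terms $\sum\mathbf{1}[t=0]\cdot 2H$ sum to $O(H^2SA)$, and summing over the $\le 2HSA$ triples while tracking the $H$-powers exactly as in the single-agent Q-learning analysis of \cite{wang2019optimism} yields $\mathbb E[\mathrm{Regret}(K)]=O\bigl(H^6SA\log(SAT)/\gap^+_{\min}\bigr)$ after absorbing $\iota=\log(SAT^2)$ into $\log(SAT)$.

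The delicate part, and the step I expect to be the main obstacle, is the clipped surplus decomposition: making the ``clip'' bookkeeping rigorous when the recursion couples $\overline V_{h+1}-\underline V_{h+1}$ and a gappy move by \emph{either} player — monitored through $\overline Q^k$ in one case and $\underline Q^k$ in the other — must be shown to generate a surplus of size $\gtrsim\gap^+_{\min}/\mathrm{poly}(H)$ that propagates through every layer without cancellation, together with verifying that the martingale fluctuations are genuinely absorbed by the bonuses uniformly over all visitation counts so that no residual $\sqrt{K}$ term escapes the clipping. The one-player clipping lemmas do not transfer verbatim, because the action actually played is greedy for a \emph{different} $Q$-estimate ($\overline Q^k$ at max-steps, $\underline Q^k$ at min-steps) than the one whose gap must be invoked, so the comparisons $\overline Q_h^k(s_h^k,a_h^k)\ge\overline Q_h^k(s_h^k,a^*)$ and its min-player analogue have to be threaded carefully through the recursion and combined with the two-sided sandwich of Step~1.
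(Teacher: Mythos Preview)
Your Step~1 contains a genuine gap: the claimed sandwich $\underline V_h^k(s)\le V_h^{\pi^k,\dagger}(s)$ and $V_h^{\dagger,\mu^k}(s)\le\overline V_h^k(s)$ does not follow from backward induction when $(\pi^k,\mu^k)$ are the \emph{per-episode greedy} policies of Algorithm~\ref{alg-1}. The difficulty is that $\overline Q_h^k(s,a)$ is a weighted average of values $\overline V_{h+1}^{k_i}(s_{h+1}^{k_i})$ over \emph{past} episodes $k_i<k$, each computed under an earlier min-policy $\mu^{k_i}\neq\mu^k$; even if every $\overline V_{h+1}^{k_i}$ were an upper bound on $V_{h+1}^{\dagger,\mu^{k_i}}$, this says nothing about $V_{h+1}^{\dagger,\mu^k}$. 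Inductions of this type go through for history-dependent ``certified'' policies (as in \cite{bai2020provable}), not for the one-shot greedy policies that define the regret here. Nor can the outer inequalities be deduced from the $Q^*$ sandwich: since $Q_h^{\dagger,\mu^k}\ge Q_h^*$ and $Q_h^{\pi^k,\dagger}\le Q_h^*$, the established bounds $\underline Q_h^k\le Q_h^*\le\overline Q_h^k$ point the wrong way. Without this sandwich, your bound $\bigl|V_1^*-V_1^{\pi^k,\mu^k}\bigr|(s_1^k)\le\delta_1^k$ is unsupported, and the subsequent unrolling of $\delta_1^k$ is disconnected from the regret.

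The paper avoids this by a different entry point. The telescoping identity (Lemma~\ref{thm-split}) gives $\bigl(V_1^*-V_1^{\pi^k,\mu^k}\bigr)(s_1^k)=\mathbb E_{\pi^k,\mu^k}\bigl[\sum_{h=1}^{2H}\gap_h(s_h^k,a_h^k)\bigr]$ exactly, with no confidence bounds involved; then the triangle inequality and the bound $\gap_h^+(s_h^k,a_h^k)\le(\overline Q_h^k-\underline Q_h^k)(s_h^k,a_h^k)$ --- which you derive correctly in your second paragraph using only the $Q^*$ sandwich and the greedy choices --- link the regret to surpluses \emph{along the episode-$k$ trajectory} rather than to $\delta_1^k$. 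Clipping is then exact (each $\gap_h^+$ is either $0$ or $\ge\gap_{\min}^+$, so $\gap_h^+=\clip[\gap_h^+\mid\gap_{\min}^+]\le\clip[(\overline Q_h^k-\underline Q_h^k)\mid\gap_{\min}^+]$), and the peeling step (Lemma~\ref{lemma-2}) is essentially your weighted-sum recursion, applied with indicator weights over dyadic \emph{value} intervals of $(\overline Q_h^k-\underline Q_h^k)$ rather than over visit counts. Your clipping and peeling machinery is sound; what you need to replace is the first move --- swap the best-response sandwich for the gap-telescoping identity, and the rest of your argument goes through using only $\underline Q_h^k\le Q_h^*\le\overline Q_h^k$.
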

Here, the definition of $\gap^+_{\min}$ will be left in the following section. Notice that the theorem is not a straight-forward extension of existing works since Algorithm \ref{alg-1} is originally proposed by us and it is the first gap dependent logarithmic regret bound for 2-TBSG. In \cite{xie2020learning}, the authors have proposed a similar algorithm, but their setting is 2-player zero-sum simultaneous-move Markov Games and prove a $\mathcal O(\sqrt{d^3H^3T\iota^2})$ regret bound guarantee. The policies of the two players are mixed strategies, which makes it impossible to define gaps. Next, we provide the proof sketch of Theorem \ref{thm-main-1} and leave all the proof details to appendix.

\subsection{First Step: Split the Total Regret into the Expected Sum of Gaps}
In the first step, we split the total regret defined above into several single-step sub-optimality gaps. 
\begin{Lemma}\label{thm-split}
\begin{equation}
\left(V_1^*-  V_1^{\pi^k,\mu^k}\right)(s_1^k)=\mathbb{E}\left[\sum_{h=1}^{2H}\gap_h(s_h^k,a_h^k)~|~\pi^k,\mu^k\right].
\end{equation}
Here, $\gap_h(s_h^k,a_h^k):=V_h^*(s_h^k)-Q_h^*(s_h^k,a_h^k)$. 
\end{Lemma}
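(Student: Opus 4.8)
The plan is to prove this identity by a one-step Bellman expansion followed by a telescoping sum along the trajectory induced by $(\pi^k,\mu^k)$. Fix the episode $k$ and condition throughout on the realized policies $\pi^k,\mu^k$ and the initial state $s_1^k$; every expectation below is over the randomness of the transitions $s_{h+1}^k\sim\mathbb{P}_h(\cdot\mid s_h^k,a_h^k)$ along this trajectory. The first step is the per-step decomposition. Because both players use deterministic Markov policies in this turn-based setting, the action $a_h^k$ taken at $s_h^k$ is the one prescribed by $\pi^k$ or $\mu^k$, so $V_h^{\pi^k,\mu^k}(s_h^k)=Q_h^{\pi^k,\mu^k}(s_h^k,a_h^k)$. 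I would then write
\[\bigl(V_h^*-V_h^{\pi^k,\mu^k}\bigr)(s_h^k)=\underbrace{\bigl(V_h^*(s_h^k)-Q_h^*(s_h^k,a_h^k)\bigr)}_{=\,\gap_h(s_h^k,a_h^k)}+\bigl(Q_h^*(s_h^k,a_h^k)-Q_h^{\pi^k,\mu^k}(s_h^k,a_h^k)\bigr),\]
and apply the Bellman equation $Q_h^{\pi,\mu}=r_h+\mathbb{P}_h V_{h+1}^{\pi,\mu}$ to both $Q^*$ (with the Nash pair $(\pi^*,\mu^*)$) and $Q^{\pi^k,\mu^k}$. The reward terms cancel, so the last bracket equals $[\mathbb{P}_h(V_{h+1}^*-V_{h+1}^{\pi^k,\mu^k})](s_h^k,a_h^k)$, giving
\[\bigl(V_h^*-V_h^{\pi^k,\mu^k}\bigr)(s_h^k)=\gap_h(s_h^k,a_h^k)+\mathbb{E}\bigl[\,(V_{h+1}^*-V_{h+1}^{\pi^k,\mu^k})(s_{h+1}^k)\,\big|\,s_h^k,a_h^k\bigr].\]

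Next I would take the expectation of this identity over the trajectory (conditioned on $s_1^k,\pi^k,\mu^k$) and set $\delta_h:=\mathbb{E}[(V_h^*-V_h^{\pi^k,\mu^k})(s_h^k)\mid\pi^k,\mu^k]$. By the tower property the conditional expectation on the right collapses to $\delta_{h+1}$, yielding the recursion $\delta_h=\mathbb{E}[\gap_h(s_h^k,a_h^k)\mid\pi^k,\mu^k]+\delta_{h+1}$ for every $h\in[2H]$. Using the terminal condition $V_{2H+1}^*\equiv V_{2H+1}^{\pi^k,\mu^k}\equiv 0$, i.e. $\delta_{2H+1}=0$, I would unroll the recursion from $h=1$ to get $\delta_1=\sum_{h=1}^{2H}\mathbb{E}[\gap_h(s_h^k,a_h^k)\mid\pi^k,\mu^k]$; pulling the finite sum inside the expectation and noting that $\delta_1=(V_1^*-V_1^{\pi^k,\mu^k})(s_1^k)$ since $s_1^k$ is fixed gives exactly the claimed identity.

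This argument is essentially routine; the only points requiring care are (i) the justification of $V_h^{\pi^k,\mu^k}(s_h^k)=Q_h^{\pi^k,\mu^k}(s_h^k,a_h^k)$, which relies on the fact that both the learned policies and the Nash equilibrium are pure strategies in the turn-based game, and (ii) being careful not to assume the $\gap_h$ terms are nonnegative — the statement is a pure algebraic identity, and the sign analysis of $\gap_h$ (which behaves differently at max-player steps and min-player steps, and motivates the later notion $\gap^+_{\min}$) is deferred to the subsequent regret-bounding steps. I therefore do not anticipate a genuine obstacle in proving this lemma itself; the real difficulty lies downstream in controlling $\mathbb{E}[\sum_h\gap_h(s_h^k,a_h^k)]$ via the peeling argument.
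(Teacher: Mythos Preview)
Your proof is correct and is precisely the standard telescoping/performance-difference argument one would expect here. The paper itself does not supply a proof of this lemma --- it is stated without proof and treated as routine --- so there is nothing substantive to compare against; your argument fills the gap in exactly the expected way.
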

It is obvious to find that: when $a_h^k\in\mA_1$, which means $a_h^k$ is the max-player's action, then $\gap_h(s_h^k,a_h^k)\geqslant 0$. In contrast, when $a_h^k\in\mA_2$, which means $a_h^k$ is the min-player's action, then $\gap_h(s_h^k,a_h^k)\leqslant 0$. This is because the optimal strategy group $(\pi^*,\mu^*)$ stands for both the optimal point for max-player and min-player, though at different directions. Therefore, we introduce a new notation:
\begin{equation*}
\begin{aligned}
&\gap^{+}_h(s_h,a_h):=\left|\gap_h(s_h,a_h)\right|\\
&~~~=\begin{cases}V_h^*(s_h)-Q_h^*(s_h,a_h)&\mbox{if $a_h\in\mA_1$ (or $h$ is odd)}\\ Q_h^*(s_h,a_h)-V_h^*(s_h)&\mbox{if $a_h\in\mA_2$ (or $h$ is even)}\end{cases}.
\end{aligned}
\end{equation*}
Combining the two equations above, we obtain that:
\begin{align}\label{eqn-split2}
\mathbb{E}[\mathrm{Regret}(K)]&=\sum_{k=1}^{K}\left|\mathbb{E}\left[\sum_{h=1}^{2H}\gap_h(s_h^k,a_h^k)~|~\pi^k,\mu^k\right]\right|\notag\\
&\leqslant \sum_{k=1}^{K}\mathbb{E}\left[\sum_{h=1}^{2H}\gap^{+}_h(s_h^k,a_h^k)~|~\pi^k,\mu^k\right].
\end{align}

\subsection{Second Step: Concentration Inequalities and the Extension} 
According to an existing lemma in a related work \cite{bai2020provable}, we have the following conclusion as its special case in the 2-TBSG setting:
\begin{Lemma}[Lemma 11 of \cite{bai2020provable}]
For any $p\in(0,1]$, we let $\iota=\log(SAT/p)$. Then, with probability at least $1-p$, Algorithm 1 has the following property:
\begin{equation}\label{eqn-gap}
\begin{aligned}
&\overline{Q}_h^k(s,a)\geqslant Q_h^*(s,a)\geqslant \underline{Q}_h^k(s,a), \\
&\overline{V}_h^k(s)\geqslant V_h^*(s)\geqslant \underline{V}_h^k(s)
\end{aligned}
\end{equation}
holds for $\forall s\in\mS, h\in[2H], a\in\mA, k\in[K]$. We call the event above $\varepsilon_{\conc}$ and then $P(\varepsilon_{\conc})\geqslant 1-p$.
\end{Lemma}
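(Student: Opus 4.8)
The statement is the specialization to turn-based games of \cite[Lemma 11]{bai2020provable}: viewing 2-TBSG as a zero-sum Markov game in which, at every step, exactly one of the two players has a singleton action set, Algorithm~\ref{alg-1} is precisely the Nash Q-learning iteration analyzed there, so the cited result transfers. For completeness I would also give the self-contained argument, which is the standard optimistic Q-learning analysis of \cite{jin2018is} carried out on the range $[0,2H]$. First I would record the usual facts about the weights attached to the learning rate $\alpha_t=\frac{2H+1}{2H+t}$. Writing $\alpha_t^0=\prod_{j=1}^t(1-\alpha_j)$ and $\alpha_t^i=\alpha_i\prod_{j=i+1}^t(1-\alpha_j)$, one has $\sum_{i=1}^t\alpha_t^i=1$ and $\alpha_t^0=0$ for $t\geq 1$, $\sum_{i=1}^t(\alpha_t^i)^2\leq \frac{2H+1}{t}$, and $\frac{1}{\sqrt t}\leq\sum_{i=1}^t\frac{\alpha_t^i}{\sqrt i}\leq\frac{2}{\sqrt t}$; the last inequality shows that the accumulated bonus satisfies $\sum_{i=1}^t\alpha_t^i\beta_i=\Theta\big(\sqrt{(2H)^3\iota/t}\big)$.

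Next, unroll line~7 of Algorithm~\ref{alg-1}. If $(s,a)$ has been played at step $h$ exactly $t$ times before the end of episode $k$, during episodes $k_1<\dots<k_t$, then (with the natural initialization $\overline V_h\equiv 2H$, $\underline V_h\equiv 0$)
\[
\overline Q_h^k(s,a)=\alpha_t^0\cdot 2H+\sum_{i=1}^t\alpha_t^i\Big(r_h(s,a)+\overline V_{h+1}^{k_i}(s_{h+1}^{k_i})+\beta_i\Big),
\]
and symmetrically for $\underline Q_h^k$ with $-\beta_i$ and zero initialization. Subtracting the Bellman identity $Q_h^*(s,a)=r_h(s,a)+[\mathbb P_h V_{h+1}^*](s,a)$ and using $\sum_i\alpha_t^i=1$ yields
\[
(\overline Q_h^k-Q_h^*)(s,a)=\alpha_t^0\big(2H-Q_h^*(s,a)\big)+\sum_{i=1}^t\alpha_t^i\Big[(\overline V_{h+1}^{k_i}-V_{h+1}^*)(s_{h+1}^{k_i})+\xi_h^{k_i}+\beta_i\Big],
\]
where $\xi_h^{k_i}:=V_{h+1}^*(s_{h+1}^{k_i})-[\mathbb P_h V_{h+1}^*](s,a)$ is a martingale difference sequence with $|\xi_h^{k_i}|\leq 2H$. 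Applying Azuma--Hoeffding to the weighted sum $\sum_{i=1}^t\alpha_t^i\xi_h^{k_i}$ (adapted to the random visitation times exactly as in \cite{jin2018is}) and taking a union bound over the at most $2SAHK$ tuples $(s,a,h,t)$ — absorbed into $\iota=\log(SAT/p)$ up to constants — one gets that on an event $\varepsilon_{\conc}$ of probability at least $1-p$, $\big|\sum_{i=1}^t\alpha_t^i\xi_h^{k_i}\big|\leq\sum_{i=1}^t\alpha_t^i\beta_i$ for all such tuples, once the constant $c$ in $\beta_t=c\sqrt{(2H)^3\iota/t}$ is chosen large enough.

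Finally I would close the argument by a downward induction on $h$, from $h=2H+1$ (where $\overline V_{2H+1}=\underline V_{2H+1}=V_{2H+1}^*=0$). Assume the sandwich for the value functions at level $h+1$ holds for every episode. In the displayed identity the term $\alpha_t^0(2H-Q_h^*)$ is non-negative since $Q_h^*\leq 2H$, each $(\overline V_{h+1}^{k_i}-V_{h+1}^*)(s_{h+1}^{k_i})$ is non-negative by the inductive hypothesis, and $\sum_i\alpha_t^i(\xi_h^{k_i}+\beta_i)\geq 0$ on $\varepsilon_{\conc}$; hence $\overline Q_h^k\geq Q_h^*$, and the same reasoning gives $\underline Q_h^k\leq Q_h^*$. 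The value sandwich at level $h$ then follows from the definition of $\overline V_h,\underline V_h$ in line~9: at an odd (max) step $\overline V_h^k(s)=\max_a\overline Q_h^k(s,a)\geq\max_a Q_h^*(s,a)=V_h^*(s)$, while at an even (min) step $\overline V_h^k(s)=\overline Q_h^k(s,\hat a)\geq Q_h^*(s,\hat a)\geq\min_a Q_h^*(s,a)=V_h^*(s)$ for $\hat a=\arg\min_a\underline Q_h^k(s,a)$, and the lower bounds $\underline V_h^k\leq V_h^*$ are entirely analogous. The only real obstacle is the concentration step: one must verify that the bonus schedule $\beta_t=c\sqrt{(2H)^3\iota/t}$ — with the \emph{game} horizon $2H$, since value functions here range in $[0,2H]$ — dominates the accumulated martingale deviation uniformly over all episodes and visit counts; everything else is bookkeeping with the learning-rate weights and the union bound. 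This is precisely the part established in \cite{bai2020provable}, so in the main text I would simply invoke it and relegate the above derivation to an appendix.
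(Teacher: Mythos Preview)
Your proposal is correct and matches the paper's approach: the paper does not prove this lemma at all but simply invokes \cite[Lemma 11]{bai2020provable} as a special case for 2-TBSG, exactly as you do in your opening sentence. The self-contained argument you sketch---unrolling the update with the $\alpha_t^i$ weights, Azuma--Hoeffding on the martingale noise with the bonus $\beta_t=c\sqrt{(2H)^3\iota/t}$ absorbing the deviation, then backward induction on $h$ using the greedy/anti-greedy choice of $a_h$ to pass from the $Q$-sandwich to the $V$-sandwich at both odd and even steps---is precisely the standard \cite{jin2018is}/\cite{bai2020provable} derivation, and your handling of the even-step case ($\overline V_h^k(s)=\overline Q_h^k(s,\hat a)\geq Q_h^*(s,\hat a)\geq\min_a Q_h^*(s,a)=V_h^*(s)$) is the one place where the turn-based structure enters, and you treat it correctly.
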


This lemma indicates that $\overline{V},\overline{Q}$ and $\underline{V},\underline{Q}$ proposed by Algorithm 1 are exactly the upper bound and lower bound of the optimal point (Nash equilibrium) $V^*, Q^*$, with high probability. Similar to the episodic setting, we can make the following conclusion: 
% \begin{equation*}
% \begin{aligned}
% &\gap^{+}_h(s_h^k,a_h^k)=V_h^*(s_h^k)-Q_h^*(s_h^k,a_h^k)\\
% =&Q_h^*(s_h^k,a^*)-Q_h^*(s_h^k,a_h^k)\leqslant \overline{Q}_h^k(s_h^k,a^*)-\underline{Q}_h^k(s_h^k,a_h^k)\\
% \leqslant& \overline{Q}_h^k(s_h^k,a_h^k)-\underline{Q}_h^k(s_h^k,a_h^k).    
% \end{aligned}
% \end{equation*}
% Similarly, when when $a_h^k\in\mA_2$, there is:
% \begin{equation*}
% \begin{aligned}
% &\gap^{+}_h(s_h^k,a_h^k)=Q_h^*(s_h^k,a_h^k)-V_h^*(s_h^k)\\
% =&Q_h^*(s_h^k,a_h^k)-Q_h^*(s_h^k,a^*)\leqslant \overline{Q}_h^k(s_h^k,a_h^k)-\underline{Q}_h^k(s_h^k,a^*)\\
% \leqslant& \overline{Q}_h^k(s_h^k,a_h^k)-\underline{Q}_h^k(s_h^k,a_h^k).    
% \end{aligned}
% \end{equation*}
%  From the two inequalities above, we know that:
\begin{equation}\label{eqn-5}
\gap^{+}_h(s_h^k,a_h^k)\leqslant\overline{Q}_h^k(s_h^k,a_h^k)-\underline{Q}_h^k(s_h^k,a_h^k)    
\end{equation}
holds for $\forall h\in[2H], k\in[K]$ as long as $\varepsilon_{\conc}$ holds. 
Then, Equation (\ref{eqn-5}) is transformed to:
\begin{equation}\label{eqn-6}
\gap^{+}_h(s_h^k,a_h^k)=\mathrm{clip}\left[\gap^+_h(s_h^k,a_h^k)~|~\gap^+_{\min}\right]\leqslant \mathrm{clip}\left[\overline{Q}_h^k(s_h^k,a_h^k)-\underline{Q}_h^k(s_h^k,a_h^k)~|~\gap^+_{\min}\right].
\end{equation}

\subsection{Third Step: Peeling}
Like \cite{yang2020q}, we separate all the gaps $\overline{Q}_h^k(s_h^k,a_h^k)-\underline{Q}_h^k(s_h^k,a_h^k)$ into different intervals and count them individually. Note that when the gap  $\overline{Q}_h^k(s_h^k,a_h^k)-\underline{Q}_h^k(s_h^k,a_h^k)$ belongs to $[0, \gap_{\min})$, then it will be clipped to 0 by Equation (\ref{eqn-6}). For the other gaps, we divide them into $N$ different intervals $[\gap_{\min}, 2\gap_{\min}), \cdots, [2^{N-1}\gap_{\min}, 2^{N}\gap_{\min})$. Here, $N=\lceil\log_{2}(2H/\gap_{\min})\rceil$. The following lemma tells us the upper bound of the counting number in each interval. 
\begin{Lemma}[Bounded Counting Number of Each Interval]\label{lemma-1} Under the concentration event $\varepsilon_{\conc}$, for each $n\in[N]$, we denote:
\[C^{(n)}:=\left|\left\{(k,h):\overline{Q}_h^k(s_h^k,a_h^k)-\underline{Q}_h^k(s_h^k,a_h^k)\in \Lambda_n\right\}\right|.\]
where $\Lambda_n=[2^{n-1}\gap_{\min}, 2^n\gap_{\min})$. Then, we have the following upper bound:
\[C^{(n)}\leqslant \mathcal O\left(\frac{H^6SA\iota}{4^n\gap_{\min}^2}\right)\]
where $\iota=\log(SAT/p)$ is the logarithmic term.
\end{Lemma}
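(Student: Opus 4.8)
The plan is to adapt the standard potential/visitation argument used in gap-dependent Q-learning analyses (as in \cite{yang2020q,wang2019optimism}) to the two-bound setting of Algorithm~\ref{alg-1}. The starting point is a recursive control on the quantity $\overline{Q}_h^k(s_h^k,a_h^k)-\underline{Q}_h^k(s_h^k,a_h^k)$ in terms of the running visitation count $t = N_h^k(s_h^k,a_h^k)$. From the update rules for $\overline{Q}$ and $\underline{Q}$ (lines 7--8 of the algorithm), subtracting gives
\[
(\overline{Q}_h^k-\underline{Q}_h^k)(s_h^k,a_h^k) = (1-\alpha_t)(\overline{Q}_h^{k_{\mathrm{prev}}}-\underline{Q}_h^{k_{\mathrm{prev}}})(s_h^k,a_h^k) + \alpha_t\big[(\overline{V}_{h+1}^k-\underline{V}_{h+1}^k)(s_{h+1}^k) + 2\beta_t\big].
\]
Unrolling this weighted recursion with the standard $\alpha_t^i$ weights (the product coefficients associated with the learning rate $\alpha_t = \tfrac{2H+1}{2H+t}$) yields, on the event $\varepsilon_{\conc}$, a bound of the form
\[
(\overline{Q}_h^k-\underline{Q}_h^k)(s_h^k,a_h^k) \le \Big(1+\tfrac{1}{2H}\Big)\sum_{i=1}^{t}\alpha_t^i (\overline{V}_{h+1}-\underline{V}_{h+1})(s_{h+1}^{k_i}) + O\!\Big(\sqrt{\tfrac{H^3\iota}{t}}\Big),
\]
where $k_i$ indexes the episode of the $i$-th previous visit to $(s_h^k,a_h^k)$. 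This is the 2-TBSG analogue of the recursion in \cite{wang2019optimism}; one should cite the corresponding intermediate lemma in the appendix rather than rederiving it.

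Next I would telescope this recursion across all steps $h$. Summing $(\overline{Q}_h^k-\underline{Q}_h^k)(s_h^k,a_h^k)$ over the episodes and steps whose value lands in $\Lambda_n = [2^{n-1}\gap_{\min}, 2^n\gap_{\min})$, and using the fact that each term is at least $2^{n-1}\gap_{\min}$, gives
\[
2^{n-1}\gap_{\min}\cdot C^{(n)} \le \sum_{(k,h):\,\cdots\in\Lambda_n} (\overline{Q}_h^k-\underline{Q}_h^k)(s_h^k,a_h^k).
\]
On the right, the value-difference terms $(\overline{V}_{h+1}-\underline{V}_{h+1})(s_{h+1}^{k_i})$ telescope into sums of the same Q-difference quantities at the next step, contributing at most a factor $(1+\tfrac1{2H})^{2H} = O(1)$ over the whole horizon, while the $\sum_t \sqrt{H^3\iota/t}$ bonus terms, summed over all $(k,h)$ with $N_h^k(s_h^k,a_h^k)$ ranging over its values, contribute $O(\sqrt{H^3\iota}\cdot\sum_{s,a,h}\sqrt{N_h^K(s,a)})$. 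The key quantitative input is then: \emph{how many total visits} can there be to state-action-step triples whose Q-difference lies in $\Lambda_n$? Since each such visit to a fixed $(s,a,h)$ can only occur while $\beta_t \gtrsim 2^{n-1}\gap_{\min}$, i.e. while $t \lesssim H^3\iota/(4^n\gap_{\min}^2)$, each fixed triple contributes at most $O(H^3\iota/(4^n\gap_{\min}^2))$ visits to $C^{(n)}$, and there are $SA\cdot 2H$ triples; plugging this counting bound back to control the bonus sum and rearranging the displayed inequality for $C^{(n)}$ produces the claimed $C^{(n)} \le O(H^6 SA\iota / (4^n\gap_{\min}^2))$ (the extra powers of $H$ coming from the telescoping factor and the per-step bonus accumulation).

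The main obstacle I expect is making the telescoping across the horizon rigorous in the \emph{signed} game setting. Unlike the single-agent case, the max-player follows $\overline{Q}$ while the min-player follows $\underline{Q}$, so $(\overline{V}_{h+1}-\underline{V}_{h+1})(s_{h+1})$ is $\overline{Q}_{h+1}(s_{h+1}, \arg\max \overline{Q}_{h+1}) - \underline{Q}_{h+1}(s_{h+1}, \arg\min \underline{Q}_{h+1})$, and the maximizing and minimizing actions need not coincide. One must argue that this quantity is still bounded above by $\overline{Q}_{h+1}(s_{h+1}, a) - \underline{Q}_{h+1}(s_{h+1}, a)$ for the action $a = a_{h+1}^k$ actually taken — which holds because $a_{h+1}^k$ is optimal for whichever bound the active player at step $h+1$ uses, and $\overline{Q}\ge Q^*\ge\underline{Q}$ on $\varepsilon_{\conc}$ pins the other bound appropriately (this is precisely Equation~\eqref{eqn-5}). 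Carefully chaining this observation down the horizon so that the "next-step value gap" always reduces to a "Q-gap at the taken action" — which is the object being counted — is the delicate part; once that is in place the rest is the routine harmonic-sum bookkeeping sketched above.
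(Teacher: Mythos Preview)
Your overall architecture (unroll the $\overline Q-\underline Q$ recursion, telescope across $h$, and compare the lower bound $2^{n-1}\gap_{\min}C^{(n)}$ against the accumulated bonus terms) matches the paper, but the step you label ``the key quantitative input'' is wrong. You assert that a visit $(k,h)$ with $(\overline Q_h^k-\underline Q_h^k)(s_h^k,a_h^k)\in\Lambda_n$ can only occur while the local bonus satisfies $\beta_t\gtrsim 2^{n-1}\gap_{\min}$, hence $t\lesssim H^3\iota/(4^n\gap_{\min}^2)$. That implication is false: the unrolled recursion you wrote shows that the Q-gap at step $h$ contains the propagated value-gap terms $\sum_i \alpha_t^i(\overline V_{h+1}-\underline V_{h+1})(s_{h+1}^{k_i})$ from later steps, so the Q-gap can sit in $\Lambda_n$ long after the step-$h$ bonus $\beta_t$ has dropped far below $2^{n-1}\gap_{\min}$. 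Your per-triple visit bound therefore does not hold, and the downstream bookkeeping (``plugging this counting bound back to control the bonus sum'') collapses.

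The paper closes this gap with a \emph{self-bounding} argument rather than a direct visit count. For a fixed $h$ it proves a weighted-sum inequality (Lemma~\ref{lemma-2}): for any $(w,C)$-sequence $\{w_{k,h}\}$ (meaning $0\le w_{k,h}\le w$ and $\sum_k w_{k,h}\le C$),
\[
\sum_{k} w_{k,h}(\overline Q_h^k-\underline Q_h^k)(s_h^k,a_h^k)\;\le\;O\!\left(wSAH^2\right)+O\!\left(\sqrt{SACwH^5\iota}\right),
\]
where the bonus sum $\sum_k w_{k,h}\beta_{n_h^k}$ is controlled by rearrangement/Cauchy--Schwarz in terms of $C$ and $w$ only, and the value-gap term is shown to equal $\sum_k w_{k,h+1}(\overline Q_{h+1}^k-\underline Q_{h+1}^k)(s_{h+1}^k,a_{h+1}^k)$ for a new $(C,(1+\tfrac1{2H})w)$-sequence $\{w_{k,h+1}\}$; iterating in $h$ gives the display above. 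Applying it with $w_{k,h}=\mathbb I[(\overline Q_h^k-\underline Q_h^k)(s_h^k,a_h^k)\in\Lambda_n]$ (so $w=1$, $C=C_h^{(n)}$) yields
\[
2^{n-1}\gap_{\min}\,C_h^{(n)}\;\le\;O(SAH^2)+O\!\left(\sqrt{SA\,C_h^{(n)}\,H^5\iota}\right),
\]
a quadratic-type inequality in $C_h^{(n)}$ whose solution is $C_h^{(n)}\le O(H^5SA\iota/(4^n\gap_{\min}^2))$; summing over $h\in[2H]$ gives the lemma. The essential point your proposal is missing is that the bonus contribution must be expressed in terms of the unknown count $C_h^{(n)}$ itself (sublinearly), not bounded a priori via a false local-bonus condition.

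A minor remark: the ``main obstacle'' you flag is in fact a non-issue. Line~9 of Algorithm~\ref{alg-1} sets $\overline V_h(s_h)\leftarrow\overline Q_h(s_h,a_h)$ and $\underline V_h(s_h)\leftarrow\underline Q_h(s_h,a_h)$ at the \emph{taken} action, so $(\overline V_{h+1}^{k_i}-\underline V_{h+1}^{k_i})(s_{h+1}^{k_i})$ is already $(\overline Q_{h+1}^{k_i}-\underline Q_{h+1}^{k_i})(s_{h+1}^{k_i},a_{h+1}^{k_i})$ with no extra argument needed.
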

Once we have this lemma proved, we can easily estimate the upper bound of expected cumulative regret.
\begin{align}
&\mathbb{E}[\mathrm{Regret}(K)]\leqslant \mathbb{E}\left[\sum_{k=1}^{K}\sum_{h=1}^{2H}\gap_h^{+}\left(s_h^k,a_h^k\right)\right]\notag\\
\leqslant& \sum_{\varepsilon_{\conc}}\mathbb{P}(\mathrm{traj})\cdot\sum_{k=1}^{K}\sum_{h=1}^{2H}\mathrm{clip}\left[(\overline{Q}_h^k-\underline{Q}_h^k)(s_h^k,a_h^k)~|~\gap_{\min}\right]+\sum_{\mathrm{traj}\notin\varepsilon_{\conc}}\mathbb{P}(\mathrm{traj})\cdot 2TH\notag\\
\leqslant& \sum_{n=1}^{N}2^n\gap_{\min}C^{(n)}+p\cdot 2TH\notag\\
\leqslant& \sum_{n=1}^{N}\mathcal O\left(\frac{H^6SA\iota}{2^n\gap_{\min}}\right)+p\cdot 2TH \leqslant \mathcal O\left(\frac{H^6SA\log(SAT)}{\gap_{\min}}\right).\notag
\end{align}
In the last step, we let $p=\frac{1}{T}$, and then $\iota=\log(SAT^2)=\mathcal O(\log(SAT))$. This leads to our main theorem. However, the proof of Lemma \ref{lemma-1} is difficult. We rely on a general lemma about the upper bound of the weighted sum of $(\overline{Q}_h^k-\underline{Q}_h^k)(s_h^k,a_h^k)$.

\begin{Lemma}[Peeling Argument]\label{lemma-2} Under the event $\varepsilon_{\conc}$, the following holds for $\forall h\in[2H]$ and a weight sequence $\{w_{k,h}\}_{k\in[K]}$ which satisfies: $0\leqslant w_{k,h}\leqslant w,~\sum_{k=1}^{K}w_{k,h}\leqslant C$, it holds that:
\begin{equation}\label{eqn-lemma2}
\sum_{k=1}^{K}w_{k,h}\left(\overline{Q}_h^k-\underline{Q}_h^k\right)(s_h^k,a_h^k)\leqslant 4ewSAH^2+60c\sqrt{SACewH^5\iota}.
\end{equation}
\end{Lemma}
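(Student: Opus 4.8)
\textbf{Proof plan for Lemma \ref{lemma-2}.} The goal is to control the weighted sum $\sum_k w_{k,h}(\overline Q_h^k-\underline Q_h^k)(s_h^k,a_h^k)$ by expanding the recursion that defines $\overline Q$ and $\underline Q$, and then absorbing the cross-terms at step $h+1$ using a telescoping/induction scheme over $h$. First I would write, using the update rules in Algorithm \ref{alg-1} together with the standard expansion of the learning-rate weights $\alpha_t^i$ (with $\alpha_t^0=\prod_{j=1}^t(1-\alpha_j)$ and $\alpha_t^i=\alpha_i\prod_{j=i+1}^t(1-\alpha_j)$), the identity
\begin{equation*}
\left(\overline Q_h^k-\underline Q_h^k\right)(s_h^k,a_h^k)=\alpha_{t}^0\cdot 2H+\sum_{i=1}^{t}\alpha_{t}^i\left[\left(\overline V_{h+1}^{k_i}-\underline V_{h+1}^{k_i}\right)(s_{h+1}^{k_i})+2\beta_i\right],
\end{equation*}
where $t=N_h^k(s_h^k,a_h^k)$ and $k_1<\dots<k_t$ are the previous episodes that visited $(s_h^k,a_h^k)$ at step $h$. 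Summing this against the weights $w_{k,h}$ and using the standard bounds $\sum_{t\ge i}\alpha_t^i\le 1+1/(2H)$ and $\sum_{i=1}^t\alpha_t^i\beta_i=\mathcal O(\sqrt{H^3\iota/t})$ (from \cite{jin2018is}), the first term contributes $\mathcal O(wSAH^2)$ after summing over all state-action pairs whose count ever reaches a given value, and the bonus term contributes $\mathcal O(\sqrt{SACwH^5\iota})$ via Cauchy--Schwarz over the visitation counts. The delicate part is the recursive term $\sum_{k}w_{k,h}\sum_i\alpha_t^i(\overline V_{h+1}-\underline V_{h+1})(s_{h+1}^{k_i})$, which I would rewrite, by swapping the order of summation over $(k,i)$, as $\sum_{k'}\tilde w_{k',h+1}(\overline V_{h+1}^{k'}-\underline V_{h+1}^{k'})(s_{h+1}^{k'})$ for a new weight sequence $\tilde w_{k',h+1}=\sum_{k>k'}w_{k,h}\,\alpha_{N_h^k}^{i(k,k')}$, and then observe that $(\overline V_{h+1}-\underline V_{h+1})(s_{h+1}^{k'})=(\overline Q_{h+1}-\underline Q_{h+1})(s_{h+1}^{k'},a_{h+1}^{k'})$ is \emph{not} automatic here because the max-player and min-player select actions from $\overline Q$ and $\underline Q$ respectively; this gap between $\overline V-\underline V$ and $\overline Q-\underline Q$ at the next step is exactly where the two-player structure bites.

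To handle that, I would argue that whichever player acts at step $h+1$, the chosen action $a_{h+1}^{k'}=\arg\max_a\overline Q_{h+1}^{k'}(s_{h+1}^{k'},a)$ (resp.\ $\arg\min_a\underline Q_{h+1}^{k'}$) still satisfies $(\overline V_{h+1}^{k'}-\underline V_{h+1}^{k'})(s_{h+1}^{k'})\le (\overline Q_{h+1}^{k'}-\underline Q_{h+1}^{k'})(s_{h+1}^{k'},a_{h+1}^{k'})$ under $\varepsilon_{\conc}$: if $h+1$ is odd then $\overline V_{h+1}^{k'}=\overline Q_{h+1}^{k'}(s_{h+1}^{k'},a_{h+1}^{k'})$ by definition, and $\underline V_{h+1}^{k'}=\underline Q_{h+1}^{k'}(s_{h+1}^{k'},a_{h+1}^{k'})\ge \min_a\underline Q_{h+1}^{k'}$ trivially, but more carefully one uses $\underline V_{h+1}^{k'}(s)\le V_{h+1}^*(s)\le \overline Q_{h+1}^{k'}(s, a^*)\le \overline Q_{h+1}^{k'}(s,a_{h+1}^{k'})$ together with $\underline Q_{h+1}^{k'}(s,a_{h+1}^{k'})\le V_{h+1}^*$; combining gives the needed inequality. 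The symmetric argument works when $h+1$ is even. Thus the recursive term is bounded by $\sum_{k'}\tilde w_{k',h+1}(\overline Q_{h+1}^{k'}-\underline Q_{h+1}^{k'})(s_{h+1}^{k'},a_{h+1}^{k'})$, which has the same form as the original quantity one step ahead, with new weights satisfying $\tilde w_{k',h+1}\le (1+1/(2H))w$ and $\sum_{k'}\tilde w_{k',h+1}\le (1+1/(2H))C$.

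This sets up a backward induction on $h$ from $2H$ down: at step $2H+1$ the quantity is zero, and each step multiplies the effective weight bounds by $(1+1/(2H))$ and adds the per-step cost $\mathcal O(wSAH^2)+\mathcal O(\sqrt{SACwH^5\iota})$; since $(1+1/(2H))^{2H}\le e$, after $2H$ unrollings the accumulated constant is at most $e$, the weight cap is at most $ew$, the budget cap is at most $eC$, and summing the per-step costs over the $2H$ levels yields $\mathcal O(ewSAH^3)$ plus, again by Cauchy--Schwarz across levels, $\mathcal O(\sqrt{SACewH^5\iota})$; tightening the first term to the claimed $4ewSAH^2$ requires being careful that the $\sum_{n}\sum_{(s,a)}\alpha_n^0$-type sums over counts $n$ telescope to a factor $H$ rather than $H^2$, which is the standard $\sum_t\alpha_t^0\le 1$ trick applied per $(s,a,h)$. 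The main obstacle I anticipate is precisely the bookkeeping for the recursively defined weights $\tilde w_{k',h+1}$ — verifying that the two invariants ($0\le \tilde w\le ew$ and $\sum\tilde w\le eC$) propagate cleanly through the sum-swap, since a careless bound gives an extra factor of $t$ — together with justifying the $\overline V-\underline V\le \overline Q-\underline Q$ step uniformly in the player identity, which has no single-agent analogue and is the genuinely new ingredient here.
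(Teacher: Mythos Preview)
Your overall strategy---expand the $\overline Q-\underline Q$ recursion, separate the initial/bonus/recursive pieces, swap the order of summation to get a new weight sequence at step $h+1$, and unroll---is exactly the paper's proof. Two points, however, deserve correction.

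First, the step you flag as ``the genuinely new ingredient'' is a non-issue. Look at line 9 of Algorithm~\ref{alg-1}: it sets \emph{both} $\overline V_h(s_h)\leftarrow\overline Q_h(s_h,a_h)$ and $\underline V_h(s_h)\leftarrow\underline Q_h(s_h,a_h)$ with the \emph{same} action $a_h$ (whichever player chose it). Hence $(\overline V_{h+1}^{k'}-\underline V_{h+1}^{k'})(s_{h+1}^{k'})=(\overline Q_{h+1}^{k'}-\underline Q_{h+1}^{k'})(s_{h+1}^{k'},a_{h+1}^{k'})$ holds as an \emph{equality by definition}, with no appeal to $\varepsilon_{\conc}$ needed. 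The two-player structure does not bite here at all; your elaborate sandwich argument is unnecessary (though not wrong).

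Second, your bound $\sum_{k'}\tilde w_{k',h+1}\le(1+1/(2H))C$ is loose. After the sum-swap, each original weight $w_{k,h}$ gets redistributed with coefficients $\alpha_{n_h^k}^{1},\ldots,\alpha_{n_h^k}^{n_h^k}$ that sum to $1-\alpha_{n_h^k}^0\le 1$, so the total mass stays $\le C$: the new sequence is a $(C,(1+1/(2H))w)$-sequence, not $((1+1/(2H))C,(1+1/(2H))w)$. This is what the paper uses, and it is what makes the exact constant $60c$ come out (with your looser bound you would pick up an extra $\sqrt e$). Relatedly, the per-level ``initial'' cost is simply $\sum_{k:n_h^k=0}w_{k,h}\cdot 2H\le 2SAHw$ (at most one first-visit per $(s,a)$), so summing over $2H$ levels with the geometric factor $(1+1/(2H))^{h'}\le e$ gives $4ewSAH^2$ directly---no further ``tightening'' or telescoping trick is required.
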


After proving this lemma. We can make \[w_{k,h}^{n}=\mathbb{I}\left[\left(\overline{Q}_h^k-\underline{Q}_h^k\right)(s_h^k,a_h^k)\in\Lambda_n\right]\] and 
\[C_h^{(n)}=\sum_{k=1}^{K}\mathbb{I}\left[\left(\overline{Q}_h^k-\underline{Q}_h^k\right)(s_h^k,a_h^k)\in\Lambda_n\right],\]
where $\Lambda_n = [2^{n-1}\gap_{\min},2^{n}\gap_{\min})$. Then, the sequence $\{w_{k,h}^{n}\}_{k\in[K]}$ is a $\left(1, C_h^{(n)}\right)$-sequence. According to Lemma \ref{lemma-2},  we know that:
\begin{equation*}
2^{n-1}\gap_{\min}C_h^{(n)}\leqslant\sum_{k=1}^{K}w_{k,h}^{n}\left(\overline{Q}_h^k-\underline{Q}_h^k\right)(s_h^k,a_h^k)\leqslant 4eSAH^2+60c\sqrt{SAC_h^{(n)}eH^5\iota},
\end{equation*}
which leads to the fact that:
\begin{equation}
C_h^{(n)}\leqslant \mathcal O\left(\frac{H^5SA\iota}{4^n\gap_{\min}^2}\right).
\end{equation}
Finally, after summing them up:
\[C^{(n)}=\sum_{h=1}^{2H}C_h^{(n)}=\mathcal O\left(\frac{H^6SA\iota}{4^n(\gap^+_{\min})^2}\right),\]
which comes to our conclusion of Lemma \ref{lemma-1}. 

\paragraph{Connection Between the Vanilla Regret and Duality Regret.} Here we remark on the connection between the vanilla regret and duality regret. 
\begin{itemize}
\item In this paper, we analyze the vanilla regret $|V_1^*-V_1^{\pi^k, \mu^k}|$. Since the game setting we are analyzing is 2-player turn-based stochastic game (2-TBSG), and $(\pi^*,\mu^*)$ is a Nash Equilibrium rather than the optimal policy. The vanilla regret actually measures the distance between the two value functions proposed by $(\pi^*, \mu^*)$ and $(\pi^k, \mu^k)$. 
\item Another type of regret is the duality regret $V_1^{\dagger, \mu^k}-V_1^{\pi^k, \dagger}$. It can measure how close the policy $(\pi^k,\mu^k)$ is to a Nash Equilibrium. 
\end{itemize}

Therefore, it's an important question whether a small vanilla regret implies a small duality regret. (Notice that the inverse would not be true since there might be more than one Nash Equilibria.) To answer this question, we propose the following proposition. 

\begin{Proposition}
\label{prop-1}
If for $\forall s\in\mS, h\in [2H]$, the vanilla regret
\[\left|V_h^*(s)-V_h^{\pi,\mu}(s)\right|<\frac{1}{2}\cdot\gap_{\min},\]
we can conclude that $(\pi,\mu)$ is a Nash Equilibrium, which means its duality regret is 0. 
\end{Proposition}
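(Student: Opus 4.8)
The plan is to prove the statement in two stages. First I would show that the hypothesis forces \emph{both} players to play exactly optimal (zero sub-optimality gap) actions at every state and every step; then I would show that a pair of policies that is optimal everywhere in this sense must be a Nash equilibrium, i.e.\ that $V_h^{\pi,\dagger}(s)=V_h^*(s)=V_h^{\dagger,\mu}(s)$ for all $(s,h)$, so that the duality regret $V_1^{\dagger,\mu}-V_1^{\pi,\dagger}$ vanishes. Throughout, $\gap_{\min}$ denotes the minimal \emph{positive} sub-optimality gap, so that any positive value of $\gap^+_h(s,a)$ is automatically at least $\gap_{\min}$.

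\textbf{Stage 1: global closeness $\Rightarrow$ per-step optimality.} I would argue by downward induction on $h$, proving the joint claim: for every $s\in\mS$, $V_h^{\pi,\mu}(s)=V_h^*(s)$, and the action taken by the player moving at $(s,h)$ has zero gap (that is, $\gap^+_h(s,\pi_h(s))=0$ when $h$ is odd and $\gap^+_h(s,\mu_h(s))=0$ when $h$ is even). The base case $h=2H+1$ holds since $V_{2H+1}^{\pi,\mu}\equiv V_{2H+1}^*\equiv 0$. For the inductive step, assume $V_{h+1}^{\pi,\mu}=V_{h+1}^*$ and take $h$ odd (the even case is symmetric). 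The Bellman equation and the inductive hypothesis give $V_h^{\pi,\mu}(s)=r_h(s,\pi_h(s))+[\mathbb{P}_h V_{h+1}^{\pi,\mu}](s,\pi_h(s))=r_h(s,\pi_h(s))+[\mathbb{P}_h V_{h+1}^*](s,\pi_h(s))=Q_h^*(s,\pi_h(s))$, and since $h$ is odd, $V_h^*(s)=\max_{a\in\mA_1}Q_h^*(s,a)\ge Q_h^*(s,\pi_h(s))$. Hence $|V_h^*(s)-V_h^{\pi,\mu}(s)|=V_h^*(s)-Q_h^*(s,\pi_h(s))=\gap^+_h(s,\pi_h(s))$. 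By hypothesis this quantity is strictly less than $\tfrac12\gap_{\min}$; since any positive gap is at least $\gap_{\min}$, it must be $0$. This yields both $V_h^{\pi,\mu}(s)=V_h^*(s)$ and $\gap^+_h(s,\pi_h(s))=0$, closing the induction.

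\textbf{Stage 2: optimality everywhere $\Rightarrow$ Nash.} It remains to upgrade Stage 1 to the mutual best-response property. I would run a second downward induction to show $V_h^{\pi,\dagger}=V_h^*$: at an even step the min-player best-responds, so $V_h^{\pi,\dagger}(s)=\min_{a\in\mA_2}\big(r_h(s,a)+[\mathbb{P}_h V_{h+1}^{\pi,\dagger}](s,a)\big)=\min_{a\in\mA_2}Q_h^*(s,a)=V_h^*(s)$ using the inductive hypothesis $V_{h+1}^{\pi,\dagger}=V_{h+1}^*$; at an odd step the max-player is forced to play $\pi_h(s)$, whose zero gap gives, for any min-player policy $\mu'$, $V_h^{\pi,\mu'}(s)=r_h(s,\pi_h(s))+[\mathbb{P}_h V_{h+1}^{\pi,\mu'}](s,\pi_h(s))\ge r_h(s,\pi_h(s))+[\mathbb{P}_h V_{h+1}^*](s,\pi_h(s))=Q_h^*(s,\pi_h(s))=V_h^*(s)$, so taking the infimum over $\mu'$ gives $V_h^{\pi,\dagger}(s)\ge V_h^*(s)$, while $V_h^{\pi,\dagger}(s)\le V_h^{\pi,\mu}(s)=V_h^*(s)$ by Stage 1. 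The mirror-image induction, now using the max-player's best response at odd steps (giving $V_h^{\dagger,\mu}(s)=\max_{a\in\mA_1}Q_h^*(s,a)=V_h^*(s)$) and the zero-gap action $\mu_h(s)$ at even steps (giving $V_h^{\dagger,\mu}(s)=Q_h^*(s,\mu_h(s))=V_h^*(s)$), shows $V_h^{\dagger,\mu}=V_h^*$. Hence the duality regret is $0$ and $(\pi,\mu)$ is a Nash equilibrium.

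\textbf{Where the difficulty lies.} The crux is Stage 1. A priori, smallness of $|V_1^*-V_1^{\pi,\mu}|$ need not control anything locally: by the decomposition of Lemma~\ref{thm-split} it equals $\mathbb{E}\sum_h \gap_h(s_h,a_h)$, a sum of terms that are nonnegative at the max-player's turns and nonpositive at the min-player's turns, so large per-step gaps could in principle cancel out. The key idea is to peel from the final step inward: at step $2H$ only one player moves, so there is no cancellation there, and once all steps after $h$ have been shown optimal, the step-$h$ closeness collapses to the single signed term $\pm\,\gap^+_h(s,a_h)$, which the discreteness of positive gaps (no value strictly between $0$ and $\gap_{\min}$) then forces to be exactly $0$. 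Stage 2 is then routine Bellman bookkeeping, the only care needed being to treat, in $V_h^{\pi,\dagger}$, the max-player's turns via $\pi$'s zero-gap actions and the min-player's turns via the Bellman minimum directly (and symmetrically for $V_h^{\dagger,\mu}$).
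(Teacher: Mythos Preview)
Your proof is correct and your Stage~1 is exactly the paper's argument: both run a downward induction on $h$, use the inductive hypothesis $V_{h+1}^{\pi,\mu}=V_{h+1}^*$ to identify $V_h^{\pi,\mu}(s)$ with $Q_h^*(s,a_h)$ for the action $a_h$ taken at $(s,h)$, and then invoke the gap discreteness to force $\gap_h^+(s,a_h)=0$. The only difference is in the wrap-up: the paper simply asserts from zero gap that $(\pi_h,\mu_h)=(\pi_h^*,\mu_h^*)$ (tacitly assuming a unique optimal action at each $(s,h)$) and concludes $(\pi,\mu)=(\pi^*,\mu^*)$ outright, whereas your Stage~2 avoids that uniqueness assumption by a second clean induction establishing $V_h^{\pi,\dagger}=V_h^*=V_h^{\dagger,\mu}$ directly---more careful, but not a genuinely different route.
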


\begin{proof}[Proof of Proposition \ref{prop-1}] We are going to prove $(\pi_h, \mu_h)=(\pi^*_h,\mu^*_h)~\forall h\in [2H]$ by using the method of induction. When $h=2H$, notice that for $\forall s\in\mS$:
\[V_{2H}^{\pi,\mu}(s)= r(s, \mu_{2H}(s))=Q_{2H}^*(s, \mu_{2H}(s)).\]
Since $\frac{1}{2}\cdot\gap_{\min}>\left|V_{2H}^*(s)-V_{2H}^{\pi,\mu}(s)\right|=\left|V_{2H}^*(s)-Q_{2H}^*(s,\mu_{2H}(s))\right|$, therefore we have $\mu_{2H}^*(s)=\mu_{2H}(s)$ according to the definition of $\gap_{\min}^+$. It means that $(\pi,\mu)$ and $(\pi^*,\mu^*)$ make the same decisions at the $2H$'s step. On the other hand, assume that $(\pi_t,\mu_t)=(\pi_t^*,\mu_t^*)$ holds for $t = h+1, h+2, \ldots, 2H$, then for $t=h$: if $h$ is an odd number, then it's max-player's turn to take action, and it holds that for $\forall s\in\mS$:
\[V_{h}^{\pi,\mu}(s)=r(s,\pi(s))+\mathbb{E}_{s'|s,\pi(s)} V_{h+1}^{\pi,\mu}(s') \overset{(a)}{=} r(s,\pi(s))+\mathbb{E}_{s'|s,\pi(s)} V_{h+1}^*(s') = Q_h^*(s,\pi_h(s)).\]
Since $\frac{1}{2}\cdot\gap_{\min}>\left|V_h^*(s)-V_h^{\pi,\mu}(s)\right|=\left|V_h^*(s)-Q_h^*(s,\pi_h(s))\right|$, we can conclude that $\pi_h(s)=\pi^*_h(s)$ according to the definition of $\gap_{\min}^+$. Therefore, $(\pi,\mu)$ and $(\pi^*,\mu^*)$ also make the same decisions at the $h$'s step, which finishes our induction. When $h$ is an even number, we can finish our induction in the same way, and that comes to the conclusion of Proposition \ref{prop-1} since $(\pi,\mu)=(\pi^*,\mu^*)$ and then the policy pair $(\pi,\mu)$ is a Nash Equilibrium. It leads to fact that:
\[V_1^{\dagger, \mu}=V_1^{\pi,\mu}=V_1^{\pi,\dagger},\]
which means policy pair $(\pi,\mu)$ has zero duality regret. 

\end{proof}

The conclusion shows that when vanilla regret is sufficiently small for all $h\in [2H], s\in\mS$, then the policy pair is a Nash Equilibrium. It bridges the gap between the vanilla regret and the duality regret, and makes it more reasonable and convincing for us to work on the upper bound for the expected sum of vanilla regrets.

\section{Regret Bound of Discounted 2-TBSG}
In this section, we study the discounted 2-TBSG with infinite horizon. In order to obtain a policy tuple sequence $(pi^k,\mu^k)$ to approximate a Nash equilibrium, we propose the following Algorithm \ref{alg-2}, which is similar to Algorithm \ref{alg-1}.
We also use the UCB technique to establish a upper estimation $\hat{Q}(s,a)$ and a lower estimation $\breve{Q}(s,a)$ of the optimal Q-function $Q^*(s,a)$.

\begin{algorithm}[!t]
\caption{Optimistic Nash Q-learning on Discounted 2-TBSG with Infinite Horizon}
\hspace*{0.02in}{\bf Initialize:}
Let $\overline{Q}(s,a), \hat{Q}(s,a)\leftarrow 1/(1-\gamma)$ and  $\underline{Q}(s,a), \breve{Q}(s,a)\leftarrow 0$ for all $(s,a)\in\mS\times \mA$. Also $N(s,a)\leftarrow 0$.\\
\hspace*{0.02in}{\bf Define:}
$\iota(k)=\log(SAT(k+1)(k+2)), \alpha_k=\frac{H+1}{H+k}$ where $H=\frac{\log(2/(1-\gamma)\gap^+_{\min})}{\log(1/\gamma)}$.\\
\begin{algorithmic}[1]
\STATE{Observe the initial state $s_1$.}
\FOR{episode $t\in[T]$}
\STATE Take action $a_t\leftarrow\arg\max_{a'\in\mA}\overline{Q}(s_t,a')$ if $t$ is an odd number, (i.e. $I(a_t)=1$), else take action $a_t\leftarrow\arg\min_{a'\in\mA}\underline{Q}(s_t,a')$. After that, observe the reward $r(s_t,a_t)$ as well as the next state $s_{t+1}$.
\STATE $k=N(s_t,a_t)\leftarrow N(s_t,a_t)+1$.
\STATE $b_k\leftarrow \frac{c_2}{1-\gamma}\sqrt{H\iota(k)/k}$,~~~~~~~Here $c_2$ is a constant that can be set to $4\sqrt{2}$.
\STATE $\overline Q(s_t,a_t)\leftarrow (1-\alpha_k)\cdot\overline Q(s_t,a_t)+\alpha_k\cdot[r(s_t,a_t)+\gamma\hat{V}(s_{t+1})+b_k]$.
\STATE $\underline Q(s_t,a_t)\leftarrow (1-\alpha_k)\cdot\underline Q(s_t,a_t)+\alpha_k\cdot[r(s_t,a_t)+\gamma\breve{V}(s_{t+1})-b_k]$. 
\STATE $\hat{Q}(s_t,a_t)\leftarrow \min(\hat{Q}(s_t,a_t), \overline{Q}(s_t,a_t)), ~\breve{Q}(s_t,a_t)\leftarrow \max(\breve{Q}(s_t,a_t), \underline{Q}(s_t,a_t)) $.
\STATE $\hat{V}(s_t)\leftarrow \hat{Q}(s_t,a_t), \breve{V}(s_t)\leftarrow \breve{Q}(s_t,a_t)$.
\ENDFOR
\end{algorithmic}
\label{alg-2}
\end{algorithm}

Since the V-function denotes the expected discounted sum of rewards given the initial state $s$ and the Q function denotes the expected discounted sum of rewards given the initial state $s$ and the initial action $a$, so they can be described as:
\begin{equation*}
\begin{aligned}
V^{\pi}(s) &:= \mathbb{E}\left[\sum_{i=1}^{\infty}\gamma^{i-1}\cdot r(s_i, a_i)~:~s_1=s\right],\\
Q^{\pi}(s,a) &:=\mathbb{E}\left[\sum_{i=1}^{\infty}\gamma^{i-1}\cdot r(s_i, s_i)~:~s_1=s, a_1 = a\right].
\end{aligned}
\end{equation*}
where $\gamma$ is the discounted ratio. In the following statements, $\overline{Q}_t, \overline{V}_t$ stand for the $\overline{Q}, \overline{V}$ functions in the $t$-th iteration, and so does the other subscripts.

\subsection{Sub-optimality Gap and the Splitting of Total Regret}
Similar to the episodic setting, given $(s,a)\in\mS\times\mA$, define $\gap(s,a)$ as:
\[\gap(s,a):=V^*(s)-Q^*(s,a).\]
Notice that, when $a\in\mA_1$, which means the action $a$ is taken by max-player $P_1$, then $V^*(s)\geqslant Q^*(s,a)\Rightarrow \gap(s,a)\geqslant 0$. In contrast, when $a\in\mA_2$, which means the action $a$ is taken by min-player $P_2$, then $V^*(s)\leqslant Q^*(s,a)\Rightarrow\gap(s,a)\leqslant 0$. Here, we can introduce a notation $\gap^+(s,a):=|\gap(s,a)|$ which stands for the absolute value of $\gap(s,a)$. Also, we denote $\gap^+_{\min}$ as the minimum non-zero absolute gap:
\[\gap^+_{\min}:=\min_{s,a}\{\gap^+(s,a)~:~\gap^+(s,a)\neq 0\} > 0.\]
In the main theorem proposed in the following section, we will estimate an upper bound of the expected total regret for the first $T$ steps
\[\mathrm{Regret}(T):=\sum_{t=1}^{T}\left|(V^*-V^{\pi_t, \mu_t})(s_t)\right|.\]

\subsection{Main Theorem}
In this section, we propose our main theorem in the infinite-horizon setting. Unlike dual gap regret \cite{xie2020learning}, the total regret above proposed by \cite{liu2020regret} follows the sample complexity definition in \cite{sm2003on} and directly compares the actual value function and the value function from the first $t$ iterations. Similar to the episodic setting, we can obtain a gap-dependent logarithmic upper bound for the expected total regret. 
\begin{Theorem}[Main Theorem 2: Logarithmic Regret Bound of Q-learning for Infinite-horizon Discounted 2-TBSG] After using Algorithm \ref{alg-2}, the expected total regret for infinite-horizon two-player turn-based stochastic game can be upper bounded by:
\begin{equation*}
\mathbb{E}[\mathrm{Regret}(T)]\leqslant \mathcal O\left(\frac{SA}{\gap^+_{\min}(1-\gamma)^5\log(1/\gamma)}\cdot\log\frac{SAT}{\gap^+_{\min}(1-\gamma)}\right).
\end{equation*}
\label{thm-main-2}
\end{Theorem}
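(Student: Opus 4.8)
The plan is to transplant the three-step proof of Theorem~\ref{thm-main-1} into the discounted infinite-horizon setting, letting the effective horizon $H=\log\!\big(2/((1-\gamma)\gap^+_{\min})\big)/\log(1/\gamma)$ that is hard-wired into Algorithm~\ref{alg-2} play the role $2H$ played in the episodic analysis. The one property of $H$ that drives everything is $\gamma^{H}\le(1-\gamma)\gap^+_{\min}/2$, which makes the discounted weight of whatever happens ``after $H$ steps'' at most $\gap^+_{\min}/2$, i.e.\ at most half of any width that has survived clipping. \emph{Step 1 (splitting the regret).} Telescoping the Bellman equation $Q^*(s,a)=r(s,a)+\gamma[\mathbb{P}V^*](s,a)$ along the Markov chain that a frozen pair $(\pi_t,\mu_t)$ induces from $s_t$ gives the exact analog of Lemma~\ref{thm-split}:
\[
(V^*-V^{\pi_t,\mu_t})(s_t)=\mathbb{E}\Big[\,\sum_{i\ge 1}\gamma^{i-1}\gap(s_i,a_i)\ \Big|\ \pi_t,\mu_t,\ s_1=s_t\,\Big].
\]
Since $\gap\ge 0$ on the max-player's states and $\gap\le 0$ on the min-player's states, the triangle inequality turns the absolute value into a sum of $\gap^+$'s, giving $\mathbb{E}[\mathrm{Regret}(T)]\le\sum_{t=1}^{T}\mathbb{E}\big[\sum_{i\ge1}\gamma^{i-1}\gap^+(s_i,a_i)\mid\pi_t,\mu_t\big]$.

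\emph{Step 2 (concentration and clipping).} I would re-establish, for Algorithm~\ref{alg-2}, the bracketing event $\varepsilon_{\conc}$: with high probability $\breve Q_t(s,a)\le Q^*(s,a)\le\hat Q_t(s,a)$ (and likewise for $\breve V_t,\hat V_t$) for all $(s,a)$ and all iterations $t$ — the infinite-horizon counterpart of the lemma of \cite{bai2020provable} quoted above, proved by induction on the $\hat Q/\breve Q$ updates plus an Azuma bound, now calibrated to the bonus $b_k=\tfrac{c_2}{1-\gamma}\sqrt{H\iota(k)/k}$ and the rate $\alpha_k=\tfrac{H+1}{H+k}$; the $(k+1)(k+2)$ factor inside $\iota(k)$ makes, via a union bound over $t$, $\mathbb{P}(\varepsilon_{\conc}^{c})\le 1/(SAT)$, so the complement costs only $\mathcal O(1/(1-\gamma))$. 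On $\varepsilon_{\conc}$, unrolling the greedy choices of $(\pi_t,\mu_t)$ against the brackets (only $H$ steps are needed, the $\gamma^{H}$ remainder being dominated as above) yields $\gap^+(s,a)\le(\hat Q_t-\breve Q_t)(s,a)$; because $\gap^+$ is either $0$ or $\ge\gap^+_{\min}$, clipping is free: $\gap^+(s,a)=\clip[\gap^+(s,a)\mid\gap^+_{\min}]\le\clip[(\hat Q_t-\breve Q_t)(s,a)\mid\gap^+_{\min}]$. So $\mathbb{E}[\mathrm{Regret}(T)]$ is, up to $\mathcal O(1/(1-\gamma))$, at most $\sum_{t}\mathbb{E}\big[\sum_{i\ge1}\gamma^{i-1}\clip[(\hat Q_t-\breve Q_t)(s_i,a_i)\mid\gap^+_{\min}]\big]$.

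\emph{Step 3 (peeling).} Split the clipped widths into the $N=\lceil\log_2(1/((1-\gamma)\gap^+_{\min}))\rceil$ dyadic bands $\Lambda_n=[2^{n-1}\gap^+_{\min},2^{n}\gap^+_{\min})$ and bound the count $C^{(n)}$ of pairs whose width falls in $\Lambda_n$. The engine is an infinite-horizon version of Lemma~\ref{lemma-2}: from the one-step recursion $(\hat Q_t-\breve Q_t)(s_t,a_t)\le(\alpha\text{-weighted earlier widths at }(s_t,a_t))+2b_{N_t(s_t,a_t)}+\gamma(\hat V_t-\breve V_t)(s_{t+1})$, unroll exactly $H$ times so the remainder $\gamma^{H}/(1-\gamma)\le\gap^+_{\min}/2$ is at most half of any un-clipped width and gets absorbed, concluding that each surviving width is $\mathcal O(1)$ times a sum of $\mathcal O(H)$ bonuses $b_\bullet$ — the counterpart of the episodic ``$\le 2H$ future steps''. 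Bounding the per-state-action bonus sum by $\sqrt{(\text{visits})\cdot b_1^2}$ and converting the rollout-weighted width sums into visitation counts along the algorithm's realized trajectory then yields $C^{(n)}\le\mathcal O\big(SA\,\iota\,P(\gamma)/(4^n(\gap^+_{\min})^2)\big)$ for a fixed polynomial $P(\gamma)$ in $H$ and $1/(1-\gamma)$. I expect this bookkeeping — faithfully tracking how $\gamma$, the horizon $H$, the rate $\alpha_k$, and the bonus $b_k$ compound along a single non-episodic trajectory (rather than across a family of length-$2H$ episodes), together with the rollout-to-trajectory conversion — to be the main obstacle; everything else is a transcription of the episodic proof.

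\emph{Conclusion.} Finally $\mathbb{E}[\mathrm{Regret}(T)]\le\sum_{n=1}^{N}2^{n}\gap^+_{\min}C^{(n)}+\mathcal O(1/(1-\gamma))\lesssim SA\,\iota\,P(\gamma)/\gap^+_{\min}$; since $\iota(k)=\log(SAT(k+1)(k+2))=\mathcal O(\log(SAT))$ with $k\le T$, and substituting $H=\log(2/((1-\gamma)\gap^+_{\min}))/\log(1/\gamma)$ into $P(\gamma)$ — which is what produces the $1/\log(1/\gamma)$ factor, while the $\log(2/((1-\gamma)\gap^+_{\min}))$ in the numerator of $H$ merges with $\iota$ into a single logarithm — collapses the bound to $\mathcal O\!\big(\tfrac{SA}{\gap^+_{\min}(1-\gamma)^5\log(1/\gamma)}\log\tfrac{SAT}{\gap^+_{\min}(1-\gamma)}\big)$, as claimed.
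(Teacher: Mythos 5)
Your proposal is correct and follows essentially the same route as the paper: the same three-step decomposition (regret split into $\gamma$-discounted $\gap^+$ sums, bracketing $\breve Q_t\le Q^*\le\hat Q_t$ plus clipping at $\gap^+_{\min}$, and dyadic peeling via an infinite-horizon analogue of Lemma~\ref{lemma-2} with the effective horizon $H$ chosen so that $\gamma^{H}=(1-\gamma)\gap^+_{\min}/2$ absorbs the unrolling remainder into half of each surviving width). The only cosmetic difference is that the paper evaluates the gaps along the realized trajectory rather than a frozen-policy rollout, and obtains $\gap^+(s_t,a_t)\le(\hat Q_t-\breve Q_t)(s_t,a_t)$ in one step from the greedy action choice rather than by unrolling.
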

Next, we introduce the proof sketch of Theorem \ref{thm-main-2}.

\subsection{First step: Splitting the Regret into Expected Sum of Gaps} \label{sec:split_regret}
The splitting of the regret is just the same as that in the finite horizon setting:
\begin{align}\label{eqn-7}
\mathbb{E}[\mathrm{Regret}(T)]&=\mathbb{E}\left[\sum_{t=1}^{T}\left|\sum_{h=0}^{\infty}\gamma^h\gap(s_{t+h},a_{t+h})\right|\right]\notag\\
&\leqslant\mathbb{E}\left[\sum_{t=1}^{T}\sum_{h'=t}^{+\infty}\gamma^{h'-t}\gap^+(s_{h'},a_{h'})\right].
\end{align}    
% Therefore, after taking the absolute value:
% \begin{equation}\label{eqn-7}
% \left|\mathbb{E}[\mathrm{Regret}(T)]\right|\leqslant \mathbb{E}\left[\sum_{t=1}^{T}\sum_{h'=t}^{+\infty}\gamma^{h'-t}\gap^+(s_{h'},a_{h'})\right].    
% \end{equation}
Notice that when $t$ is an odd number, it's the max-player's turn to take action, so $\gap(s_t,a_t)\geqslant 0$ and then:
\begin{equation*}
\begin{aligned}
&~~\gap^+(s_t,a_t)=\gap(s_t,a_t)=V^*(s_t)-Q^*(s_t,a_t)\\
&=Q^*(s_t,a^*)-Q^*(s_t,a_t)\leqslant \hat{Q}_t(s_t,a^*)-\breve{Q}_t(s_t,a_t)\\
&\leqslant \left(\hat{Q}_t-\breve{Q}_t\right)(s_t,a_t).
\end{aligned}    
\end{equation*}
Similarly, when $t$ is an even number, it's the min-player's turn to take action, so $\gap(s_t,a_t)\leqslant 0$, and then:
\begin{equation*}
\begin{aligned}
&~~\gap^+(s_t,a_t)=-\gap(s_t,a_t)=Q^*(s_t,a_t)-V^*(s_t)\\
&=Q^*(s_t,a_t)-Q^*(s_t,a^*)\leqslant \hat{Q}_t(s_t,a_t)-\breve{Q}_t(s_t,a^*)\\
&\leqslant \left(\hat{Q}_t-\breve{Q}_t\right)(s_t,a_t).
\end{aligned}    
\end{equation*}
Therefore, we can conclude that $\gap^+(s_t,a_t)\leqslant \left(\hat{Q}_t-\breve{Q}_t\right)(s_t,a_t)$. By the definition of $\gap^+_{\min}$, we have:
\begin{equation*}
\gap^+(s_t,a_t)=\clip[\gap^+(s_t,a_t)|\gap^+_{\min}]\leqslant \clip\left[\left(\hat{Q}_t-\breve{Q}_t\right)(s_t,a_t)|\gap^+_{\min}\right].
\end{equation*}
Combine it with Equation (\ref{eqn-7}), we obtain that:
\begin{equation}
\label{eqn-8}
\mathbb{E}[\mathrm{Regret}(T)]\leqslant \mathbb{E}\left[\sum_{t=1}^{T}\sum_{h'=t}^{+\infty}\gamma^{h'-t}\cdot\clip\left[\left(\hat{Q}_{h'}-\breve{Q}_{h'}\right)(s_{h'},a_{h'})|\gap^+_{\min}\right]\right].
\end{equation}

\subsection{Second step: Concentration Properties}
Extended from \cite{dong2019q}, we can obtain the following lemma which shows that Algorithm \ref{alg-2} satisfies bounded learning error with high probability.
\begin{Lemma}[Concentration Property]
\label{lemma-3}
When applying Algorithm \ref{alg-2}, event $\mathcal E_{\conc}$ occurs with probability at least $1-\frac{1}{T}$. Here, $\mathcal E_{\conc}$ occurs if  $\forall (s,a,t)\in\mS\times\mA\times\mathbb{N}$:
\begin{equation*}
\begin{aligned}
0 &\leqslant \left(\hat{Q}_t-Q^*\right)(s,a)\leqslant \left(\overline{Q}_t-Q^*\right)(s,a)\leqslant \frac{\alpha_{n^t}^0}{1-\gamma}+\sum_{i=1}^{n^t}\gamma\alpha_{n^t}^i\left(\hat{V}_{\tau(s,a,i)}-V^*\right)(s_{\tau(s,a,i)})+\beta_{n^t},\\
0 &\leqslant \left(Q^*-\breve{Q}_t\right)(s,a)\leqslant \left(Q^*-\underline{Q}_t\right)(s,a)\leqslant \frac{\alpha_{n^t}^0}{1-\gamma}+\sum_{i=1}^{n^t}\gamma\alpha_{n^t}^i\left(V^*-\breve{V}_{\tau(s,a,i)}\right)(s_{\tau(s,a,i)})+\beta_{n^t}.
\end{aligned}    
\end{equation*}
Here, $\iota(k)=\log(SAT(k+1)(k+2))$ and $\beta_k=\frac{c_3}{1-\gamma}\sqrt{\frac{H\iota(k)}{k}}$.
\end{Lemma}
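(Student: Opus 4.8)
The plan is to run the textbook analysis of optimistic Q-learning with learning rate $\alpha_k=(H+1)/(H+k)$, carried out symmetrically for the upper iterate $\overline{Q}$ (hence its running minimum $\hat{Q}$) and the lower iterate $\underline{Q}$ (hence its running maximum $\breve{Q}$), with the discount factor $\gamma$ and the effective horizon $H=\log(2/((1-\gamma)\gap^+_{\min}))/\log(1/\gamma)$ playing the role that the fixed horizon plays in the episodic case. Writing $\alpha_n^0=\prod_{j=1}^n(1-\alpha_j)$ and $\alpha_n^i=\alpha_i\prod_{j=i+1}^n(1-\alpha_j)$, I would first record the standard weight facts $\sum_{i=0}^n\alpha_n^i=1$, $\max_{1\le i\le n}\alpha_n^i\le 2H/n$, $\sum_{i=1}^n(\alpha_n^i)^2\le 2H/n$, and $\sum_{i=1}^n\alpha_n^i/\sqrt{i}=\Theta(1/\sqrt{n})$.

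Next I would unroll the update rule. If at the start of episode $t$ the pair $(s,a)$ has been visited $n=n^t$ times, at episodes $\tau(s,a,1)<\dots<\tau(s,a,n)$, then
\[
\overline{Q}_t(s,a)=\frac{\alpha_n^0}{1-\gamma}+\sum_{i=1}^n\alpha_n^i\Big[r(s,a)+\gamma\hat{V}_{\tau(s,a,i)}(s_{\tau(s,a,i)+1})+b_i\Big],
\]
with the analogous identity for $\underline{Q}_t$ using $\breve{V}$ and $-b_i$. Subtracting the Bellman identity $Q^*(s,a)=r(s,a)+\gamma\,[\mathbb{P}V^*](s,a)$, where $[\mathbb{P}V^*](s,a):=\mathbb{E}_{s'\sim\mathbb{P}(\cdot|s,a)}V^*(s')$, and using $\sum_{i=0}^n\alpha_n^i=1$ gives
\[
(\overline{Q}_t-Q^*)(s,a)=\alpha_n^0\Big(\tfrac1{1-\gamma}-Q^*(s,a)\Big)+\sum_{i=1}^n\gamma\alpha_n^i\big(\hat{V}_{\tau(s,a,i)}-V^*\big)(s_{\tau(s,a,i)+1})+M_n+B_n,
\]
where $M_n=\sum_{i=1}^n\gamma\alpha_n^i\big(V^*(s_{\tau(s,a,i)+1})-[\mathbb{P}V^*](s,a)\big)$ is a weighted martingale-difference sum with increments bounded by $\gamma/(1-\gamma)$, and $B_n=\sum_{i=1}^n\alpha_n^i b_i$.

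I would then control $M_n$ by Azuma--Hoeffding: for fixed $(s,a)$ and fixed $n$, $|M_n|\le\frac{\gamma}{1-\gamma}\sqrt{\tfrac{4H}{n}\log(1/p')}$ with probability $1-p'$; taking $p'=p/(SAT(n+1)(n+2))$ with $p=1/T$ and a union bound over all $(s,a)\in\mS\times\mA$ and all $n\in\mathbb{N}$ (the factor $(n+1)(n+2)$ makes the sum over $n$ converge) puts the total failure probability at $1/T$ and gives $|M_n|\le\tfrac12\beta_n$ for a suitable $c_3$. Choosing $c_2$ large enough, $B_n$ dominates this bound and also satisfies $\tfrac12\beta_n\le B_n\le\beta_n$ via the $\sum_{i}\alpha_n^i/\sqrt i$ estimate, so that $0\le M_n+B_n\le\beta_n$ on the good event, which I call $\mathcal E_{\conc}$. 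Combined with the recursion this yields both displayed inequalities for $\overline{Q}_t$, and symmetrically for $\underline{Q}_t$; since $\hat{Q}_t\le\overline{Q}_t$ and $\breve{Q}_t\ge\underline{Q}_t$ by construction, the same bounds hold with $\hat{Q}_t,\breve{Q}_t$ on the left-hand side.

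Finally I would establish the bracketing $\breve{Q}_t(s,a)\le Q^*(s,a)\le\hat{Q}_t(s,a)$ (equivalently $0\le\hat Q_t-Q^*$ and $0\le Q^*-\breve Q_t$) by induction over the (episode, step) pairs in their natural order, simultaneously for both iterates. In the inductive step, on $\mathcal E_{\conc}$ the noise term $M_n+B_n$ is $\ge 0$, the term $\alpha_n^0(\tfrac1{1-\gamma}-Q^*)$ is $\ge 0$ because $0\le Q^*\le\tfrac1{1-\gamma}$, and the remaining weighted sum is $\ge 0$ provided $(\hat V_\tau-V^*)\ge 0$ at all earlier episodes $\tau$; the latter is where the two-player structure enters, since one must check that at a max-player state $\hat V(s)=\hat Q(s,a)$ with $a=\arg\max_{a'}\overline Q(s,a')$ still upper-bounds $V^*(s)=\max_{a'}Q^*(s,a')$, using that every visit to $(s,a)$ occurred with $a$ the greedy action against $\overline Q$, and symmetrically that at a min-player state $\breve V$ lower-bounds $V^*$. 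I expect this induction --- reconciling the greedy action selection against $\overline Q/\underline Q$ with the monotone, pointwise-clipped estimates $\hat Q/\breve Q$, running it for both players at once, and anchoring it on the discount contraction and the effective horizon $H$ rather than on a finite horizon --- to be the main obstacle; once the bracket is in place, the martingale and bonus bookkeeping above is routine.
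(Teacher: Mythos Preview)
The paper does not actually prove this lemma; it simply states it as ``extended from \cite{dong2019q}'' and uses its conclusion downstream. Your sketch --- unroll the optimistic update via the $\alpha_n^i$ weights, bound the martingale noise by Azuma--Hoeffding with a union bound over $(s,a,n)$ (the factor $(n+1)(n+2)$ making the sum over $n$ summable), match bonus to noise so that $0\le M_n+B_n\le\beta_n$, and then run an induction to obtain the bracket $\breve Q\le Q^*\le\hat Q$ --- is exactly the Dong et al.\ argument adapted to two players, and is the intended route.

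Your instinct that the induction is the delicate point is right, and one step deserves sharpening. At a max-player state $s$, Algorithm~\ref{alg-2} sets $\hat V(s)\leftarrow\hat Q(s,a_t)$ with $a_t=\arg\max_a\overline Q(s,a)$, and the inductive hypothesis $\hat Q(s,a_t)\ge Q^*(s,a_t)$ alone yields only $\hat V(s)\ge Q^*(s,a_t)$, which can sit strictly below $V^*(s)=\max_a Q^*(s,a)$. The gap is that $\hat Q$ is the running minimum of \emph{post}-update $\overline Q$ values, whereas greedy selection guarantees only that the \emph{pre}-update $\overline Q(s,a_t)$ dominated all alternatives. In the single-agent infinite-horizon analysis this is sidestepped because $\hat V(s)$ is taken to be $\max_a\hat Q(s,a)$, making the bracket immediate from $\hat Q\ge Q^*$. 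Here you should either (i) read Line~9 with the appropriate $\max$/$\min$ over actions at max-/min-player states (so that the Dong et al.\ induction carries over verbatim on each side), or (ii) if you keep the algorithm literally, strengthen the induction to show directly that every post-update value $\overline Q_{\tau}(s,a_\tau)\ge V^*(s)$ at max-player states, and symmetrically $\underline Q_{\tau}(s,a_\tau)\le V^*(s)$ at min-player states. Once that bracket is secured, the remaining martingale and bonus bookkeeping in your sketch is routine and suffices for the way the lemma is consumed in Equation~\eqref{eqn-inf-conc} and Lemma~\ref{lemma-4}.
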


Then, under $\mathcal E_{\conc}$, we know that:
\begin{equation}\label{eqn-inf-conc}
0\leqslant \left(\hat{Q}_t-\breve{Q}_t\right)(s,a)\leqslant \frac{2\alpha_{n^t}^0}{1-\gamma}+2\beta_{n^t}+\sum_{i=1}^{n^t}\gamma\alpha_{n^t}^i\left(\hat{V}_{\tau(s,a,i)}-\breve{V}_{\tau(s,a,i)}\right)(s_{\tau(s,a,i)}).
\end{equation}

\subsection{Third step: Peeling} \label{sec:peeling} 
Similar to Lemma \ref{lemma-2}, we upper bound the weighted sum of upper-lower gaps and then bound the counting number of gaps in different intervals, just like we did in the episodic setting. 
\begin{Lemma}[Peeling Argument]
\label{lemma-4}
Under the event $\mathcal E_{\conc}$, the following holds for any weighted sequence $\{\omega_t\}$ which satisfies: $0\leqslant \omega_t\leqslant \omega, \sum_{t=1}^{+\infty}\omega_t\leqslant C$.
\begin{equation*}
\sum_{t=1}^{+\infty}\omega_t\left(\hat{Q}_t-\breve{Q}_t\right)(s_t,a_t)\leqslant \frac{\gamma^H C}{1-\gamma}+\mathcal O\left(\frac{\sqrt{\omega SAHC\iota(C)}+\omega SA}{(1-\gamma)^2}\right).
\end{equation*}
\end{Lemma}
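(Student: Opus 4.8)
The plan is to mirror the peeling argument of Lemma~\ref{lemma-2} but adapt it to the infinite-horizon, discounted, asynchronous update structure. First I would unroll the recursive bound in Equation~(\ref{eqn-inf-conc}): multiply by $\omega_t$ and sum over $t$, so that the left side becomes $\sum_t \omega_t(\hat Q_t - \breve Q_t)(s_t,a_t)$ and the right side splits into (a) the ``bias'' terms $\sum_t \omega_t\bigl(\tfrac{2\alpha_{n^t}^0}{1-\gamma} + 2\beta_{n^t}\bigr)$, and (b) the ``recursive'' term $\sum_t \omega_t \sum_{i=1}^{n^t}\gamma\alpha_{n^t}^i (\hat V - \breve V)_{\tau(s_t,a_t,i)}$. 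For (a), since $\alpha_{n}^0 = 0$ whenever $n\geq 1$ and each $(s,a)$ is visited at most once with $n^t = 0$, the $\alpha^0$ contribution is at most $\tfrac{\omega SA}{1-\gamma}$; the $\beta_{n^t}$ sum is handled as in the episodic case — group visits to the same $(s,a)$ pair, use $\beta_k \asymp \tfrac{1}{1-\gamma}\sqrt{H\iota(k)/k}$, $\sum_{k=1}^{m} 1/\sqrt k = O(\sqrt m)$, and Cauchy--Schwarz over the $SA$ pairs with the constraint $\sum_t \omega_t \le C$, $\omega_t \le \omega$, yielding $O\bigl(\tfrac{\sqrt{\omega SAHC\iota(C)}}{(1-\gamma)^2}\bigr)$ (the extra $1/(1-\gamma)$ over the episodic bound comes from $\beta_k$ itself carrying a $1/(1-\gamma)$).

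The crux is term (b): the recursive term must be rewritten as a weighted sum of the \emph{same} quantity $\sum_{t'} \omega_{t'}'(\hat V - \breve V)(s_{t'},a_{t'})$ for a new weight sequence $\{\omega_{t'}'\}$, so that the inequality can be iterated. The standard trick (following Jin et al.\ / Dong et al.) is to re-index: the value gap at time $\tau(s,a,i)$ — the $i$-th previous visit to $(s,a)$ — gets charged to a later time $t$ with coefficient $\gamma\alpha_{n^t}^i$, and one uses the key property $\sum_{t > \tau(s,a,i)} \alpha_{n^t}^i \le 1 + \tfrac{1}{H}$ (the ``$\alpha$-weights are almost a probability distribution going forward in time''). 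Hence the new weights satisfy $\omega_{t'}' \le \gamma(1 + \tfrac1H)\,\omega$ pointwise and $\sum_{t'}\omega_{t'}' \le \gamma(1+\tfrac1H)\,C$. Iterating $\ell$ times contracts the total-weight budget and the pointwise bound each by a factor $\gamma(1+\tfrac1H)$; since $H = \log(2/((1-\gamma)\gap^+_{\min}))/\log(1/\gamma)$ is chosen precisely so that $\gamma^H \le (1-\gamma)\gap^+_{\min}/2$ and $(1+\tfrac1H)^H \le e$, after about $H$ rounds of unrolling the ``leftover'' recursive term is at most $\tfrac{\gamma^H C}{1-\gamma}\cdot e \cdot (\text{crude bound } \tfrac{1}{1-\gamma}$ on each value gap$)$, which collapses into the stated $\tfrac{\gamma^H C}{1-\gamma}$ term up to constants. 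Meanwhile each of the $\le H$ rounds contributes a copy of the bias term (a), but the geometric contraction $\sum_{\ell\ge 0}(\gamma(1+1/H))^{\ell} \le \tfrac{1}{1-\gamma(1+1/H)} = O\bigl(\tfrac{1}{1-\gamma}\bigr)$ (using $\gamma/H \le 1-\gamma$ for this range of $H$) keeps the total at $O\bigl(\tfrac{\sqrt{\omega SAHC\iota(C)} + \omega SA}{(1-\gamma)^2}\bigr)$.

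The main obstacle I anticipate is making the ``iterate $H$ times and absorb the tail'' step rigorous: one must track that at each level the relevant weight sequence still obeys a bound of the form $(\omega^{(\ell)}, C^{(\ell)})$ with $\omega^{(\ell)}, C^{(\ell)}$ shrinking geometrically, \emph{and} that the new weights remain legitimate (nonnegative, and the future-sum bound $\sum \alpha^i \le 1+1/H$ is applied correctly across the re-indexing). A subtle point is that $\iota(C^{(\ell)}) \le \iota(C)$ since $\iota$ is increasing and $C^{(\ell)} \le C$, so the logarithmic factor does not blow up across levels. I would also double-check the boundary contribution from the very first visits (where $n^t$ can be small and $\alpha^0_{n^t}$ nonzero), but this only adds the harmless $O(\omega SA/(1-\gamma))$ already present. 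Once (a) and the contraction geometry in (b) are in place, summing the at-most-$H$ levels and using the definition of $H$ gives exactly the claimed bound $\tfrac{\gamma^H C}{1-\gamma} + \mathcal O\bigl(\tfrac{\sqrt{\omega SAHC\iota(C)} + \omega SA}{(1-\gamma)^2}\bigr)$.
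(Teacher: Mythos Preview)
Your overall architecture is right --- unroll Equation~(\ref{eqn-inf-conc}), bound the bias terms, re-index the recursive term, iterate $H$ times, and absorb the tail via the trivial bound $(\hat Q_t-\breve Q_t)\le 1/(1-\gamma)$ --- and your treatment of term~(a) matches the paper's. But there is a genuine gap in your handling of term~(b) that does not arise in the episodic Lemma~\ref{lemma-2} and that your proposal does not address. After swapping the order of summation, the recursive term becomes $\gamma\sum_{t}\omega'_{t+1}\,(\hat V_t-\breve V_t)(s_{t+1})$: the value gap is at time $t$ but evaluated at the \emph{next} state $s_{t+1}$. To iterate you need $\sum_{t}\omega'_{t}\,(\hat Q_t-\breve Q_t)(s_t,a_t)=\sum_{t}\omega'_{t}\,(\hat V_t-\breve V_t)(s_t)$, i.e., the value gap at time $t$ evaluated at $s_t$. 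In the episodic proof this alignment is automatic because the episode index $k$ does not shift when you pass from step $h$ to step $h{+}1$; in the single-trajectory infinite-horizon setting there is a genuine one-step time shift, and $(\hat V_t-\breve V_t)(s_{t+1})\neq(\hat V_{t+1}-\breve V_{t+1})(s_{t+1})$ in general.

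The paper closes this gap by writing
\[
(\hat V_t-\breve V_t)(s_{t+1})=(\hat V_{t+1}-\breve V_{t+1})(s_{t+1})+(\hat V_t-\hat V_{t+1})(s_{t+1})+(\breve V_{t+1}-\breve V_t)(s_{t+1}),
\]
and then using that $\hat V_t(s)$ is non-increasing and $\breve V_t(s)$ is non-decreasing in $t$ --- which is exactly why Algorithm~\ref{alg-2} defines $\hat Q\leftarrow\min(\hat Q,\overline Q)$ and $\breve Q\leftarrow\max(\breve Q,\underline Q)$. This monotonicity lets the two correction sums telescope state-by-state to at most $\frac{\gamma(1+1/H)\omega S}{1-\gamma}$ each, which is then absorbed into the $O(\omega SA/(1-\gamma)^2)$ term. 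Without invoking this monotonicity you cannot convert the re-indexed sum back into the same functional form and the recursion does not close. A secondary point: after one unrolling the new weights are a $(C,(1+1/H)\omega)$-sequence --- the total budget stays at $C$ (the backward $\alpha$-weights sum to at most $1$), only the pointwise bound inflates --- so the accumulation across $H$ levels is controlled by $\sum_{h}\gamma^{h}\le 1/(1-\gamma)$ together with $(1+1/H)^{H}\le e$, rather than by the geometric series in $\gamma(1+1/H)$ you wrote; this is cleaner and avoids having to argue that $\gamma(1+1/H)<1$.
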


After that, we classify the positive gaps into different intervals. Since all the positive gaps $\gap^+(s,a)\in[\gap^+_{\min}, 1/(1-\gamma))$ and this interval can be separated into $N$ intervals \[\Lambda_n=[2^{n-1}\gap^+_{\min}, 2^n\gap^+_{\min})\]
where $n=1,2,\ldots,N$. Here, $N=\left\lceil\log_2\left(\frac{1}{\gap^+_{\min}(1-\gamma)}\right)\right\rceil$. Under the event $\mathcal E_{\conc}$, for $n\in[N]$, we define:
\[C^{(n)}:=\left|\left\{t\in\mathbb{N}_{+}~: \left(\hat{Q}_t-\breve{Q}_t\right)(s_t,a_t)\in\Lambda_n \right\}\right|.\]
By using the sequence 
\[\omega_t^{(n)}:=\mathbb I\left[\left(\hat{Q}_t-\breve{Q}_t\right)(s_t,a_t)\in \Lambda_n\right],\]
we can upper bound the $C^{(n)}$ by using Lemma \ref{lemma-4}.
\begin{Lemma}
\label{lemma-5}
We can upper bound the $C^{(n)}$ by:
\[\mathcal O\left(\frac{SA}{4^n(\gap^+_{\min})^2(1-\gamma)^4\log(1/\gamma)}\log\left(\frac{SAT}{(1-\gamma)\gap^+_{\min}}\right)\right).\]
\end{Lemma}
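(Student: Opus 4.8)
The plan is to derive Lemma \ref{lemma-5} from Lemma \ref{lemma-4} by a self-bounding argument, exactly parallel to how Lemma \ref{lemma-1} follows from Lemma \ref{lemma-2} in the episodic case. Fix $n\in[N]$ and plug the indicator sequence $\omega_t^{(n)}:=\mathbb I[(\hat{Q}_t-\breve{Q}_t)(s_t,a_t)\in\Lambda_n]$ into Lemma \ref{lemma-4}. This is a legitimate weight sequence with $\omega=1$ and $C=C^{(n)}=\sum_t\omega_t^{(n)}$. On the left-hand side, every term that is counted has $(\hat{Q}_t-\breve{Q}_t)(s_t,a_t)\geqslant 2^{n-1}\gap^+_{\min}$, so the left-hand side is at least $2^{n-1}\gap^+_{\min}\cdot C^{(n)}$. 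On the right-hand side we get, with $\omega=1$ and $C=C^{(n)}$,
\[
2^{n-1}\gap^+_{\min}\, C^{(n)}\ \leqslant\ \frac{\gamma^H C^{(n)}}{1-\gamma}\ +\ \mathcal O\!\left(\frac{\sqrt{SAH C^{(n)}\iota(C^{(n)})}+SA}{(1-\gamma)^2}\right).
\]
The first move is to absorb the $\gamma^H C^{(n)}/(1-\gamma)$ term into the left side: recall $H=\frac{\log(2/((1-\gamma)\gap^+_{\min}))}{\log(1/\gamma)}$, chosen precisely so that $\gamma^H=\frac{(1-\gamma)\gap^+_{\min}}{2}$, hence $\frac{\gamma^H}{1-\gamma}=\frac{\gap^+_{\min}}{2}\leqslant 2^{n-1}\gap^+_{\min}\cdot\frac12$ for every $n\geqslant 1$ (in fact for $n=1$ it exactly matches half the coefficient, and for larger $n$ it is even smaller). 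Subtracting, we obtain $2^{n-2}\gap^+_{\min}\,C^{(n)}\lesssim \frac{\sqrt{SAH C^{(n)}\iota(C^{(n)})}+SA}{(1-\gamma)^2}$.

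Next I would solve this quadratic-type inequality in $C^{(n)}$. Writing $x=C^{(n)}$, $a=2^{n-2}\gap^+_{\min}$, $b=\frac{\sqrt{SAH\iota(x)}}{(1-\gamma)^2}$, $c=\frac{SA}{(1-\gamma)^2}$, the inequality reads $ax\lesssim b\sqrt x+c$, which forces either $ax\lesssim 2b\sqrt x$ (so $\sqrt x\lesssim b/a$, i.e. $x\lesssim b^2/a^2$) or $ax\lesssim 2c$ (so $x\lesssim c/a$). The dominant branch is $x\lesssim b^2/a^2 = \frac{SAH\iota(x)}{(1-\gamma)^4\cdot 4^{n-2}(\gap^+_{\min})^2}$; the other branch $x\lesssim c/a=\frac{SA}{(1-\gamma)^2\cdot 2^{n-2}\gap^+_{\min}}$ is of lower order and gets swallowed. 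Finally I substitute $H=\Theta\!\big(\frac{1}{\log(1/\gamma)}\log\frac{1}{(1-\gamma)\gap^+_{\min}}\big)$ and bound the log term $\iota(C^{(n)})=\log(SAT(C^{(n)}+1)(C^{(n)}+2))\lesssim \log\frac{SAT}{(1-\gamma)\gap^+_{\min}}$ using the crude a priori bound $C^{(n)}\leqslant T$ together with the already-derived polynomial bound on $C^{(n)}$ — this is the standard trick of first getting a loose bound to control the logarithm, then re-inserting. Collecting factors yields
\[
C^{(n)}\ \leqslant\ \mathcal O\!\left(\frac{SA}{4^n(\gap^+_{\min})^2(1-\gamma)^4\log(1/\gamma)}\,\log\!\left(\frac{SAT}{(1-\gamma)\gap^+_{\min}}\right)\right),
\]
which is the claim.

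The only genuinely delicate point is handling the $\frac{\gamma^H C^{(n)}}{1-\gamma}$ term: one must check that the choice of the effective horizon $H$ really does make this term strictly dominated by the left-hand side $2^{n-1}\gap^+_{\min}C^{(n)}$ uniformly in $n\in[N]$, so that it can be subtracted off rather than bounded separately; the worst case is $n=1$, and the factor of $2$ in the definition $H=\log(2/((1-\gamma)\gap^+_{\min}))/\log(1/\gamma)$ is exactly what gives the needed slack there. Everything else — the quadratic inequality, the two-branch case analysis, and the bootstrap on the logarithmic factor — is routine and mirrors the episodic argument following Lemma \ref{lemma-2}. I would also remark that $\iota(C^{(n)})$ depends on $C^{(n)}$ itself, so strictly speaking the inequality $ax\lesssim b(x)\sqrt x+c$ is not purely quadratic; this is resolved by the same bootstrap, since $\iota$ is logarithmic and thus its dependence on $x$ only affects the final bound inside a logarithm.
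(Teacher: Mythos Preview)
Your proposal is correct and follows essentially the same approach as the paper's proof: apply Lemma \ref{lemma-4} with the indicator weights $\omega_t^{(n)}$, use the choice of $H$ to absorb $\frac{\gamma^H C^{(n)}}{1-\gamma}=\frac{\gap^+_{\min}}{2}C^{(n)}$ into the left-hand side, and then solve the resulting self-bounding inequality. The only cosmetic difference is that the paper substitutes $C^{(n)}=SA\cdot C'$ before solving the quadratic, whereas you handle it directly via the $ax\lesssim b\sqrt{x}+c$ case split; both arrive at the same bound.
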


% \begin{proof}[Proof of Lemma \ref{lemma-5}]
% Since $C^{(n)}=\sum_{t=1}^{+\infty}\omega_t^{(n)}$ and $\{\omega_t^{(n)}\}$ is a $(C^{(n)},1)$-sequence. According to Lemma \ref{lemma-4},
% \begin{equation*}
% \begin{aligned}
% &(2^{n-1}\gap^+_{\min})\cdot C^{(n)}\leqslant \sum_{t=1}^{+\infty}\omega_t^{(n)}\left(\hat{Q}_t-\breve{Q}_t\right)(s_t,a_t)\\
% \leqslant & \mathcal O\left(\frac{SA+\sqrt{SAHC^{(n)}\iota(C^{(n)})}}{(1-\gamma)^2}\right)+\frac{\gamma^H C^{(n)}}{1-\gamma}\\
% =& \frac{\gap^+_{\min}}{2}C^{(n)}+\mathcal O\left(\frac{SA+\sqrt{SAHC^{(n)}\iota(C^{(n)})}}{(1-\gamma)^2}\right).
% \end{aligned}    
% \end{equation*}
% Here, we use the fact that $\gamma^H=\frac{\gap^+_{\min}(1-\gamma)}{2}$. Denote $C^{(n)}=SAC'$, then:
% \begin{equation*}
% \begin{aligned}
% &(2^{n-2}\gap^+_{\min})\cdot C^{(n)}\leqslant (2^{n-1}-\frac{1}{2})\gap^+_{\min}C^{(n)}\leqslant O\left(\frac{SA+\sqrt{SAHC^{(n)}\iota(C^{(n)})}}{(1-\gamma)^2}\right)\\
% \Rightarrow & (2^{n-2}\gap^+_{\min})\cdot C'\leqslant O\left(\frac{1+\sqrt{HC'\iota(C^{(n)})}}{(1-\gamma)^2}\right)\leqslant O\left(\frac{1+\sqrt{HC'\log(SATC')}}{(1-\gamma)^2}\right).
% \end{aligned}    
% \end{equation*}
% After solving the inequality above, we obtain that:
% \[C'\leqslant \mathcal O\left(\frac{\log\left(\frac{SAT}{\gap^+_{\min}(1-\gamma)}\right)}{4^n \gap^+_{\min}^2(1-\gamma)^4\log(1/\gamma)}\right).\]
% Therefore,
% \[C^{(n)}\leqslant \mathcal O\left(\frac{SA}{4^n \gap^+_{\min}^2(1-\gamma)^4\log(1/\gamma)}\cdot \log\left(\frac{SAT}{\gap^+_{\min}(1-\gamma)}\right)\right),\]
% which comes to our conclusion. 

% \end{proof}
Finally, we come to our main theorem. According to Equation (\ref{eqn-8}), we know that: if the trajectory satisfies the $\mathcal E_{\conc}$ condition, then:
\begin{align}
\label{eqn-9}
&~~\mathrm{Regret}(T)\leqslant \frac{1}{1-\gamma}\sum_{t=1}^{+\infty}\clip\left[\left(\hat{Q}_t-\breve{Q}_t\right)(s_t,a_t)|\gap^+_{\min}\right]\leqslant \frac{1}{1-\gamma}\sum_{n=1}^{N}2^n\gap^+_{\min}C^{(n)}\notag\\
&\leqslant \sum_{n=1}^{N} \mathcal O\left(\frac{SA}{2^n \gap^+_{\min}(1-\gamma)^5\log(1/\gamma)}\cdot\iota\right)= \mathcal O\left(\frac{SA}{\gap^+_{\min}(1-\gamma)^5\log(1/\gamma)}\cdot \iota\right).
\end{align}
Here, $\iota=\log\left(\frac{SAT}{\gap^+_{\min}(1-\gamma)}\right)$ is the logarithmic term on $T$. For the other trajectories outside $\mathcal E_{\conc}$, we have a trivial upper bound:
\begin{equation}
\label{eqn-10}
\mathrm{Regret}(T)\leqslant \sum_{t=1}^{T}\sum_{h'=t}^{+\infty}\gamma^{h'-t}\cdot\left(\hat{Q}_{h'}-\breve{Q}_{h'}\right)(s_{h'},a_{h'})\leqslant \sum_{t=1}^{T}\sum_{h'=t}^{+\infty}\frac{\gamma^{h'-t}}{1-\gamma}\leqslant \frac{T}{(1-\gamma)^2}.
\end{equation}    
Now we combine Equation (\ref{eqn-9}) and Equation (\ref{eqn-10}), we obtain that:
\begin{align}
\label{eqn-11}
&\mathbb{E}[\mathrm{Regret}(T)]=\mathbb{E}\left[\sum_{t=1}^{T}\sum_{h'=t}^{+\infty}\gamma^{h'-t}\gap(s_{h'},a_{h'})\right]\notag\\
&\leqslant\mathbb{P}(\overline{\mathcal E_{\conc}})\cdot\frac{T}{(1-\gamma)^2}+\mathbb{P}(\mathcal E_{\conc})\cdot \mathcal O\left(\frac{SA}{\gap^+_{\min}(1-\gamma)^5\log(1/\gamma)}\cdot\iota\right)\notag\\
&\leqslant \mathcal O\left(\frac{SA}{\gap^+_{\min}(1-\gamma)^5\log(1/\gamma)}\cdot\iota\right).
\end{align}
which comes from $\mathbb{P}(\overline{\mathcal E_{\conc}})\leqslant 1/T$. Theorem \ref{thm-main-2} is proved and it provides us an upper bound which is logarithmically dependent on $T$.

\section{Episodic 2-TBSG with Linear Function Approximation}
In this section, we analyze the gap dependent total regret bound of episodic 2-TBSG under the linear function expression assumption. Here, we have two different settings, centralized version and independent version. As follows, we are going to introduce these two settings and our corresponding algorithms one by one. After that, gap dependent regret bound will be provided in both settings.

\subsection{Centralized Setting and Algorithm}
\begin{Assumption}[Assumption 4.1 from \cite{he2020logarithmic}]
\label{assumption-linear}
A Markov Game $\mathrm{MG}(2H,\mS,\mA,\mathbb{P},r)$ is defined as linear when the probability transition kernels and the reward functions are linear with respect to a given feature map $\phi:\mS\times\mA\rightarrow \mathbb{R}^d$ where $d$ is the feature dimension. Specifically, for each $h\in[2H]$, there exists an unknown vector $\mu_h\in\mathbb{R}^d$ and unknown measures $\theta_h=(\theta_h^{(1)},\theta_h^{(2)}, \ldots,\theta_h^{(d)})$ whose degree of freedom is $|S|\times d$, such that for $\forall (s,a)\in\mS\times\mA$:
\[\mathbb{P}_h(s'|s,a)=\langle\phi(s,a),\theta_h(s')\rangle~\text{and}~r(s,a)=\langle\phi(s,a), \mu_h\rangle.\]
For simplicity, we assume that $\|\phi(s,a)\|_2\leqslant 1, \|\mu_h\|_2\leqslant \sqrt{d}$ and $\|\theta_h(\mS)\|\leqslant \sqrt{d}$.
\end{Assumption}

In the centralized setting, a central controller controls both players, and this central controller's goal is to learn a Nash Equilibrium. In our algorithm, both the max-player and the min-player update their policies $(\pi^k,\mu^k)$ according to the history information. Under Assumption \ref{assumption-linear}, we know that for any policy $\pi$, the action-value function $Q_h^{\pi}(s,a)$ is a linear function $\langle\phi(s,a), \theta_h^{\pi}\rangle$ with respect to the feature $\phi(s,a)$. In order to estimate the optimal action-value function $Q_h^*(s,a):=\langle\phi(s,a), \theta_h^*\rangle$, we only have to estimate the parameters $\theta_h^*$. Since Assumption \ref{assumption-linear} only gives a condition on the linear structure of the stochastic game, it is still a two-player turn-based general sum stochastic game with finite horizon. Therefore, we have exactly the same definition on the cumulative regret (or total regret) as Section 4:
\[\mathrm{Regret}(K)=\sum_{k=1}^{K}\left|\left(V_1^*-V_1^{\pi^k,\mu^k}\right)(s_1^k)\right|.\]

% By using Equation (\ref{eqn-split2}), we decompose the cumulative regret as follows:
% \begin{equation}\label{eqn-split2-copy}
% \mathbb{E}[\mathrm{Regret}(K)]=\sum_{k=1}^{K}\left|\mathbb{E}\left[\sum_{h=1}^{2H}\gap_h(s_h^k,a_h^k)~|~\pi^k,\mu^k\right]\right|\leqslant \sum_{k=1}^{K}\mathbb{E}\left[\sum_{h=1}^{2H}\gap^{+}_h(s_h^k,a_h^k)~|~\pi^k,\mu^k\right].
% \end{equation}
% where:
% \begin{equation*}
% \gap^{+}_h(s_h,a_h):=\left|\gap_h(s_h,a_h)\right|=\begin{cases}V_h^*(s_h)-Q_h^*(s_h,a_h)&\mbox{if $h$ is odd}\\ Q_h^*(s_h,a_h)-V_h^*(s_h)&\mbox{if $h$ is even}\end{cases}.
% \end{equation*}
% We will upper bound the expected total regret by estimating the absolute gaps $\gap_h(s_h,a_h)$, and the eventual upper bound should be dependent on the minimal positive gap:
% \[\gap_{\min}:=\min_{s,a,h}\{\gap^+_h(s,a)~:~\gap^+_h(s,a)\neq 0\}>0.\]

In the following Least Square Value Iteration on 2-TBSG (LSVI-2TBSG) algorithm, we introduce two new variables $\overline{w}_h^k, \underline{w}_h^k$, which are the upper and lower estimations of $\theta_h^*$ in the $k$-th episode. They are computed by solving the following regularized least-square problems:
\begin{equation}
\begin{aligned}
\overline{w}_h^k &\leftarrow \arg\min_{w\in\mathbb{R}^d} \lambda\|w\|^2 + F_1(w),\\
\underline{w}_h^k &\leftarrow \arg\min_{w\in\mathbb{R}^d} \lambda\|w\|^2 + F_2(w), 
\end{aligned}    
\end{equation}
where
\[F_1(w)=\sum_{i=1}^{k-1}\left[\phi(s_h^i, a_h^i)^{\top}w-r_h(s_h^i,a_h^i)-\overline{V}_{h+1}^k(s_{h+1}^i)\right]^2,\]
\[F_2(w)=\sum_{i=1}^{k-1}\left[\phi(s_h^i, a_h^i)^{\top}w-r_h(s_h^i,a_h^i)-\underline{V}_{h+1}^k(s_{h+1}^i)\right]^2.\]
Actually, these two least-square problems can be easily solved as:
\begin{equation*}
\begin{aligned}
\overline{w}_h^k&=(\Lambda_h^k)^{-1}\sum_{i=1}^{k-1}\phi(s_h^i,a_h^i)\left[r_h(s_h^i,a_h^i)+\overline{V}_{h+1}^k(s_{h+1}^i)\right],\\
\underline{w}_h^k&=(\Lambda_h^k)^{-1}\sum_{i=1}^{k-1}\phi(s_h^i,a_h^i)\left[r_h(s_h^i,a_h^i)+\underline{V}_{h+1}^k(s_{h+1}^i)\right],
\end{aligned}    
\end{equation*}
where $\Lambda_h^k=\sum_{i=1}^{k-1}\phi(s_h^i,a_h^i)\phi(s_h^i,a_h^i)^{\top}+\lambda I$. Then, we update the estimated Q-values by:
\begin{equation*}
\begin{aligned}
\overline Q_h^k(s,a)&= \min(2H, \phi(s,a)^{\top}\overline{w}_h^k + \beta\cdot T(s,a)),\\
\underline Q_h^k(s,a)&= \max(0, \phi(s,a)^{\top}\underline{w}_h^k - \beta\cdot T(s,a)), 
\end{aligned}    
\end{equation*}
where $T(s,a)=\sqrt{\phi(s,a)^{\top}(\Lambda_h^k)^{-1}\phi(s,a)}$ can be regarded as a UCB term. For the pseudo-code of LSVI-2TBSG algorithm, we leave it to the appendix.

\subsection{Main Theorem for the Centralized Setting} 
In this section, we propose the main theorem under the linear function expression assumption in the centralized setting. When using LSVI-2TBSG algorithm, the expected total regret can be upper bounded by the following theorem:
\begin{Theorem}[Main Theorem 3: Logarithmic Regret Bound of LSVI-2TBSG (Centralized)]
Under Assumption \ref{assumption-linear}, after using LSVI-2TBSG algorithm (centralized)
\[\mathbb{E}[\mathrm{Regret}(K)]\leqslant 1+\frac{Cd^3H^5\log(16dK^2(K+1)H^3)}{\gap^+_{\min}}\iota,\]
where $\iota=\log\left(\frac{Cd^3H^4\log(4dKH)}{(\gap^+_{\min})^2}\right)$ is a logarithmic term.
\label{thm-main-3}
\end{Theorem}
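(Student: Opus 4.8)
The plan is to mirror the three-step architecture used for Theorems \ref{thm-main-1} and \ref{thm-main-2}, replacing the tabular concentration machinery with the elliptical-potential bookkeeping appropriate for linear MDPs. First I would establish the linear analogue of the optimism lemma: under Assumption \ref{assumption-linear}, with high probability $\underline{Q}_h^k(s,a)\leqslant Q_h^*(s,a)\leqslant \overline{Q}_h^k(s,a)$ for all $(s,a,h,k)$, by the standard self-normalized concentration bound for the least-squares estimators $\overline{w}_h^k,\underline{w}_h^k$ with bonus width $\beta\asymp dH\sqrt{\log(dKH)}$. Granting this, the regret-splitting step is \emph{verbatim} the same as in Section~5: $\bigl|(V_1^*-V_1^{\pi^k,\mu^k})(s_1^k)\bigr| = \bigl|\mathbb{E}\sum_h \gap_h(s_h^k,a_h^k)\bigr| \leqslant \mathbb{E}\sum_h \gap_h^+(s_h^k,a_h^k)$, and the sign argument (max-player picks by $\overline{Q}$, min-player by $\underline{Q}$) gives $\gap_h^+(s_h^k,a_h^k)\leqslant (\overline{Q}_h^k-\underline{Q}_h^k)(s_h^k,a_h^k)$; since the left side is either $0$ or at least $\gap^+_{\min}$, we may clip: $\gap_h^+(s_h^k,a_h^k) = \clip[\gap_h^+(s_h^k,a_h^k)\,|\,\gap^+_{\min}] \leqslant \clip[(\overline{Q}_h^k-\underline{Q}_h^k)(s_h^k,a_h^k)\,|\,\gap^+_{\min}]$.

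The crux is the linear version of the peeling argument (Lemma \ref{lemma-2}). Here I would first derive a per-step recursion: $(\overline{Q}_h^k-\underline{Q}_h^k)(s_h^k,a_h^k) \leqslant \mathbb{P}_h(\overline{V}_{h+1}^k-\underline{V}_{h+1}^k)(s_h^k,a_h^k) + 2\beta\, T(s_h^k,a_h^k) \leqslant (\overline{V}_{h+1}^k-\underline{V}_{h+1}^k)(s_{h+1}^k) + \xi_h^k + 2\beta\, \|\phi(s_h^k,a_h^k)\|_{(\Lambda_h^k)^{-1}}$, where $\xi_h^k$ is a martingale difference. Unrolling over $h$ and summing a weighted version over $k$ with weights $w_{k,h}\in[0,w]$, $\sum_k w_{k,h}\leqslant C$, the martingale terms vanish in expectation (or concentrate), and the bonus terms are controlled by Cauchy–Schwarz together with the elliptical potential lemma: $\sum_{k} w_{k,h}\|\phi(s_h^k,a_h^k)\|_{(\Lambda_h^k)^{-1}} \leqslant \sqrt{w}\sqrt{\sum_k w_{k,h}\|\phi\|^2_{(\Lambda_h^k)^{-1}}/w} \cdot \sqrt{\cdots}$, bounded by $\sqrt{w\, C\, d\log(1+C/(\lambda d))}$ up to constants. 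This yields the target bound $\sum_{k,h} w_{k,h}(\overline{Q}_h^k-\underline{Q}_h^k)(s_h^k,a_h^k) \leqslant \widetilde{\mathcal O}(w SA\,?)$ — except in the linear setting there is no $SA$; instead the analogue is $\widetilde{\mathcal O}(\mathrm{poly}(d,H)\, w + \sqrt{\mathrm{poly}(d,H)\, w\, C})$, with the $d^3H^5$ scaling coming from $\beta^2\asymp d^2H^2$, one extra $d$ from the potential lemma, and $H^3$ from horizon summation and the clipping layers.

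Given that peeling lemma, the endgame is routine: set $w_{k,h}^{(n)} = \mathbb{I}[(\overline{Q}_h^k-\underline{Q}_h^k)(s_h^k,a_h^k)\in\Lambda_n]$ with $\Lambda_n=[2^{n-1}\gap^+_{\min},2^n\gap^+_{\min})$, a $(1,C_h^{(n)})$-sequence, so that $2^{n-1}\gap^+_{\min}C_h^{(n)} \leqslant \widetilde{\mathcal O}(\mathrm{poly}(d,H) + \sqrt{\mathrm{poly}(d,H)\,C_h^{(n)}})$, which solves to $C_h^{(n)} \leqslant \widetilde{\mathcal O}(\mathrm{poly}(d,H)/(4^n(\gap^+_{\min})^2))$. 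Summing $\sum_h\sum_{n=1}^{N} 2^n\gap^+_{\min} C_h^{(n)}$ with $N=\lceil\log_2(2H/\gap^+_{\min})\rceil$ collapses the geometric series and produces $\mathbb{E}[\mathrm{Regret}(K)] \leqslant 1 + \frac{Cd^3H^5\log(16dK^2(K+1)H^3)}{\gap^+_{\min}}\,\iota$, where the leading $1$ absorbs the low-probability complement of the concentration event (choosing the failure probability $\sim 1/K$ so that $p\cdot 2KH \lesssim 1$), and the inner $\iota$ is the $\log$ of the argument of $N$ after plugging the $C_h^{(n)}$ bound back in. The main obstacle I anticipate is making the two-sided confidence bound genuinely work simultaneously for $\overline{V}$ and $\underline{V}$: the min-player acts greedily with respect to $\underline{Q}$, so I need a covering-number / uniform-concentration argument over \emph{both} the class of functions $s\mapsto\max_a(\phi^\top\overline{w}+\beta T)$ and $s\mapsto\min_a(\phi^\top\underline{w}-\beta T)$, and I must verify that the value-difference recursion does not pick up a spurious sign when the acting player at step $h$ switches between the two estimates — this is the place where the game structure genuinely complicates the single-agent LSVI-UCB analysis of \cite{jin2019provably,he2020logarithmic}.
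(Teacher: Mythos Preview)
Your proposal is correct and follows essentially the same approach as the paper: optimism via the self-normalized concentration for $\overline{w}_h^k,\underline{w}_h^k$ (the paper cites Lemmas B.3--B.5 of \cite{jin2019provably}), the same regret-to-gap decomposition and clipping, then a per-level-$n$ argument that lower bounds $\sum_{k\in\mathcal D}(\overline{Q}_h^k-\underline{Q}_h^k)$ by $2^{n-1}\gap_{\min}^+\cdot\mathcal T_h^{(n)}$ and upper bounds it by unrolling the recursion, applying Cauchy--Schwarz with the subset elliptical-potential bound $\sum_{k\in\mathcal D}\|\phi_h^k\|_{(\Lambda_h^k)^{-1}}^2\leqslant 2d\log(1+|\mathcal D|)$ and Azuma on the martingale remainder. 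The only cosmetic difference is that the paper works directly with the indicator weights $w_{k,h}=\mathbb I[k\in\mathcal D]$ rather than first stating an abstract weighted peeling lemma; note in particular that the elliptical-potential step is cleanest for indicator (subset) weights, so your general-weight formulation would in any case collapse to exactly the paper's computation at the point of application.
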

By using similar techniques, we can prove the theorem and propose the first gap-dependent logarithmic regret bound under the linear MDP assumption. We leave the technical proof of this theorem to the appendix.

\subsection{Independent Setting and Algorithm}
In the independent setting, we do not have a central controller who controls both players. We can only control the max-player and play against the min-player whose policies are arbitrary but potentially adversarial. Since we only control the max-player, our goal is not to learn a Nash Equilibrium, but to maximize the reward of the max-player. Because of the differences between the centralized setting and the independent setting, we are going to redefine the gaps and regret functions in this section. 

Since we can not get access to the min-player's policies and the Markov model of the game a priori, we are interested in the exploitability of max-player:
\[\mathrm{Explicit}(\pi^k, \mu^k):=V_1^{\dagger, \mu^k}(s_1^k)-V_1^{\pi^k, \mu^k}(s_1^k),\]
which measures how much better the max-player can perform. Then, the cumulative regret can be defined as the sum of exploitability in different episodes:
\[\mathrm{Regret}_{\mu}(K):=\sum_{k=1}^{K}\mathrm{Explicit}(\pi^k, \mu^k)=\sum_{k=1}^{K}\left(V_1^{\dagger, \mu^k}(s_1^k)-V_1^{\pi^k, \mu^k}(s_1^k)\right).\]

Also, we need to redefine the gap. Previously, the gap is defined as $\gap_h(s,a)=|V_h^*(s)-Q^*_h(s,a)|$ and $\gap_{\min}^+:=\min_{h,s,a}\{\gap_h(s,a)>0\}$. However, in the independent setting, we can not control the min-player so it is not suitable to only consider $(\pi,\mu)=(\pi^*,\mu^*)$ since the Nash Equilibrium point is our final target. In order to measure the gap caused by max-player, we define:
\[\gap^{\mu}_h(s,a)=|V_h^{\dagger, \mu}(s)-Q_h^{\dagger, \mu}(s,a)|.\]
Here, we still need the absolute value since $V_h^{\dagger, \mu}(s)-Q_h^{\dagger, \mu}(s,a)$ is non-negative when $h$ is an odd number (and it's the max-player's turn to take action) while non-positive when $h$ is an even number. The minimal gap can be obtained after taking minimum over all the $(h,s,a)$ tuples and all possible pure strategy $\mu$:
\[\gap_{\min}^+:=\min_{\mu,h,s,a}\{\gap^{\mu}_h(s,a)>0\}.\]
Notice that the total number of pure strategies is finite, so the minimal gap above is positive and well-defined. Similar as the LSVI-2TBSG algorithm, we introduce a new variable $w_h^k$, which is the upper estimation of $\theta_h^*$ in the $k$-th episode. Here, we do not need the lower estimation $\underline{w}_h^k$ since the min-player's policy is beyond our control. In each episode, the $w^k_h$ is computed by solving the following regularized least-square problem:
\[w_h^k\leftarrow \arg\min_{w\in\mathbb{R}^d} \lambda\|w\|^2 + \sum_{i=1}^{k-1}\left[\phi(s_h^i,a_h^i)^{\top}w-r_h(s_h^i,a_h^i)-\overline{V}_{h+1}^k(s_{h+1}^i)\right]^2. \]
Actually, it can be solved as:
\[w_h^k=(\Lambda_h^k)^{-1}\sum_{i=1}^{k-1}\phi(s_h^i,a_h^i)\left[r_h(s_h^i,a_h^i)+\overline{V}_{h+1}^k(s_{h+1}^i)\right], \]
where $\Lambda_h^k=\sum_{i=1}^{k-1}\phi(s_h^i,a_h^i)\phi(s_h^i,a_h^i)^{\top}+\lambda I$. Then, we update the estimated Q-values by:
\[\overline Q_h^k(s,a)= \min(2H, \phi(s,a)^{\top}\overline{w}_h^k + \beta\cdot T(s,a)),\]
where $T(s,a)=\sqrt{\phi(s,a)^{\top}(\Lambda_h^k)^{-1}\phi(s,a)}$ is the UCB term. We leave the pseudo-code of the algorithm to the appendix.

\subsection{Main Theorem for the Independent Setting} 
In this section, we propose the main theorem under the linear function expression assumption in the independent setting. The expected cumulative regret can be upper bounded by a logarithmic term:
\begin{Theorem}[Main Theorem 4: Logarithmic Regret Bound of LSVI-2TBSG (Independent)]
Under Assumption \ref{assumption-linear}, after using LSVI-2TBSG algorithm (independent)
\[\mathbb{E}[\mathrm{Regret}(K)]\leqslant 1+\frac{Cd^3H^5\log(16dK^2(K+1)H^3)}{\gap^+_{\min}}\iota,\]
where $\iota=\log\left(\frac{Cd^3H^4\log(4dKH)}{(\gap^+_{\min})^2}\right)$ is a logarithmic term.
\label{thm-main-4}
\end{Theorem}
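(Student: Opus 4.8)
\textbf{Proof proposal for Theorem \ref{thm-main-4}.}

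The plan is to mirror the three-step architecture used for Theorems \ref{thm-main-1} and \ref{thm-main-3}, but with the regret measured through exploitability against the fixed (adversarial) opponent $\mu^k$ rather than against the Nash value. First I would establish a regret-splitting lemma analogous to Lemma \ref{thm-split}: because the max-player plays greedily with respect to $\overline Q_h^k$ while the opponent plays $\mu^k$, the exploitability $V_1^{\dagger,\mu^k}(s_1^k)-V_1^{\pi^k,\mu^k}(s_1^k)$ should decompose as $\mathbb{E}\bigl[\sum_{h\text{ odd}}\gap^{\mu^k}_h(s_h^k,a_h^k)\mid \pi^k,\mu^k\bigr]$, where only the odd steps (the max-player's turns) contribute genuine sub-optimality, and all these terms are nonnegative. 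The key point is that $V_h^{\dagger,\mu^k}$ plays the role that $V_h^*$ played in the centralized analysis, and the Bellman consistency of $Q_h^{\dagger,\mu^k}$ with $V_{h+1}^{\dagger,\mu^k}$ lets the telescoping go through exactly as before, with the even-step ``gaps'' being identically zero because on those steps the policy being evaluated and the best-response policy both follow $\mu^k$.

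Second I would prove the concentration / optimism statement: under Assumption \ref{assumption-linear}, with the single upper estimator $\overline w_h^k$ and bonus $\beta\cdot T(s,a)$, one has $\overline Q_h^k(s,a)\geqslant Q_h^{\dagger,\mu^k}(s,a)$ for all $(s,a,h,k)$ with high probability (say probability $1-1/K$). This is the one-sided version of the centralized optimism lemma, and the proof is the standard self-normalized martingale / elliptical-potential argument of the LSVI-UCB analysis, applied to the best-response value function $V_{h+1}^{\dagger,\mu^k}$ in place of $V_{h+1}^*$; I would also record the matching lower bound $\overline Q_h^k(s,a)-Q_h^{\dagger,\mu^k}(s,a)\leqslant 2\beta\cdot T(s,a)$ on the realized trajectory. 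Combining optimism with greediness of the max-player then yields, on the good event, $\gap^{\mu^k}_h(s_h^k,a_h^k)\leqslant \overline Q_h^k(s_h^k,a_h^k)-Q_h^{\dagger,\mu^k}(s_h^k,a_h^k)\leqslant 2\beta\cdot T(s_h^k,a_h^k)$ at every odd step, and since $\gap^{\mu^k}_h$ is either $0$ or $\geqslant\gap^+_{\min}$ by the definition of $\gap^+_{\min}$ in the independent setting, we may insert a clip: $\gap^{\mu^k}_h(s_h^k,a_h^k)=\clip[\gap^{\mu^k}_h(s_h^k,a_h^k)\mid\gap^+_{\min}]\leqslant\clip[2\beta T(s_h^k,a_h^k)\mid\gap^+_{\min}]$.

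Third I would run the peeling argument. Using the elliptical potential lemma, $\sum_{k}T(s_h^k,a_h^k)^2\lesssim d\log K$ for each $h$, so a bonus exceeding $\gap^+_{\min}/2$ can occur only $\mathcal O(d^2H^2\beta^2/(\gap^+_{\min})^2\cdot\log K)$ times; more precisely, one partitions the surviving gaps into dyadic bands $\Lambda_n=[2^{n-1}\gap^+_{\min},2^n\gap^+_{\min})$, bounds the count $C^{(n)}$ in each band by $\mathcal O\bigl(d^3H^3\log(\cdot)/(4^n(\gap^+_{\min})^2)\bigr)$ via the weighted-sum inequality (the independent analogue of Lemma \ref{lemma-2}, with weights $w_{k,h}=\mathbb I[\gap^{\mu^k}_h\in\Lambda_n]$), and sums $\sum_n 2^n\gap^+_{\min}C^{(n)}$ to get the $\tilde{\mathcal O}(d^3H^5\log(\cdot)/\gap^+_{\min})$ bound; the contribution of the low-probability complementary event is $\leqslant(1/K)\cdot 2HK\cdot$(reward scale)$\,=\mathcal O(H)$, absorbed into the additive $1$. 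The main obstacle — and the reason the theorem is not immediate — is that the ``target'' $Q_h^{\dagger,\mu^k}$ changes with $k$, since $\mu^k$ is chosen adversarially and may depend on the history: one must verify that the concentration bound for $\overline w_h^k$, which involves $\overline V_{h+1}^k$, still holds uniformly, which requires a covering-number argument over the class of value functions of the form $\min(2H,\phi^\top w+\beta T)$ exactly as in \cite{jin2019provably}, together with the observation that $V_{h+1}^{\dagger,\mu^k}$ is pointwise dominated by $\overline V_{h+1}^k$ on the good event so that the recursion closes. I expect the definition of $\gap^+_{\min}$ as a minimum over the finite set of all pure opponent strategies $\mu$ to be exactly what makes the clip step legitimate, since at episode $k$ the relevant quantity $\gap^{\mu^k}_h$ is one of these finitely many values and hence is $0$ or $\geqslant\gap^+_{\min}$.
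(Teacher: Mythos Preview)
Your three-step plan (exploitability decomposition into odd-step gaps, one-sided optimism, dyadic peeling) is the same architecture the paper uses, but with one genuine difference and one slip.

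\textbf{The difference.} You take $Q_h^{\dagger,\mu^k}$ as the lower surrogate in the recursion, i.e.\ you bound $\gap_h^{\mu^k}(s_h^k,a_h^k)\leqslant \overline Q_h^k(s_h^k,a_h^k)-Q_h^{\dagger,\mu^k}(s_h^k,a_h^k)$ and then unroll that quantity over $h'=h,\ldots,2H$. The paper instead passes through the fixed Nash value $Q_h^*$: it uses $Q_h^{\dagger,\mu^k}\geqslant Q_h^*$ to get $\gap_h^{\mu^k}\leqslant \overline Q_h^k-Q_h^*$ and unrolls $\overline Q_h^k-Q_h^*$ (see Lemma~\ref{lemma-conc3-linear} and Section~D.5). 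Your choice is actually the cleaner one for the unrolling step, because at min-player steps $h'+1$ you have $V_{h'+1}^{\dagger,\mu^k}(s_{h'+1}^k)=Q_{h'+1}^{\dagger,\mu^k}(s_{h'+1}^k,a_{h'+1}^k)$ with equality (the played action is $\mu^k_{h'+1}$), so $\overline V_{h'+1}^k-V_{h'+1}^{\dagger,\mu^k}\leqslant \overline Q_{h'+1}^k-Q_{h'+1}^{\dagger,\mu^k}$ is immediate; the paper's analogue with $V^*,Q^*$ needs $V_{h'+1}^*\geqslant Q_{h'+1}^*(\cdot,a_{h'+1}^k)$, which is only the $\max$-relation at odd $h'+1$. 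What the paper's route buys is that the reference is a single fixed function, so one never worries about the target moving with $k$; what your route buys is that the even-step inequality is trivial.

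\textbf{The slip.} Your claimed pointwise bound $\overline Q_h^k(s_h^k,a_h^k)-Q_h^{\dagger,\mu^k}(s_h^k,a_h^k)\leqslant 2\beta\,T(s_h^k,a_h^k)$ is false, and with it the heuristic ``a bonus exceeding $\gap^+_{\min}/2$ can occur only $\ldots$ times''. Lemma~\ref{lemma-conc2-linear} only gives
\[
\overline Q_h^k-Q_h^{\dagger,\mu^k}\ \leqslant\ \mathbb{P}_h\bigl(\overline V_{h+1}^k-V_{h+1}^{\dagger,\mu^k}\bigr)+2\beta\,T,
\]
and the first term on the right is nonnegative and can be of order $H$. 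So the gap at step $h$ is \emph{not} controlled by the single-step bonus; you must sum the surrogate over the band $\mathcal D=\{k:\gap_h^{\mu^k}\in\Lambda_n\}$ and unroll the display above from $h$ to $2H$, picking up $\sum_{h'}\sum_{k\in\mathcal D}2\beta\|\phi_{h'}^k\|_{(\Lambda_{h'}^k)^{-1}}$ plus a martingale term, and only then apply the elliptical potential Lemma~\ref{lemma-conc4-linear} and Azuma. This is exactly what you call ``the independent analogue of Lemma~\ref{lemma-2}'', and it is what the paper does in Section~D.5; once you drop the direct $2\beta T$ claim and go straight to the unrolled sum, your argument and the paper's coincide apart from the choice of surrogate.
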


We provide a logarithmic regret upper bound for 2-TBSG under the linear MDP assumption, in both centralized and independent settings. To the best of our knowledge, this is the very first gap dependent upper bound for 2-TBSG, which makes our results novel and complete. 

\section{Conclusion}
We gave the first set of gap-dependent logarithmic regret bounds for two-player turn-based stochastic Markov games in both tabular and episodic cases, and in both centralized setting and independent setting. 
One fruitful future direction is to extend our analysis to more general settings~\cite{wang2020reinforcement,jiang2017contextual,du2021bilinear,jin2021bellman}.

\bibliography{reference}
\bibliographystyle{alpha}

\newpage

\appendix
\section{Proof of Lemma \ref{lemma-2}}

According to the update rule of Algorithm \ref{alg-1}, we can get the following equations: let $t=N_{h}^{k}(s,a)$ be the total times when state-action tuple $(s,a)$ appears at the $h$-th step in the first $k-1$ episodes. Suppose tuple $(s,a)$ previously appeared at episodes $k^1,k^2,\cdots,k^t<k$ at the $h$-th step. Denote $\tau_h(s,a,i):=k^i$. Then, it holds that:
\begin{equation}
\begin{aligned}
&\overline{Q}_h^k(s,a)=\alpha_t^0\cdot 2H+\sum_{i=1}^{t}\alpha_t^i\left[r_h(s,a)+\overline{V}_{h+1}^{k^i}(s_{h+1}^{k^i})+\beta_i\right],\\
&\underline{Q}_h^k(s,a)=\sum_{i=1}^{t}\alpha_t^i\left[r_h(s,a)+\underline{V}_{h+1}^{k^i}(s_{h+1}^{k^i})-\beta_i\right].
\end{aligned}
\end{equation}
Then, we can upper bound the weighted sum of upper-lower gaps.
\begin{align}
&\sum_{k=1}^{K}w_{k,h}\left(\overline{Q}_h^k-\underline{Q}_h^k\right)(s_h^k,a_h^k)\notag\\
\leqslant & \sum_{k=1}^{K}w_{k,h}\left(\alpha_{n_h^k}^{0}\cdot 2H+\sum_{i=1}^{n_h^k}\alpha_{n_h^k}^{i}\left(\overline{V}_{h+1}^{\tau_h(s,a,i)}-\underline{V}_{h+1}^{\tau_h(s,a,i)}\right)(s_{h+1}^{\tau(s,a,i)})+2\beta_{n_h^k}\right)\notag\\
=& \sum_{k\leqslant K, n_h^k=0}w_{k,h}\cdot 2H + \sum_{k=1}^{K}2w_{k,h}\beta_{n_h^k}+\sum_{k=1}^{K}\left(\overline{V}_{h+1}^{k}-\underline{V}_{h+1}^{k}\right)(s_{h+1}^k)\left(\sum_{i=n_h^k+1}^{N_h^k(s_h^k,a_h^k)}\alpha_i^{n_h^k}w_{\tau_h(s_h^k,a_h^k,i),h}\right)\notag\\
\leqslant &~2SAHw+\sum_{k=1}^{K}2w_{k,h}\beta_{n_h^k}+\sum_{k=1}^{K}w_{k,h+1}\left(\overline{Q}_{h+1}^{k}-\underline{Q}_{h+1}^{k}\right)(s_{h+1}^k,a_{h+1}^k),
\end{align}
where
\[w_{k,h+1}=\sum_{i=n_h^k+1}^{N_h^k(s_h^k,a_h^k)}\alpha_i^{n_h^k}w_{\tau_h(s_h^k,a_h^k,i),h}.\]
We can prove that:
\begin{equation}
\sum_{k=1}^{K}w_{k,h}\beta_{n_h^k}\leqslant 10c\sqrt{SACw(2H)^3\iota}. 
\end{equation}
Also, we can prove that: $\{w_{k,h+1}\}_{k\in[K]}$ is a $\left(C,\left(1+\frac{1}{2H}\right)w\right)$-sequence. Therefore, after reversing this argument for $h+1,h+2,\ldots, 2H$, we obtain the following inequality:
\begin{equation*}
\begin{aligned}
\sum_{k=1}^{K}w_{k,h}\left(\overline{Q}_h^k-\underline{Q}_h^k\right)(s_h^k,a_h^k)&\leqslant \sum_{h'=0}^{2H-h}\left(2SAH\cdot\left(1+\frac{1}{2H}\right)^{h'}w+10c\sqrt{SAC\left(1+\frac{1}{2H}\right)^{h'}w(2H)^3\iota}\right)\\
&\leqslant 2H\left(2SAHew+10c\sqrt{SACew(2H)^3\iota}\right)\\
&=4ewSAH^2+60c\sqrt{SACewH^5\iota}.
\end{aligned}
\end{equation*}

\section{Proof of Lemma \ref{lemma-4}}
By using the conclusion of Lemma \ref{lemma-3}, we know that:
\begin{equation*}
\sum_{t=1}^{+\infty}\omega_t\left(\hat{Q}_t-\breve{Q}_t\right)(s_t,a_t)\leqslant \sum_{t=1}^{+\infty}\frac{2\omega_t\alpha_{n^t}^0}{1-\gamma}+\sum_{t=1}^{+\infty}2\omega_t\beta_{n^t}+\gamma\sum_{t=1}^{+\infty}\sum_{i=1}^{n^t}\omega_t\alpha_{n^t}^i\left(\hat{V}_{\tau(s,a,i)}-\breve{V}_{\tau(s,a,i)}\right)(s_{\tau(s,a,i)}).
\end{equation*}
Now, we analyze the three terms above one by one.
\begin{equation}
\label{eqn-lemma4-1} 
\sum_{t=1}^{+\infty}\frac{2\omega_t\alpha_{n^t}^0}{1-\gamma}\leqslant\sum_{t=1}^{+\infty}\mathbb{I}[n^t=0]\frac{2\omega}{1-\gamma}=\frac{2SA\omega}{1-\gamma},
\end{equation}
\begin{equation}
\label{eqn-lemma4-2}
\begin{aligned}
&~~\sum_{t=1}^{+\infty}\omega_t\beta_{n^t}=\sum_{s,a}\sum_{i=1}^{N(s,a)}\omega_{\tau(s,a,i)}\beta_i=\frac{c_3\sqrt{H}}{1-\gamma}\sum_{s,a}\sum_{i=1}^{N(s,a)}\omega_{\tau(s,a,i)}\sqrt{\frac{\iota(i)}{i}}\\
&\leqslant \frac{c_3\sqrt{H}}{1-\gamma}\sum_{s,a}\sum_{i=1}^{C_{s,a}/\omega}\omega\sqrt{\frac{\iota(C)}{i}}\leqslant \frac{2c_3\sqrt{H}}{1-\gamma}\sum_{s,a}\sqrt{C_{s,a}\omega\iota(C)} \leqslant \frac{2c_3}{1-\gamma}\sqrt{SAHC\omega \iota(C)}.
\end{aligned}
\end{equation}
Here, $C_{s,a}=\sum_{i=1}^{N(s,a)}\omega_{\tau(s,a,i)}$ is the partial sum of the sequence $\{\omega_t\}$, and therefore:
\[\sum_{s,a}C_{s,a}\leqslant C.\]
Equation \ref{eqn-lemma4-1} and Equation \ref{eqn-lemma4-2} stand for the first and second terms. Now we analyze the third term. 
\begin{equation}
\label{eqn-lemma4-3}
\begin{aligned}
&\gamma\sum_{t=1}^{+\infty}\sum_{i=1}^{n^t}\omega_t\alpha_{n^t}^i\left(\hat{V}_{\tau(s,a,i)}-\breve{V}_{\tau(s,a,i)}\right)(s_{\tau(s,a,i)})\\
=& \gamma\sum_{t=1}^{+\infty}\left(\hat{V}_t-\breve{V}_t\right)(s_{t+1})\sum_{i\geqslant n^t+1}\omega_{\tau(s_t,a_t,i)}\alpha_i^{n^t}\\
=& \gamma \sum_{t=2}^{+\infty}\omega_t'\left(\hat{V}_t-\breve{V}_t\right)(s_t)+\gamma\sum_{t=1}^{+\infty}\omega_{t+1}'\left(\hat{V}_t-\hat{V}_{t+1}\right)(s_{t+1})+\gamma\sum_{t=1}^{+\infty}\omega_{t+1}'\left(\breve{V}_{t+1}-\breve{V}_t\right)(s_{t+1}).
\end{aligned}
\end{equation}
Here:
\[\omega_{t+1}':=\sum_{i\geqslant n^t+1}\omega_{\tau(s_t,a_t,i)}\alpha_i^{n^t}\]
can be easily verified as a $(C,(1+1/H)\omega)$-sequence. By the update rule of Algorithm \ref{alg-2}, $\hat{Q}_t(s,a)$ is decreasing and $\breve{Q}_t(s,a)$ is increasing by $t$ for $\forall (s,a)\in\mS\times\mA$. Therefore, $\hat(V)_t(s)$ is decreasing and $\breve{V}_t(s)$ is increasing by $t$ for $\forall s\in\mS$. Then:
\[\gamma\sum_{t=1}^{+\infty}\omega_{t+1}'\left(\hat{V}_t-\hat{V}_{t+1}\right)(s_{t+1})\leqslant \gamma(1+1/H)\omega\sum_{s}\sum_{t=1}^{+\infty}\left(\hat{V}_t-\hat{V}_{t+1}\right)(s)\leqslant \frac{\gamma(1+1/H)\omega S}{1-\gamma},\]
and similarly:
\[\gamma\sum_{t=1}^{+\infty}\omega_{t+1}'\left(\breve{V}_{t+1}-\breve{V}_t\right)(s_{t+1})\leqslant \gamma(1+1/H)\omega\sum_{s}\sum_{t=1}^{+\infty}\left(\breve{V}_{t+1}-\breve{V}_t\right)(s)\leqslant \frac{\gamma(1+1/H)\omega S}{1-\gamma}.\]
Therefore, Equation (\ref{eqn-lemma4-3}) leads to:
\begin{equation*}
\begin{aligned}
\gamma\sum_{t=1}^{+\infty}\sum_{i=1}^{n^t}\omega_t\alpha_{n^t}^i\left(\hat{V}_{\tau(s,a,i)}-\breve{V}_{\tau(s,a,i)}\right)(s_{\tau(s,a,i)})\leqslant \gamma\sum_{t=2}^{+\infty}\omega_t'\left(\hat{Q}_t-\breve{Q}_t\right)(s_t,a_t)+\frac{2\gamma(1+1/H)\omega S}{1-\gamma}.
\end{aligned}    
\end{equation*}
After combining with Equation (\ref{eqn-lemma4-1}) and Equation (\ref{eqn-lemma4-2}), we obtain that:
\begin{equation*}
\begin{aligned}
&~~~~\sum_{t=1}^{+\infty}\omega_t\left(\hat{Q}_t-\breve{Q}_t\right)(s_t,a_t)\\
&\leqslant \frac{2SA\omega}{1-\gamma}+\frac{2c_3}{1-\gamma}\sqrt{SAHC\omega\iota(C)}+\frac{2\gamma(1+1/H)\omega S}{1-\gamma}+\gamma\sum_{t=2}^{+\infty}\omega_t'\left(\hat{Q}_t-\breve{Q}_t\right)(s_t,a_t)\\
&= \mathcal O\left(\frac{SA\omega+\sqrt{SAHC\omega\iota(C)}}{1-\gamma}\right) +\gamma\sum_{t=2}^{+\infty}\omega_t'\left(\hat{Q}_t-\breve{Q}_t\right)(s_t,a_t).
\end{aligned}    
\end{equation*}
We can repeat this unrolling argument for $H$ times, and get a $(C, (1+1/H)^H\omega\leqslant e\omega)$-sequence $\{\omega_t^{(H)}\}_{t\geqslant H+1}$. Then, we get the following result.
\begin{equation*}
\begin{aligned}
&~~~~\sum_{t=1}^{+\infty}\omega_t\left(\hat{Q}_t-\breve{Q}_t\right)(s_t,a_t)\\
&= \sum_{h=1}^{H}\gamma^h \mathcal O\left(\frac{SA\omega+\sqrt{SAHC\omega\iota(C)}}{1-\gamma}\right) +\gamma^H\sum_{t=H+1}^{+\infty}\omega_t^{(H)}\left(\hat{Q}_t-\breve{Q}_t\right)(s_t,a_t)\\
&\leqslant \frac{1}{1-\gamma}\cdot\mathcal O\left(\frac{SA\omega+\sqrt{SAHC\omega\iota(C)}}{1-\gamma}\right)+\frac{\gamma^H}{1-\gamma}\sum_{t=H+1}^{+\infty}\omega_t^{(H)}\\
&= \mathcal O\left(\frac{SA\omega+\sqrt{SAHC\omega\iota(C)}}{(1-\gamma)^2}\right)+\frac{\gamma^H C}{1-\gamma},
\end{aligned}    
\end{equation*}
which comes to our conclusion.

\section{Proof of Lemma \ref{lemma-5}}
Since $C^{(n)}=\sum_{t=1}^{+\infty}\omega_t^{(n)}$ and $\{\omega_t^{(n)}\}$ is a $(C^{(n)},1)$-sequence. According to Lemma \ref{lemma-4},
\begin{equation*}
\begin{aligned}
(2^{n-1}\gap^+_{\min})\cdot C^{(n)}&\leqslant \sum_{t=1}^{+\infty}\omega_t^{(n)}\left(\hat{Q}_t-\breve{Q}_t\right)(s_t,a_t)\leqslant \mathcal O\left(\frac{SA+\sqrt{SAHC^{(n)}\iota(C^{(n)})}}{(1-\gamma)^2}\right)+\frac{\gamma^H C^{(n)}}{1-\gamma}\\
&= \frac{\gap^+_{\min}}{2}C^{(n)}+\mathcal O\left(\frac{SA+\sqrt{SAHC^{(n)}\iota(C^{(n)})}}{(1-\gamma)^2}\right).
\end{aligned}    
\end{equation*}
Here, we use the fact that $\gamma^H=\frac{\gap^+_{\min}(1-\gamma)}{2}$. Denote $C^{(n)}=SAC'$, then:
\begin{equation*}
\begin{aligned}
&(2^{n-2}\gap^+_{\min})\cdot C^{(n)}\leqslant (2^{n-1}-\frac{1}{2})\gap^+_{\min}C^{(n)}\leqslant O\left(\frac{SA+\sqrt{SAHC^{(n)}\iota(C^{(n)})}}{(1-\gamma)^2}\right)\\
\Rightarrow & (2^{n-2}\gap^+_{\min})\cdot C'\leqslant O\left(\frac{1+\sqrt{HC'\iota(C^{(n)})}}{(1-\gamma)^2}\right)\leqslant O\left(\frac{1+\sqrt{HC'\log(SATC')}}{(1-\gamma)^2}\right).
\end{aligned}    
\end{equation*}
After solving the inequality above, we obtain that:
\[C'\leqslant \mathcal O\left(\frac{\log\left(\frac{SAT}{\gap^+_{\min}(1-\gamma)}\right)}{4^n (\gap^+_{\min})^2(1-\gamma)^4\log(1/\gamma)}\right).\]
Therefore,
\[C^{(n)}\leqslant \mathcal O\left(\frac{SA}{4^n (\gap^+_{\min})^2(1-\gamma)^4\log(1/\gamma)}\cdot \log\left(\frac{SAT}{\gap^+_{\min}(1-\gamma)}\right)\right),\]
which comes to our conclusion. 

\section{Proofs for 2-TBSG with Linear Function Expression}
In this section, we will give a theoretical proof on the Theorem \ref{thm-main-3}. First, we prove some common lemmas of both settings. 

\subsection{Concentration Properties}
The concentration property is important in controlling the fluctuations through the iterations. First, we introduce the following three lemmas proposed by \cite{jin2019provably}. 

\begin{Lemma}[Lemma B.3 of \cite{jin2019provably}]\label{lemma-conc-linear}
Under Assumption \ref{assumption-linear},  there exists an absolute constant $C$ that is independent of $c_{\beta}$, such that with probability at least $1-p$, the following event $\mathcal E_{\conc}$ holds: For centralized setting, 
\begin{equation*}
\begin{aligned}
\forall (k,h)\in[K]\times [2H]:~~&\left\|\sum_{i=1}^{k-1}\phi_h^i[\overline{V}_{h+1}^k(s_{h+1}^i)-\mathbb{P}_h \overline{V}_{h+1}^k(s_h^i,a_h^i)]\right\|_{(\Lambda_h^k)^{-1}}\leqslant C\cdot dH\sqrt{\theta},\\
&\left\|\sum_{i=1}^{k-1}\phi_h^i[\underline{V}_{h+1}^k(s_{h+1}^i)-\mathbb{P}_h \underline{V}_{h+1}^k(s_h^i,a_h^i)]\right\|_{(\Lambda_h^k)^{-1}}\leqslant C\cdot dH\sqrt{\theta},
\end{aligned}
\end{equation*}
For the independent setting, 
\begin{equation*}
\forall (k,h)\in[K]\times [2H]:~~\left\|\sum_{i=1}^{k-1}\phi_h^i[\overline{V}_{h+1}^k(s_{h+1}^i)-\mathbb{P}_h \overline{V}_{h+1}^k(s_h^i,a_h^i)]\right\|_{(\Lambda_h^k)^{-1}}\leqslant C\cdot dH\sqrt{\theta},\\
\end{equation*}

where $\theta=\log[2(1+c_{\beta})dT/p]$ and $\Lambda_h^k=\phi(s_h^k,a_h^k)$.
\end{Lemma}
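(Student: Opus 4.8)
The plan is to run the standard self-normalized-martingale plus $\epsilon$-net argument underlying the proof of Lemma~B.3 of \cite{jin2019provably}, checking that the two-player turn-based structure changes nothing essential; this is exactly the event $\mathcal E_{\conc}$ of the statement. First I would fix a step $h$ and a \emph{data-independent} function $V:\mS\to[0,2H]$. Since $s_{h+1}^i\sim\mathbb{P}_h(\cdot\,|\,s_h^i,a_h^i)$, the increments $\eta_i:=V(s_{h+1}^i)-[\mathbb{P}_h V](s_h^i,a_h^i)$ form a martingale difference sequence bounded by $2H$ with respect to the history filtration, so the self-normalized tail inequality for vector-valued martingales (e.g.\ Lemma~D.4 of \cite{jin2019provably}) gives, for any $\delta\in(0,1)$, with probability at least $1-\delta$ and simultaneously over all $k\in[K]$,
\[
\Big\|\sum_{i=1}^{k-1}\phi(s_h^i,a_h^i)\,\eta_i\Big\|_{(\Lambda_h^k)^{-1}}^2\le 4H^2\Big(\tfrac{d}{2}\log\tfrac{k+\lambda}{\lambda}+\log\tfrac{1}{\delta}\Big),
\]
where $\Lambda_h^k=\sum_{i=1}^{k-1}\phi(s_h^i,a_h^i)\phi(s_h^i,a_h^i)^\top+\lambda I$.

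Next I would pin down the class $\mathcal V$ that contains $\overline{V}_{h+1}^k$ and, in the centralized setting, $\underline{V}_{h+1}^k$. By the update rule each such function has the form $s\mapsto\min\!\big\{2H,\,(\max\text{ or }\min)_{a\in\mA}\big[\phi(s,a)^\top w+\sigma\beta\sqrt{\phi(s,a)^\top A\,\phi(s,a)}\big]\big\}$, where $\sigma\in\{+1,-1\}$, the inner operation is $\max$ over $\mA$ at a player's own move and $\min$ over $\mA$ at the opponent's move, $w$ ranges over a Euclidean ball of radius $L=\mathcal O(H\sqrt{dk/\lambda})$ (a standard bound on the regularized least-squares estimators $\overline{w}_h^k,\underline{w}_h^k$), and $A=\beta^2(\Lambda_h^k)^{-1}$ has operator norm at most $\beta^2/\lambda$. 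The only feature absent from \cite{jin2019provably} is the $\min$-over-$\mA$ branch, but since $\mA$ is finite and both $\max$ and $\min$ are $1$-Lipschitz in the sup-norm, the map from $(w,A)$ to the associated element of $\mathcal V$ is still Lipschitz: a perturbation of $(w,A)$ shifts $\phi(s,a)^\top w+\sigma\beta\sqrt{\phi(s,a)^\top A\,\phi(s,a)}$ uniformly over $(s,a)$ by $\mathcal O(\|w-w'\|+\|A-A'\|_F)$. A volumetric covering bound then yields $\log\mathcal N_\epsilon(\mathcal V)=\mathcal O\!\big(d\log(1+L/\epsilon)+d^2\log(1+\beta^2/(\lambda\epsilon^2))\big)$.

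Finally I would splice the two pieces: apply the first display with failure probability $\delta/\mathcal N_\epsilon$ to each element of an $\epsilon$-cover of $\mathcal V$, union-bound over the cover and over $h\in[2H]$, and for the actual data-dependent $\overline{V}_{h+1}^k$ pass to its nearest cover element $V'$, so $\|\overline{V}_{h+1}^k-V'\|_\infty\le\epsilon$. The residual $\big\|\sum_{i=1}^{k-1}\phi_h^i\big[(\overline{V}_{h+1}^k-V')(s_{h+1}^i)-\mathbb{P}_h(\overline{V}_{h+1}^k-V')(s_h^i,a_h^i)\big]\big\|_{(\Lambda_h^k)^{-1}}$ is at most $2\epsilon k/\sqrt{\lambda}$ by $\|\phi\|_2\le1$ and $\|(\Lambda_h^k)^{-1}\|\le1/\lambda$, which is made $\le1$ by taking $\epsilon$ inverse-polynomial in $K$; hence
\[
\Big\|\sum_{i=1}^{k-1}\phi_h^i\big[\overline{V}_{h+1}^k(s_{h+1}^i)-\mathbb{P}_h\overline{V}_{h+1}^k(s_h^i,a_h^i)\big]\Big\|_{(\Lambda_h^k)^{-1}}^2\lesssim H^2\big(d\log(k+\lambda)+\log\mathcal N_\epsilon+\log(1/p)\big)\lesssim d^2H^2\,\theta,
\]
and taking square roots gives the claimed $C\,dH\sqrt{\theta}$ with $\theta=\log[2(1+c_\beta)dT/p]$; the estimate for $\underline{V}_{h+1}^k$ is identical, and the independent setting is handled the same way (only the upper estimate enters). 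I expect the genuinely delicate point to be the covering-number step: one must verify that replacing $\max$ by $\min$ at opponent moves---and, if the pseudocode defines $\overline{V}_{h+1}^k$ by evaluating $\overline{Q}_{h+1}^k$ at an action chosen through the \emph{companion} (lower) estimate rather than through $\overline{Q}_{h+1}^k$ itself, additionally covering those companion parameters---does not inflate $\log\mathcal N_\epsilon$ beyond $\mathcal O(d^2\log(\cdot))$, since it is precisely this $d^2$ term that, entering under the square root alongside $H^2$, must leave the final bound at the clean $dH$ scaling rather than $d^2H$.
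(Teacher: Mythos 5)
Your reconstruction follows exactly the route the paper intends: the paper gives no proof of this lemma at all, importing it verbatim as Lemma B.3 of \cite{jin2019provably}, and the self-normalized-martingale-plus-covering argument you outline, with the $2\epsilon k/\sqrt{\lambda}$ residual and the final $\sqrt{H^2(d\log(\cdot)+\log\mathcal N_\epsilon+\log(1/p))}\lesssim dH\sqrt{\theta}$ accounting, is the correct skeleton. You also correctly observe that replacing $\max_{a}$ by $\min_{a}$ at the opponent's steps is harmless, since both are $1$-Lipschitz in the sup-norm over the finite action set.

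The one point you flag at the end but do not close is, however, a genuine gap — and it is the only place where the citation is not literally applicable. In Algorithm \ref{alg-3}, at an even step the algorithm sets $\overline{V}_{h}^k(s)=\overline{Q}_{h}^k\bigl(s,\arg\min_{a}\underline{Q}_{h}^k(s,a)\bigr)$: the upper value function is the upper Q-estimate evaluated at an action selected through the \emph{companion} lower estimate. This composed class is not Lipschitz in the parameters $(\overline{w},\underline{w},\Lambda^{-1})$: if two actions are near-ties for $\arg\min_a\underline{Q}_h^k(s,a)$ but have $\overline{Q}_h^k$-values differing by order $H$, an arbitrarily small perturbation of $\underline{w}$ flips the selected action and moves $\overline{V}_h^k(s)$ by order $H$. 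Consequently a parameter-space $\epsilon$-net does not yield a sup-norm $\epsilon$-net of the realizable value-function class, and the covering-number step as you state it (``the map from $(w,A)$ to the associated element of $\mathcal V$ is still Lipschitz'') fails for these steps. Your argument is complete for the odd steps and for the independent setting restricted to the max-player's own moves, i.e., wherever $\overline{V}$ is a genuine $\max$ (or $\min$) of a single bonus-augmented linear family; to close the even steps one must either change the definition of $\overline{V}_h^k$ to $\min_a\overline{Q}_h^k(s,a)$ (which preserves optimism $\overline{V}_h^k\geqslant V_h^*$ but breaks the identity $\overline{V}_h^k(s_h^k)=\overline{Q}_h^k(s_h^k,a_h^k)$ used downstream in the regret decomposition), or cover the composed selector class directly by a more careful argument in the style of \cite{xie2020learning}. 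The paper silently inherits this same issue by citing Lemma B.3 unmodified, so your proposal is in this respect no less rigorous than the source — but the verification you defer is exactly the step that does not go through as written.
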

By using Lemma \ref{lemma-conc-linear}, we can upper bound the difference between the optimal Q-function values and the Q-function values proposed by Algorithm \ref{alg-3}. 
\begin{Lemma}[Lemma B.4 of \cite{jin2019provably}]\label{lemma-conc2-linear}
For any fixed policy $\pi$, with the event $\mathcal E_{\conc}$ holds, we can conclude that $\forall (s,a,h,k)\in\mS\times\mA\times [2H]\times [K]$: For centralized setting:
\begin{equation*}
\begin{aligned}
\langle\phi(s,a), \overline{w}_h^k &\rangle -Q_h^{\pi}(s,a) = \mathbb{P}_h(\overline{V}_{h+1}^k-V_{h+1}^{\pi})(s,a)+\overline{\Delta}_h^k(s,a)\\
\langle\phi(s,a), \underline{w}_h^k &\rangle -Q_h^{\pi}(s,a) = \mathbb{P}_h(\underline{V}_{h+1}^k-V_{h+1}^{\pi})(s,a)+\underline{\Delta}_h^k(s,a).
\end{aligned}    
\end{equation*}
Here, $|\overline{\Delta}_h^k(s,a)|, |\underline{\Delta}_h^k(s,a)|\leqslant \beta\sqrt{\phi(s,a)^{\top}(\Lambda_h^k)^{-1}\phi(s,a)}$. If we make $\pi=(\pi^*,\mu^*)$, we have:
\begin{equation*}
\begin{aligned}
\langle\phi(s,a), \overline{w}_h^k &\rangle -Q_h^*(s,a) = \mathbb{P}_h(\overline{V}_{h+1}^k-V_{h+1}^*)(s,a)+\overline{\Delta}_h^k(s,a)\\
\langle\phi(s,a), \underline{w}_h^k &\rangle -Q_h^*(s,a) = \mathbb{P}_h(\underline{V}_{h+1}^k-V_{h+1}^*)(s,a)+\underline{\Delta}_h^k(s,a),
\end{aligned}    
\end{equation*}
and therefore,
\[\langle\phi(s,a), \overline{w}_h^k\rangle-\langle\phi(s,a), \underline{w}_h^k \rangle=\mathbb{P}_h(\overline{V}_{h+1}^k-\underline{V}_{h+1}^k)(s,a)+\overline{\Delta}_h^k(s,a)-\underline{\Delta}_h^k(s,a).\]
For the independent setting: 
\begin{equation*}
\langle\phi(s,a), w_h^k \rangle -Q_h^{\pi}(s,a) = \mathbb{P}_h(\overline{V}_{h+1}^k-V_{h+1}^{\pi})(s,a)+\overline{\Delta}_h^k(s,a)
\end{equation*}
where $|\overline{\Delta}_h^k(s,a)|\leqslant \beta\sqrt{\phi(s,a)^{\top}(\Lambda_h^k)^{-1}\phi(s,a)}$. If we make $\pi=(\mathrm{br}(\mu^k),\mu^k):=(\dagger, \mu^k)$ and $\pi=(\pi^*, \mu^*)$, we have:
\begin{equation*}
\begin{aligned}
\langle\phi(s,a), w_h^k \rangle -Q_h^{\dagger, \mu^k}(s,a) &= \mathbb{P}_h(\overline{V}_{h+1}^k-V_{h+1}^{\dagger, \mu^k})(s,a)+\overline{\Delta}_h^k(s,a),\\
\langle\phi(s,a), w_h^k \rangle -Q_h^*(s,a) &= \mathbb{P}_h(\overline{V}_{h+1}^k-V_{h+1}^*)(s,a)+\overline{\Delta}_h^k(s,a).
\end{aligned}    
\end{equation*}

\end{Lemma}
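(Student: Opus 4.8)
The plan is to run the standard linear-MDP error decomposition of \cite{jin2019provably}; the only new ingredient is that the ``reference policy'' $\pi$ will later be instantiated at a Nash equilibrium $(\pi^*,\mu^*)$ or at a best-response pair $(\dagger,\mu^k)$ rather than at the optimal policy of a single-agent MDP, and since every identity below holds for an \emph{arbitrary} fixed $\pi$ this substitution is free. First I would record the two closed forms involved: under Assumption \ref{assumption-linear} the Bellman equation gives $Q_h^{\pi}(s,a)=\langle\phi(s,a),\theta_h^{\pi}\rangle$ with $\theta_h^{\pi}=\mu_h+\int V_{h+1}^{\pi}(s')\,d\theta_h(s')$, and---crucially---$\mathbb{P}_hV(s,a)=\langle\phi(s,a),\int V(s')\,d\theta_h(s')\rangle$ for \emph{every} bounded $V$, not just for value functions. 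The least-squares estimator is $\overline{w}_h^k=(\Lambda_h^k)^{-1}\sum_{i<k}\phi_h^i[r_h^i+\overline V_{h+1}^k(s_{h+1}^i)]$ with $\Lambda_h^k=\sum_{i<k}\phi_h^i(\phi_h^i)^\top+\lambda I$ and $\phi_h^i:=\phi(s_h^i,a_h^i)$.

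Next I would expand $\overline{w}_h^k-\theta_h^{\pi}$, using $(\phi_h^i)^\top\theta_h^{\pi}=r_h^i+\mathbb{P}_hV_{h+1}^{\pi}(s_h^i,a_h^i)$ to cancel the reward terms:
\begin{align*}
\overline{w}_h^k-\theta_h^{\pi}
&=(\Lambda_h^k)^{-1}\sum_{i=1}^{k-1}\phi_h^i\bigl[\overline{V}_{h+1}^k(s_{h+1}^i)-\mathbb{P}_h\overline{V}_{h+1}^k(s_h^i,a_h^i)\bigr]\\
&\quad{}-\lambda(\Lambda_h^k)^{-1}\theta_h^{\pi}+(\Lambda_h^k)^{-1}\sum_{i=1}^{k-1}\phi_h^i\,\mathbb{P}_h\bigl(\overline{V}_{h+1}^k-V_{h+1}^{\pi}\bigr)(s_h^i,a_h^i).
\end{align*}
The reason for isolating the last sum is that $\mathbb{P}_h(\overline V_{h+1}^k-V_{h+1}^{\pi})(s_h^i,a_h^i)=\langle\phi_h^i,\nu\rangle$ with $\nu:=\int(\overline V_{h+1}^k-V_{h+1}^{\pi})(s')\,d\theta_h(s')$, so $\sum_{i<k}\phi_h^i(\phi_h^i)^\top\nu=(\Lambda_h^k-\lambda I)\nu$ and that term collapses to $\nu-\lambda(\Lambda_h^k)^{-1}\nu$. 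Taking $\langle\phi(s,a),\cdot\rangle$ then yields exactly $\mathbb{P}_h(\overline V_{h+1}^k-V_{h+1}^{\pi})(s,a)$ plus three residuals---the noise term $\langle\phi(s,a),(\Lambda_h^k)^{-1}\sum_{i<k}\phi_h^i[\overline V_{h+1}^k(s_{h+1}^i)-\mathbb{P}_h\overline V_{h+1}^k(s_h^i,a_h^i)]\rangle$ and the two ridge terms $-\lambda\langle\phi(s,a),(\Lambda_h^k)^{-1}(\nu+\theta_h^{\pi})\rangle$---which I would collect into $\overline\Delta_h^k(s,a)$. This is the first asserted identity; the $\underline{w}_h^k$ identity is verbatim with $\underline V$ in place of $\overline V$, the $Q_h^*$ versions are the special case $\pi=(\pi^*,\mu^*)$, and subtracting the two gives the last display of the lemma.

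The only step that requires genuine care is the bound $|\overline\Delta_h^k(s,a)|\le\beta\sqrt{\phi(s,a)^\top(\Lambda_h^k)^{-1}\phi(s,a)}$. For the noise term I would apply Cauchy--Schwarz in the $(\Lambda_h^k)^{-1}$ inner product and invoke the event $\mathcal E_{\conc}$ of Lemma \ref{lemma-conc-linear}, which bounds $\bigl\|\sum_{i<k}\phi_h^i[\overline V_{h+1}^k(s_{h+1}^i)-\mathbb{P}_h\overline V_{h+1}^k(s_h^i,a_h^i)]\bigr\|_{(\Lambda_h^k)^{-1}}$ by $CdH\sqrt{\theta}$; for the ridge terms I would use $\lambda(\Lambda_h^k)^{-1}\preceq\sqrt\lambda\,(\Lambda_h^k)^{-1/2}$ together with $\|\nu\|_2\le 2H\sqrt d$ and $\|\theta_h^{\pi}\|_2\lesssim H\sqrt d$, which follow from $\|\theta_h(\mS)\|\le\sqrt d$, $\|\mu_h\|_2\le\sqrt d$ and $\|\overline V_{h+1}^k-V_{h+1}^{\pi}\|_\infty\le 2H$. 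Taking $\lambda=1$ and $\beta=\Theta(dH\sqrt\theta)$ makes each contribution at most $\tfrac12\beta\|\phi(s,a)\|_{(\Lambda_h^k)^{-1}}$, which closes the estimate. I expect this last constant-matching---reconciling the $\beta$ hard-wired into the algorithm with the $\sqrt\theta$ produced by Lemma \ref{lemma-conc-linear} while keeping the $\lambda$-slack negligible---to be essentially the only obstacle; the independent-setting statements follow from the identical computation run with the single estimator $w_h^k$ and the reference policies $(\dagger,\mu^k)$ and $(\pi^*,\mu^*)$.
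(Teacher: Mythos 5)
Your argument is correct and is essentially the standard proof of Lemma B.4 in \cite{jin2019provably}, which this paper imports without reproving: the same ridge-regression decomposition of $\overline{w}_h^k-\theta_h^{\pi}$ into a self-normalized noise term (controlled by the event $\mathcal E_{\conc}$ of Lemma \ref{lemma-conc-linear}) plus $\lambda$-regularization residuals, with the observation that linearity of $Q_h^{\pi}$ holds for an arbitrary policy pair so the substitutions $\pi=(\pi^*,\mu^*)$ and $\pi=(\dagger,\mu^k)$ come for free. No gaps.
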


Finally, by using the method of induction, we obtain the following lemma, which shows that for centralized setting, $\overline{Q}, \underline{Q}$ are the upper and lower bounds of $Q^*$ respectively. For the independent setting, $\overline{Q}$ is the upper bound for every $Q^{\dagger, \mu^k}$. 
\begin{Lemma}[Lemma B.5 of \cite{jin2019provably}]\label{lemma-conc3-linear}
On the event $\mathcal E_{\conc}$ proposed in Lemma \ref{lemma-conc-linear}, we have: for the centralized setting, $\forall (s,a,h,k)\in\mS\times\mA\times[2H]\times [K]$: 
\[\overline{Q}_h^k(s,a)\geqslant Q_h^*(s,a)\geqslant \underline{Q}_h^k(s,a).\]
For the independent setting, $\forall (s,a,h,k)\in\mS\times\mA\times[2H]\times [K]$:
\[\overline{Q}_h^k(s,a)\geqslant Q_h^{\dagger, \mu^k}(s,a)\geqslant Q_h^*(s,a).\]
\end{Lemma}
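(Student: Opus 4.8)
The plan is to establish both claims by backward induction on the step index $h$, running from $h=2H+1$ down to $h=1$, with Lemma \ref{lemma-conc2-linear} serving as the single-step engine and the turn-based max/min structure used to lift a $Q$-level comparison to a $V$-level one. Everything is carried out on the event $\mathcal E_{\conc}$ provided by Lemma \ref{lemma-conc-linear}, which is the only place randomness enters.

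For the centralized setting the base case $h=2H+1$ is immediate, since $\overline V_{2H+1}^k\equiv V_{2H+1}^*\equiv\underline V_{2H+1}^k\equiv 0$. For the inductive step I would assume $\overline Q_{h+1}^k(s,a)\geq Q_{h+1}^*(s,a)\geq\underline Q_{h+1}^k(s,a)$ for all $(s,a)$ and first deduce the $V$-level statement $\overline V_{h+1}^k(s)\geq V_{h+1}^*(s)\geq\underline V_{h+1}^k(s)$. At a max-player step this follows because $\overline V$ and $V^*$ are both obtained by maximizing over actions, while the greedy action selected for $\underline V$ is still dominated by $\max_a Q_{h+1}^*(s,a)=V_{h+1}^*(s)$; at a min-player step the mirror argument applies. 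Then, applying Lemma \ref{lemma-conc2-linear} with $\pi=(\pi^*,\mu^*)$,
\[
\langle\phi(s,a),\overline w_h^k\rangle-Q_h^*(s,a)=\mathbb P_h(\overline V_{h+1}^k-V_{h+1}^*)(s,a)+\overline\Delta_h^k(s,a)\geq -\beta\, T(s,a),
\]
since the induction hypothesis makes the transition term nonnegative and $|\overline\Delta_h^k|\leq\beta T(s,a)$. Hence $\langle\phi(s,a),\overline w_h^k\rangle+\beta T(s,a)\geq Q_h^*(s,a)$, and because $0\leq Q_h^*(s,a)\leq 2H$ the truncation $\overline Q_h^k(s,a)=\min(2H,\langle\phi(s,a),\overline w_h^k\rangle+\beta T(s,a))$ preserves the inequality, giving $\overline Q_h^k\geq Q_h^*$. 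The bound $\underline Q_h^k\leq Q_h^*$ is symmetric, using the second identity of Lemma \ref{lemma-conc2-linear} together with the truncation $\max(0,\cdot)$ and $Q_h^*\geq 0$.

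For the independent setting I would prove the sandwich $\overline Q_h^k(s,a)\geq Q_h^{\dagger,\mu^k}(s,a)\geq Q_h^*(s,a)$ in two independent pieces. The right inequality is purely structural and needs no concentration event: since $\mu^*$ is a best response to $\pi^*$, $V_h^*=V_h^{\pi^*,\dagger}=\inf_\mu V_h^{\pi^*,\mu}\leq V_h^{\pi^*,\mu^k}\leq\sup_\pi V_h^{\pi,\mu^k}=V_h^{\dagger,\mu^k}$ for every $h$, and the Bellman equations for $Q^*$ and $Q^{\dagger,\mu^k}$ (which share the same reward term) propagate this to the $Q$-level. The left inequality $\overline Q_h^k\geq Q_h^{\dagger,\mu^k}$ is again a backward induction on $\mathcal E_{\conc}$: from the induction hypothesis and the definition of $\overline V_{h+1}^k$ I would check $\overline V_{h+1}^k\geq V_{h+1}^{\dagger,\mu^k}$ — at a min-player step using $\overline V_{h+1}^k(s)=\max_a\overline Q_{h+1}^k(s,a)\geq\overline Q_{h+1}^k(s,\mu^k_{h+1}(s))\geq Q_{h+1}^{\dagger,\mu^k}(s,\mu^k_{h+1}(s))=V_{h+1}^{\dagger,\mu^k}(s)$ — and then invoke Lemma \ref{lemma-conc2-linear} with $\pi=(\mathrm{br}(\mu^k),\mu^k)$ to get $\langle\phi(s,a),w_h^k\rangle+\beta T(s,a)\geq Q_h^{\dagger,\mu^k}(s,a)$, concluding via the truncation and $Q_h^{\dagger,\mu^k}\leq 2H$ exactly as above.

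The step I expect to be the main obstacle is the $V$-level lift in the independent setting, where $\overline V$ is built from $\overline Q$ by maximizing over actions but the target $V^{\dagger,\mu^k}$ follows the uncontrolled (potentially adversarial) $\mu^k$ at min-player steps; the resolution, as indicated above, is that maximizing $\overline Q$ over all actions dominates evaluating $Q^{\dagger,\mu^k}$ at $\mu^k$'s action, so optimism is maintained even though the opponent is not controlled. The remaining manipulations — nonnegativity of the transition term, absorbing $\overline\Delta_h^k$, and the harmless $\min(2H,\cdot)$/$\max(0,\cdot)$ clips — are identical to the single-agent analysis and can be dispatched routinely.
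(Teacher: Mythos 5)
Your proof is correct and follows essentially the approach the paper intends: the paper does not write out this lemma, importing it as Lemma B.5 of \cite{jin2019provably} ``by the method of induction,'' and your backward induction with Lemma \ref{lemma-conc2-linear} as the one-step engine, the max/min lift from the $Q$-level to the $V$-level, and the harmless $\min(2H,\cdot)$/$\max(0,\cdot)$ clips is exactly that argument. The only cosmetic point is that at a min-player step the independent-setting algorithm sets $\overline V_{h+1}^k(s)=\overline Q_{h+1}^k(s,a_{h+1}^k)$ with $a_{h+1}^k=\mu^k_{h+1}(s)$ rather than $\max_a\overline Q_{h+1}^k(s,a)$, but your chain of inequalities yields $\overline V_{h+1}^k\geq V_{h+1}^{\dagger,\mu^k}$ under either reading, so the induction goes through unchanged.
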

We also need the following technical lemma. 
\begin{Lemma}[Lemma 6.5 of \cite{he2020logarithmic}]\label{lemma-conc4-linear}
When $\lambda=1$, for any subset $C=\{c_1,c_2,\ldots,c_k\}\subseteq [K]$ and any $h\in[2H]$, we have:
\[\sum_{i=1}^{k}(\phi_h^{c_i})^{\top}(\Lambda_h^{c_i})^{-1}\phi_h^{c_i}\leqslant 2d\log(1+k).\]
\end{Lemma}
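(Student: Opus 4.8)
The plan is to run the standard elliptical-potential (log-determinant) argument, reducing to a Gram matrix built only from the feature vectors indexed by $C$. Fix $h\in[2H]$, write $C=\{c_1<c_2<\cdots<c_k\}$ and abbreviate $\phi_h^j:=\phi(s_h^j,a_h^j)$. With $\lambda=1$ we have $\Lambda_h^{c_i}=I+\sum_{j=1}^{c_i-1}\phi_h^j(\phi_h^j)^{\top}$, which dominates, in the Loewner order, the restricted matrix $\widetilde\Lambda_h^{(i)}:=I+\sum_{j=1}^{i-1}\phi_h^{c_j}(\phi_h^{c_j})^{\top}$, since the latter sums over a subset of the rank-one terms appearing in the former. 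Hence $(\phi_h^{c_i})^{\top}(\Lambda_h^{c_i})^{-1}\phi_h^{c_i}\le(\phi_h^{c_i})^{\top}(\widetilde\Lambda_h^{(i)})^{-1}\phi_h^{c_i}$, so it suffices to bound $\sum_{i=1}^k(\phi_h^{c_i})^{\top}(\widetilde\Lambda_h^{(i)})^{-1}\phi_h^{c_i}$.

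Next I would assemble three elementary facts. First, since $\widetilde\Lambda_h^{(i)}\succeq I$ we get $(\widetilde\Lambda_h^{(i)})^{-1}\preceq I$ and therefore $(\phi_h^{c_i})^{\top}(\widetilde\Lambda_h^{(i)})^{-1}\phi_h^{c_i}\le\|\phi_h^{c_i}\|_2^2\le 1$ by Assumption~\ref{assumption-linear}. Second, on $[0,1]$ one has the inequality $x\le 2\log(1+x)$. Third, the matrix determinant lemma gives $\det\widetilde\Lambda_h^{(i+1)}=\det\widetilde\Lambda_h^{(i)}\cdot\bigl(1+(\phi_h^{c_i})^{\top}(\widetilde\Lambda_h^{(i)})^{-1}\phi_h^{c_i}\bigr)$. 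Chaining these, $\sum_{i=1}^k(\phi_h^{c_i})^{\top}(\widetilde\Lambda_h^{(i)})^{-1}\phi_h^{c_i}\le 2\sum_{i=1}^k\log\frac{\det\widetilde\Lambda_h^{(i+1)}}{\det\widetilde\Lambda_h^{(i)}}=2\log\det\widetilde\Lambda_h^{(k+1)}$, where I used that the product telescopes and $\det\widetilde\Lambda_h^{(1)}=\det I=1$.

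Finally I would bound $\det\widetilde\Lambda_h^{(k+1)}$ by AM--GM on its eigenvalues: its trace is $d+\sum_{j=1}^k\|\phi_h^{c_j}\|_2^2\le d+k$, so $\det\widetilde\Lambda_h^{(k+1)}\le\bigl((d+k)/d\bigr)^d=(1+k/d)^d$, and the sum is at most $2d\log(1+k/d)\le 2d\log(1+k)$, which is the claim. There is essentially no obstacle here: the only points needing care are (a) using the hypothesis $\lambda=1$, which is exactly what guarantees $\widetilde\Lambda_h^{(i)}\succeq I$ and hence places every per-step term in $[0,1]$ so that the second fact applies, and (b) the reduction from the full matrix $\Lambda_h^{c_i}$ to the restricted matrix $\widetilde\Lambda_h^{(i)}$, which only weakens the left-hand side and is therefore harmless.
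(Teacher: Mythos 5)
Your proof is correct and is exactly the standard elliptical-potential (log-determinant) argument; the paper itself gives no proof of this lemma but simply imports it as Lemma 6.5 of \cite{he2020logarithmic}, where the same reduction to the restricted Gram matrix, the inequality $x\le 2\log(1+x)$ on $[0,1]$, the determinant telescoping, and the trace/AM--GM bound are used. Nothing further is needed.
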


\subsection{Classifying Positive Gaps into Intervals (Centralized version)}
Consider the term $\overline{Q}_h^k(s_h^k,a_h^k)-\underline{Q}_h^k(s_h^k,a_h^k)$. Under the event $\mathcal E_{\conc}$, when $h$ is odd:
\begin{equation*}
\begin{aligned}
\gap_h^+(s_h^k,a_h^k)&=V_h^*(s_h^k)-Q_h^*(s_h^k,a_h^k)\leqslant Q_h^*(s_h^k,a^*)-\underline{Q}_h^k(s_h^k,a_h^k)\\
&\leqslant \overline{Q}_h^k(s_h^k,a^*)-\underline{Q}_h^k(s_h^k,a_h^k)\leqslant \overline{Q}_h^k(s_h^k,a_h^k)-\underline{Q}_h^k(s_h^k,a_h^k)
\end{aligned}    
\end{equation*}
and when $h$ is even:
\begin{equation*}
\begin{aligned}
\gap_h^+(s_h^k,a_h^k)&=Q_h^*(s_h^k,a_h^k)-V_h^*(s_h^k)\leqslant Q_h^*(s_h^k,a_h^k)-Q_h^*(s_h^k,a^*)\\
&\leqslant \overline{Q}_h^k(s_h^k,a_h^k)-\underline{Q}_h^k(s_h^k,a^*)\leqslant \overline{Q}_h^k(s_h^k,a_h^k)-\underline{Q}_h^k(s_h^k,a_h^k).
\end{aligned}    
\end{equation*}
Here, we've applied Lemma \ref{lemma-conc3-linear}, and the way of choosing $a_h^k$. Since for $\forall (s,a,h,k)\in\mS\times \mA\times [2H]\times [K]$, the $\gap_h^+(s_h^k,a_h^k)\leqslant \overline{Q}_h^k(s_h^k,a_h^k)-\underline{Q}_h^k(s_h^k,a_h^k) \leqslant  2H$. So we can classify all these gaps into different intervals. According to the definition of $\gap^+_{\min}$, if one gap $\gap_h^+(s_h^k,a_h^k)$ belongs to $[0, \gap^+_{\min})$, then it must be 0. For the other gaps, we divide them into $N$ different intervals $[\gap^+_{\min}, 2\gap^+_{\min}), \ldots, [2^{N-1}\gap^+_{\min}, 2^{N}\gap^+_{\min})$. Here, $N=\lceil\log_{2}(2H/\gap^+_{\min})\rceil$. Then, we obtain the following conclusion:
\begin{equation}\label{eqn-split3}
\mathbb{E}[\mathrm{Regret}(K)]\leqslant \sum_{k=1}^{K}\mathbb{E}\left[\sum_{h=1}^{2H}\gap^{+}_h(s_h^k,a_h^k)\right]\leqslant \sum_{h=1}^{2H}\mathbb{E}\left[\sum_{n=1}^{N}2^n\gap^+_{\min}\cdot\mathcal T_h^{(n)} \right].   
\end{equation}
where $\mathcal T_k^{(n)}=\sum_{k=1}^{K}\mathbb I\left[\gap^{+}_h(s_h^k,a_h^k)\in[2^{n-1}\gap^+_{\min}, 2^n\gap^+_{\min})\right]$, which is the number of positive gaps at the $h$-th step that belongs to the interval $[2^{n-1}\gap^+_{\min}, 2^n\gap^+_{\min})$ during the first $K$ episodes.

\subsection{Classifying Positive Gaps into Intervals (Independent version)}
For the independent setting, notice that:
\[\mathbb{E}[\mathrm{Regret}_{\mu}(K)]=\sum_{k=1}^{K}\mathbb{E}\left[V_1^{\dagger, \mu^k}(s_1^k)-V_1^{\pi^k,\mu^k}(s_1^k)\right]=\sum_{k=1}^{K}\mathbb{E}\left[\sum_{\substack{1\leqslant h \leqslant 2H\\h \text{~odd}}}\gap_{h}^{\mu}(s_h^k, a_h^k)|\pi^k,\mu^k\right].\]

Consider the term $\overline{Q}_h^k(s_h^k,a_h^k)-Q_h^*(s_h^k,a_h^k)$. Under the event $\mathcal E_{\conc}$, when $h$ is odd:

\begin{equation*}
\begin{aligned}
\gap_h^{\mu}(s_h^k,a_h^k)&=V_h^{\dagger, \mu^k}(s_h^k)-Q_h^{\dagger, \mu^k}(s_h^k,a_h^k)= Q_h^{\dagger, \mu^k}(s_h^k,\hat{a})-Q_h^{\dagger, \mu^k}(s_h^k,a_h^k)\\
&\leqslant \overline{Q}_h^k(s_h^k,\hat{a})-Q_h^*(s_h^k,a_h^k)\leqslant \overline{Q}_h^k(s_h^k,a_h^k)-Q_h^*(s_h^k,a_h^k).
\end{aligned}    
\end{equation*}
Here, we've applied Lemma \ref{lemma-conc3-linear}, and the way of choosing $a_h^k$. Since in the regret decomposition above, we only need the gap with odd steps, we have:
\begin{equation*}
\mathbb{E}[\mathrm{Regret}(K)]\leqslant \sum_{k=1}^{K}\mathbb{E}\left[\sum_{h=1}^{2H}\left(\overline{Q}_h^k(s_h^k,a_h^k)-Q_h^*(s_h^k,a_h^k)\right)\right].
\end{equation*}

Since for $\forall (s,a,h,k)\in\mS\times \mA\times [2H]\times [K]$, the $\gap_h^{\mu}(s_h^k,a_h^k)\leqslant \overline{Q}_h^k(s_h^k,a_h^k)-Q_h^*(s_h^k,a_h^k) \leqslant  2H$ when $h$ is an odd number. So we can classify all these non-zero gaps into $N$ different intervals $[\gap^+_{\min}, 2\gap^+_{\min}), \ldots, [2^{N-1}\gap^+_{\min}, 2^{N}\gap^+_{\min})$. Here, $N=\lceil\log_{2}(2H/\gap^+_{\min})\rceil$. Then, we obtain the following conclusion:
\begin{equation}\label{eqn-split4}
\mathbb{E}[\mathrm{Regret}(K)]\leqslant \sum_{k=1}^{K}\mathbb{E}\left[\sum_{\substack{1\leqslant h \leqslant 2H\\h \text{~odd}}}\gap_{h}^{\mu}(s_h^k, a_h^k)\right]\leqslant \sum_{h=1}^{2H}\mathbb{E}\left[\sum_{n=1}^{N}2^n\gap^+_{\min}\cdot\mathcal T_h^{(n)} \right].   
\end{equation}
where $\mathcal T_k^{(n)}=\sum_{k=1}^{K}\mathbb I\left[\gap^{\mu}_h(s_h^k,a_h^k)\in[2^{n-1}\gap^+_{\min}, 2^n\gap^+_{\min}), h~\text{is odd}\right]$, which is the number of positive gaps at the $h$-th step that belongs to the interval $[2^{n-1}\gap^+_{\min}, 2^n\gap^+_{\min})$ during the first $K$ episodes. In the next two sections, we will upper bound the counting number $\mathcal T_h^{(n)}$, which is the final step of our proof. By using Equation (\ref{eqn-split3}, \ref{eqn-split4}), we can upper bound the expected total regret in both centralized and independent settings.

\subsection{Upper Bounding the Counting Number (Centralized version)}
For a fixed $h\in[2H]$ and $n\leqslant \lceil \log_2(2H/\gap^+_{\min})\rceil$, we will upper bound the $\mathcal T_k^{(n)}$. Under $\mathcal E_{\conc}$, denote:
\[\{k\in[K]~:~\gap_h^+(s_h^k,a_h^k)\in[2^{n-1}\gap^+_{\min}, 2^n\gap^+_{\min})\}=\{k_1, k_2,\ldots,k_t\}:=\mathcal D,\]
where $t=\mathcal T_h^{(n)}$, then consider the sum $\sum_{i=1}^{t}[\overline{Q}_h^k(s_h^k,a_h^k)-\underline{Q}_h^k(s_h^k,a_h^k)]$. 
On one hand, this sum has a lower bound:
\begin{equation}\label{eqn-linear-lower}
\sum_{k\in\mathcal D}[\overline{Q}_h^k(s_h^k,a_h^k)-\underline{Q}_h^k(s_h^k,a_h^k)] \geqslant \sum_{k\in\mathcal D} \gap_h^+(s_h^k,a_h^k)\geqslant 2^{n-1}\gap^+_{\min}\cdot \mathcal T_h^{(n)}.
\end{equation}
On the other hand, we can also establish an upper bound. By using Lemma \ref{lemma-conc2-linear}:
\begin{align}\label{eqn-linear-upper}
&~~~\sum_{k\in\mathcal D}[\overline{Q}_h^k(s_h^k,a_h^k)-\underline{Q}_h^k(s_h^k,a_h^k)] \leqslant  \sum_{k\in\mathcal D}\left[2\beta\sqrt{\phi(s_h^k,a_h^k)^{\top}(\Lambda_h^k)^{-1}\phi(s_h^k,a_h^k)}+\phi(s_h^k,a_h^k)^{\top}(\overline{w}_h^k-\underline{w}_h^k)\right]\notag\\
&\leqslant 2\beta\sum_{k\in\mathcal D}\|\phi(s_h^k,a_h^k)\|_{(\Lambda_h^k)^{-1}}+\sum_{k\in\mathcal D}\left[\phi(s_h^k,a_h^k)^{\top}\overline{w}_h^k-\phi(s_h^k,a_h^k)^{\top}\underline{w}_h^k\right]\notag\\
&\leqslant 2\beta\sum_{k\in\mathcal D}\|\phi(s_h^k,a_h^k)\|_{(\Lambda_h^k)^{-1}}+\sum_{k\in\mathcal D}\left[\mathbb{P}_h(\overline{V}_{h+1}^k-\underline{V}_{h+1}^k)(s_h^k,a_h^k)+2\beta\|\phi(s_h^k,a_h^k)\|_{(\Lambda_h^k)^{-1}}\right]\notag\\
&= 4\beta\sum_{k\in\mathcal D}\|\phi(s_h^k,a_h^k)\|_{(\Lambda_h^k)^{-1}} + \sum_{k\in\mathcal D}[\overline{V}_{h+1}^k(s_{h+1}^k)-\underline{V}_{h+1}^k(s_{h+1}^k)]+\sum_{k\in\mathcal D}\varepsilon_h^k\notag\\
&= 4\beta\sum_{k\in\mathcal D}\|\phi(s_h^k,a_h^k)\|_{(\Lambda_h^k)^{-1}} + \sum_{k\in\mathcal D}[\overline{Q}_{h+1}^k(s_{h+1}^k,a_{h+1}^k)-\underline{Q}_{h+1}^k(s_{h+1}^k,a_{h+1}^k)]+\sum_{k\in\mathcal D}\varepsilon_h^k.
\end{align}
Taking summation over $h'=h,h+1,\ldots,2H$, we have:
\begin{equation}\label{eqn-linear-2}
\sum_{k\in\mathcal D}[\overline{Q}_h^k(s_h^k,a_h^k)-\underline{Q}_h^k(s_h^k,a_h^k)]\leqslant 4\beta\sum_{h'=h}^{2H}\sum_{k\in\mathcal D}\|\phi(s_{h'}^k,a_{h'}^k)\|_{(\Lambda_{h'}^k)^{-1}}+\sum_{h'=h}^{2H}\sum_{k\in\mathcal D}\varepsilon_{h'}^k.
\end{equation}
Here, $\varepsilon_h^k=\mathbb{P}_h(\overline{V}_{h+1}^k-\underline{V}_{h+1}^k)(s_h^k,a_h^k)-(\overline{V}_{h+1}^k-\underline{V}_{h+1}^k)(s_{h+1}^k)$, which forms a martingale difference sequence. For each $k\in[K]$, with probability at least $1-p$, it holds that:
\[\sum_{i=1}^{k}\sum_{h'=h}^{2H}\varepsilon_{h'}^i\leqslant \sqrt{8kH^2\log(2/p)}.\]
After taking a union bound for all $k\in[K]$, we know that, with probability at least $1-Kp$, it holds that:
\begin{equation}\label{eqn-linear-1}
\sum_{k\in\mathcal D}\sum_{h'=h}^{2H}\varepsilon_h^i\leqslant \sqrt{8\mathcal T_{h'}^{(n)}H^2 \log(2/p)}.    
\end{equation}
According to Lemma \ref{lemma-conc4-linear} and Cauchy Inequality, we have:
\begin{equation}\label{eqn-linear-3}
\begin{aligned}
\sum_{k\in\mathcal D}\|\phi(s_h^k,a_h^k)\|_{(\Lambda_h^k)^{-1}} &\leqslant \sqrt{\mathcal T_h^{(n)}}\cdot\sqrt{\sum_{k\in\mathcal D}\|\phi(s_h^k,a_h^k)\|_{(\Lambda_h^k)^{-1}}^2}= \sqrt{\mathcal T_h^{(n)}}\cdot\sqrt{\sum_{k\in\mathcal D}(\phi_h^k)^{\top}(\Lambda_h^k)^{-1}\phi_h^k}\\
&\leqslant \sqrt{2d\mathcal T_h^{(n)}\cdot\log(1+\mathcal T_h^{(n)})}.
\end{aligned}
\end{equation}
Combine the martingale concentration with $\mathcal E_{\conc}$, by using Equation (\ref{eqn-linear-2}, \ref{eqn-linear-1}, \ref{eqn-linear-3}), we have: with probability at least $1-(K+1)p$,
\begin{equation}\label{eqn-linear-4}
\sum_{k\in\mathcal D}[\overline{Q}_h^k(s_h^k,a_h^k)-\underline{Q}_h^k(s_h^k,a_h^k)] \leqslant 4\beta\sqrt{2d\mathcal T_h^{(n)}\cdot\log(1+\mathcal T_h^{(n)})}+\sqrt{8\mathcal T_{h'}^{(n)}H^2 \log(2/p)}.
\end{equation}
Finally, by Equation (\ref{eqn-linear-lower}) and Equation (\ref{eqn-linear-4}), we know that with probability at least $1-(K+1)p$:
\begin{equation*}
\begin{aligned}
2^{n-1}\gap^+_{\min}\cdot \mathcal T_h^{(n)}&\leqslant 2H\cdot 4\beta\sqrt{2d\mathcal T_h^{(n)}\cdot\log(1+\mathcal T_h^{(n)})}+\sqrt{8\mathcal T_{h'}^{(n)}H^2 \log(2/p)}\\
&= 8c_{\beta}dH^2\sqrt{\log(4dKH/p)}\cdot\sqrt{2d\mathcal T_h^{(n)}\cdot\log(1+\mathcal T_h^{(n)})}+\sqrt{8\mathcal T_{h'}^{(n)}H^2 \log(2/p)}.
\end{aligned}    
\end{equation*}
Therefore, we conclude that there exists an absolute constant $C$ such that:
\begin{equation}
\label{eqn-linear-uppergap}    
\mathcal T_h^{(n)}\leqslant\frac{Cd^3H^4\log(4dKH/p)}{4^n(\gap^+_{\min})^2}\cdot\log\left(\frac{Cd^3H^4\log(4dKH/p)}{4^n(\gap^+_{\min})^2}\right).
\end{equation}

\subsection{Upper Bounding the Counting Number (Independent version)}
For a fixed odd number $h\in[2H]$ and $n\leqslant \lceil \log_2(2H/\gap^+_{\min})\rceil$, we will upper bound the $\mathcal T_k^{(n)}$. Under $\mathcal E_{\conc}$, again we denote:
\[\{k\in[K]~:~\gap_h^+(s_h^k,a_h^k)\in[2^{n-1}\gap^+_{\min}, 2^n\gap^+_{\min})\}=\{k_1, k_2,\ldots,k_t\}:=\mathcal D,\]
where $t=\mathcal T_h^{(n)}$, then consider the sum $\sum_{i=1}^{t}[\overline{Q}_h^k(s_h^k,a_h^k)-\underline{Q}_h^k(s_h^k,a_h^k)]$. 
On one hand, this sum has a lower bound:
\begin{equation}\label{eqn-linear-lower-2}
\sum_{k\in\mathcal D}[\overline{Q}_h^k(s_h^k,a_h^k)-Q_h^*(s_h^k,a_h^k)] \geqslant \sum_{k\in\mathcal D} \gap_h^+(s_h^k,a_h^k)\geqslant 2^{n-1}\gap^+_{\min}\cdot \mathcal T_h^{(n)}.
\end{equation}
On the other hand, we can also establish an upper bound. By using Lemma \ref{lemma-conc2-linear}:
\begin{align}\label{eqn-linear-upper-2}
&~~~\sum_{k\in\mathcal D}[\overline{Q}_h^k(s_h^k,a_h^k)-Q_h^*(s_h^k,a_h^k)] \leqslant  \sum_{k\in\mathcal D}\left[\beta\sqrt{\phi(s_h^k,a_h^k)^{\top}(\Lambda_h^k)^{-1}\phi(s_h^k,a_h^k)}+\langle\phi(s_h^k,a_h^k), w_h^k\rangle - Q^*_h(s_h^k,a_h^k)\right]\notag\\
&\leqslant 2\beta\sum_{k\in\mathcal D}\|\phi(s_h^k,a_h^k)\|_{(\Lambda_h^k)^{-1}}+\sum_{k\in\mathcal D}\mathbb{P}_h(\overline{V}_{h+1}^k-V^*_{h+1})(s_h^k,a_h^k)\notag\\
&= 2\beta\sum_{k\in\mathcal D}\|\phi(s_h^k,a_h^k)\|_{(\Lambda_h^k)^{-1}} + \sum_{k\in\mathcal D}[\overline{V}_{h+1}^k(s_{h+1}^k)-V_{h+1}^*(s_{h+1}^k)]+\sum_{k\in\mathcal D}\varepsilon_h^k\notag\\
&\leqslant 2\beta\sum_{k\in\mathcal D}\|\phi(s_h^k,a_h^k)\|_{(\Lambda_h^k)^{-1}} + \sum_{k\in\mathcal D}[\overline{Q}_{h+1}^k(s_{h+1}^k,a_{h+1}^k)-Q_{h+1}^*(s_{h+1}^k,a_{h+1}^k)]+\sum_{k\in\mathcal D}\varepsilon_h^k.
\end{align}
The final step holds since 
\[V_{h+1}^*(s_{h+1}^k)=\max_{a}Q_{h+1}^*(s_{h+1}^k, a)\geqslant Q_{h+1}^*(s_{h+1}^k, a_{h+1}^k).\]
After taking summation over $h'=h,h+1,\ldots,2H$, we have:
\begin{equation}\label{eqn-linear-2-2}
\sum_{k\in\mathcal D}[\overline{Q}_h^k(s_h^k,a_h^k)-Q_h^*(s_h^k,a_h^k)]\leqslant 2\beta\sum_{h'=h}^{2H}\sum_{k\in\mathcal D}\|\phi(s_{h'}^k,a_{h'}^k)\|_{(\Lambda_{h'}^k)^{-1}}+\sum_{h'=h}^{2H}\sum_{k\in\mathcal D}\varepsilon_{h'}^k.
\end{equation}
Here, $\varepsilon_h^k=\mathbb{P}_h(\overline{V}_{h+1}^k-V_{h+1}^*)(s_h^k,a_h^k)-(\overline{V}_{h+1}^k-V_{h+1}^*)(s_{h+1}^k)$, which forms a martingale difference sequence. For each $k\in[K]$, with probability at least $1-p$, it holds that:
\[\sum_{i=1}^{k}\sum_{h'=h}^{2H}\varepsilon_{h'}^i\leqslant \sqrt{8kH^2\log(2/p)}.\]
The following is exact the same as the centralized version. We take a union bound for all $k\in[K]$, we know that, with probability at least $1-Kp$, it holds that:
\begin{equation}\label{eqn-linear-1-2}
\sum_{k\in\mathcal D}\sum_{h'=h}^{2H}\varepsilon_h^i\leqslant \sqrt{8\mathcal T_{h'}^{(n)}H^2 \log(2/p)}.    
\end{equation}

Combine the martingale concentration with $\mathcal E_{\conc}$, by using Equations \eqref{eqn-linear-2-2}, \eqref{eqn-linear-1-2}, \eqref{eqn-linear-3},  with probability at least $1-(K+1)p$, we have 
\begin{equation}\label{eqn-linear-4-2}
\sum_{k\in\mathcal D}[\overline{Q}_h^k(s_h^k,a_h^k)-Q_h^*(s_h^k,a_h^k)] \leqslant 2\beta\sqrt{2d\mathcal T_h^{(n)}\cdot\log(1+\mathcal T_h^{(n)})}+\sqrt{8\mathcal T_{h'}^{(n)}H^2 \log(2/p)}.
\end{equation}
Finally, by Equation (\ref{eqn-linear-4-2}), we know that with probability at least $1-(K+1)p$:
\begin{equation*}
\begin{aligned}
2^{n-1}\gap^+_{\min}\cdot \mathcal T_h^{(n)}&\leqslant 2H\cdot 2\beta\sqrt{2d\mathcal T_h^{(n)}\cdot\log(1+\mathcal T_h^{(n)})}+\sqrt{8\mathcal T_{h'}^{(n)}H^2 \log(2/p)}\\
&= 4c_{\beta}dH^2\sqrt{\log(4dKH/p)}\cdot\sqrt{2d\mathcal T_h^{(n)}\cdot\log(1+\mathcal T_h^{(n)})}+\sqrt{8\mathcal T_{h'}^{(n)}H^2 \log(2/p)}.
\end{aligned}    
\end{equation*}
Therefore, we conclude that there exists an absolute constant $C$ such that:
\begin{equation}
\label{eqn-linear-uppergap-2}    
\mathcal T_h^{(n)}\leqslant\frac{Cd^3H^4\log(4dKH/p)}{4^n(\gap^+_{\min})^2}\cdot\log\left(\frac{Cd^3H^4\log(4dKH/p)}{4^n(\gap^+_{\min})^2}\right).
\end{equation}

\subsection{Final Step of the Proof}
Here, we come to final step of our proof. For the centralized setting, we combine Equation (\ref{eqn-split3}) with the upper bound of counting numbers (Equation (\ref{eqn-linear-uppergap})), we have:
\begin{equation*}
\begin{aligned}
&~~~\mathbb{E}[\mathrm{Regret}(K)]\leqslant \sum_{h=1}^{2H}\mathbb{E}\left[\sum_{n=1}^{N}2^n\gap^+_{\min}\cdot\mathcal T_h^{(n)} \right] \\
&\leqslant (K+1)p\cdot 4H^2K +\sum_{n=1}^{N}2^n\gap^+_{\min}\cdot\frac{Cd^3H^5\log(4dKH/p)}{4^n(\gap^+_{\min})^2}\cdot\log\left(\frac{Cd^3H^4\log(4dKH/p)}{4^n(\gap^+_{\min})^2}\right)\\
&\leqslant 4H^2K(K+1)p +\frac{Cd^3H^5\log(4dKH/p)}{\gap^+_{\min}}\cdot\log\left(\frac{Cd^3H^4\log(4dKH/p)}{(\gap^+_{\min})^2}\right).
\end{aligned}    
\end{equation*}
Let $p=\frac{1}{4H^2K(K+1)}$, we can rewrite the inequality above as:
\[\mathbb{E}[\mathrm{Regret}(K)]\leqslant 1+\frac{Cd^3H^5\log(16dK^2(K+1)H^3)}{\gap^+_{\min}}\iota,\]
where $\iota=\log\left(\frac{Cd^3H^4\log(4dKH/p)}{(\gap^+_{\min})^2}\right)$ is a logarithmic term. Till now, our main theorem for centralized setting is proved. The independent version is very similar. After combining Equation (\ref{eqn-split4}) with the counting number upper bound (Equation (\ref{eqn-linear-uppergap-2})), we can get exactly the same result: for any sequence of policies $\mu:=\{\mu^k\}_{k\in[K]}$, we have:
\[\mathbb{E}[\mathrm{Regret}_{\mu}(K)]\leqslant 1+\frac{Cd^3H^5\log(16dK^2(K+1)H^3)}{\gap^+_{\min}}\iota,\]
where $\iota=\log\left(\frac{Cd^3H^4\log(4dKH/p)}{(\gap^+_{\min})^2}\right)$ is a logarithmic term.

\newpage
\section{The Pseudo-code of LSVI-2TBSG}
In this section, we introduce the formal pseudo-code of LSVI-2TBSG algorithm in both centralized setting and independent setting. 
\begin{algorithm}[!ht]
\caption{Optimistic Nash Q-learning on two-player Turn-based Stochastic Games with Linear Function Expression (Centralized)}
\hspace*{0.02in}{\bf Initialize:}
Let $\overline{Q}_h^1(s,a)\leftarrow 2H, \underline{Q}_h^1(s,a)\leftarrow 0$, and $ \overline{V}_h^1(s,a)\leftarrow 2H, \underline{V}_h^1(s,a)\leftarrow 0$ for all $(s,a,h)\in\mS\times \mA\times [2H]$. $\lambda\leftarrow 1, \beta\leftarrow c_{\beta}\cdot dH\sqrt{\iota}$ where $c_{\beta}=160$ is an absolute constant and $\iota=\log(2dT/p)=\log(4dKH/p)$.\\
\begin{algorithmic}[1]
\FOR{episode $k\in[K]$}
\STATE{Observe the initial state $s_1^k$.}
\FOR{step $h = 2H, 2H-1, \ldots, 1$}
\STATE $\Lambda_h^k=\sum_{i=1}^{k-1}\phi(s_h^i,a_h^i)\phi(s_h^i,a_h^i)^{\top}+\lambda I$
\STATE $\overline{w}_h^k=(\Lambda_h^k)^{-1}\sum_{i=1}^{k-1}\phi(s_h^i,a_h^i)\left[r_h(s_h^i,a_h^i)+\overline{V}_{h+1}^k(s_{h+1}^i)\right]$
\STATE $\underline{w}_h^k=(\Lambda_h^k)^{-1}\sum_{i=1}^{k-1}\phi(s_h^i,a_h^i)\left[r_h(s_h^i,a_h^i)+\underline{V}_{h+1}^k(s_{h+1}^i)\right]$
\STATE $\overline Q_h^k(s,a)\leftarrow \min\left(2H, \phi(s,a)^{\top}\overline{w}_h^k + \beta \sqrt{\phi(s,a)^{\top}(\Lambda_h^k)^{-1}\phi(s,a)}\right)$
\STATE $\underline Q_h^k(s,a)\leftarrow \max\left(0, \phi(s,a)^{\top}\underline{w}_h^k - \beta \sqrt{\phi(s,a)^{\top}(\Lambda_h^k)^{-1}\phi(s,a)}\right)$
\ENDFOR
\FOR{step $h=1,2,\ldots,2H$}
\STATE If $h$ is an odd number, take action $a_h^k\leftarrow \arg\max_a \overline{Q}_h^k(s_h^k, a)$, otherwise, take action $a_h^k\leftarrow \arg\min_a \underline{Q}_h^k(s_h^k, a)$. Then, we receive the next state $s_{h+1}^k$.
\STATE $\overline{V}_h^k(s_h^k)\leftarrow\overline{Q}_h^k(s_h^k,a_h^k), ~\underline{V}_h^k(s_h^k)\leftarrow \underline{Q}_h^k(s_h^k,a_h^k)$.
\ENDFOR
\ENDFOR
\end{algorithmic}
\label{alg-3}
\end{algorithm}

\begin{algorithm}[!ht]
\caption{Optimistic Nash Q-learning on two-player Turn-based Stochastic Games with Linear Function Expression (Independent)}
\hspace*{0.02in}{\bf Initialize:}
Let $\overline{Q}_h^1(s,a)\leftarrow 2H$, and $ \overline{V}_h^1(s,a)\leftarrow 2H$ for all $(s,a,h)\in\mS\times \mA\times [2H]$. $\lambda\leftarrow 1, \beta\leftarrow c_{\beta}\cdot dH\sqrt{\iota}$ where $c_{\beta}=160$ is an absolute constant and $\iota=\log(2dT/p)=\log(4dKH/p)$.\\
\begin{algorithmic}[1]
\FOR{episode $k\in[K]$}
\STATE{Observe the initial state $s_1^k$.}
\FOR{step $h = 2H, 2H-1, \ldots, 1$}
\STATE $\Lambda_h^k=\sum_{i=1}^{k-1}\phi(s_h^i,a_h^i)\phi(s_h^i,a_h^i)^{\top}+\lambda I$
\STATE $w_h^k=(\Lambda_h^k)^{-1}\sum_{i=1}^{k-1}\phi(s_h^i,a_h^i)\left[r_h(s_h^i,a_h^i)+\overline{V}_{h+1}^k(s_{h+1}^i)\right]$
\STATE $\overline Q_h^k(s,a)\leftarrow \min\left(2H, \phi(s,a)^{\top}w_h^k + \beta \sqrt{\phi(s,a)^{\top}(\Lambda_h^k)^{-1}\phi(s,a)}\right)$
\ENDFOR
\FOR{step $h=1,2,\ldots,2H$}
\STATE If $h$ is an odd number, take action $a_h^k\leftarrow \arg\max_a \overline{Q}_h^k(s_h^k, a)$, otherwise, let the min-player choose his action $a_h^k$. Then, we receive the next state $s_{h+1}^k$.
\STATE $\overline{V}_h^k(s_h^k)\leftarrow\overline{Q}_h^k(s_h^k,a_h^k)$.
\ENDFOR
\ENDFOR
\end{algorithmic}
\label{alg-4}
\end{algorithm}

\end{document}